\definecolor{Highlight}{HTML}{39b54a}  
\Crefname{problem}{Problem}{Problems}
\newcommand\DoToC{%
  \startcontents
  \printcontents{}{1}{\textbf{Contents}\vskip3pt\hrule\vskip5pt}
  \vskip3pt\hrule\vskip5pt
}
\newcommand{\mcolor}[2]{\textcolor{#1}{#2}}
\newcommand{\bo}[1]{\mcolor{blue}{Bo: #1}}
\newcommand{\reb}[1]{\mcolor{black}{#1}}
\Crefname{definition}{Def.}{Defs.}
\Crefname{theorem}{Thm.}{Thms.}
\newcommand{\squishlist}{
\begin{list}{{{\small{$\bullet$}}}}
{\setlength{\itemsep}{3pt}      \setlength{\parsep}{1pt}
\setlength{\topsep}{1pt}       \setlength{\partopsep}{0pt}
\setlength{\leftmargin}{1em} \setlength{\labelwidth}{1em}
\setlength{\labelsep}{0.5em} } }
\newcommand{\squishend}{  \end{list}  }
\newcommand{\name}{\texttt{COLEP}\xspace}
\newcommand{\fcolepj}{f_j^{\texttt{COLEP}}}
\newcommand{\fname}{\texttt{COLEP}}
\newcommand{\kmodels}{L}
\newcommand{\kidx}{l}
\newcommand{\pcmodel}{R}
\newcommand{\pcidx}{r}
\newcommand{\pgt}{\pi_y(x)}
\newcommand{\phatgt}{\hat{\pi}_y(x)}
\newcommand{\conPihat}{\hat{\pi}}
\newcommand{\cidx}{\small j_{\forall}}
\newcommand{\phat}{\hat{\pi}}
\newcommand{\pmain}{\hat{\pi}}
\newcommand{\pmainj}{\hat{\pi}_j}
\newcommand{\pknowl}{\hat{\pi}^{(l)}}
\newcommand{\pname}{\hat{\pi}^{\texttt{COLEP}}}
\newcommand{\pnamej}{\hat{\pi}_j^{\texttt{COLEP}}}
\newcommand{\ymain}{\hat{y}}
\newcommand{\yknowr}{\hat{y}^{(\kidx)}}
\newcommand{\eqsmall}{\small}
\title{\name: Certifiably Robust Learning-Reasoning  Conformal Prediction via
Probabilistic \\ Circuits}
\author{Mintong Kang \\
UIUC \\
\texttt{mintong2@illinois.edu} \\
\And
Nezihe Merve Gürel \\
TU Delft \\
\texttt{n.m.gurel@tudelft.nl} \\
\And
Linyi Li \\
UIUC \\
\texttt{linyi2@illinois.edu} \\
\And
Bo Li \\
UChicago \& UIUC \\
\texttt{bol@uchicago.edu}
}
\begin{document}

\maketitle

\begin{abstract}
    Conformal prediction has shown spurring performance in constructing statistically rigorous prediction sets for arbitrary black-box machine learning models, assuming the data is exchangeable.
    However, even small adversarial perturbations during the inference can violate the exchangeability assumption, challenge the coverage guarantees, and result in a subsequent decline in prediction coverage.
    In this work, we propose the first certifiably robust learning-reasoning conformal prediction framework  (\name) via probabilistic circuits, which comprises a data-driven \textit{learning component} that trains statistical models to learn different semantic concepts,
    and a \textit{reasoning component} that encodes knowledge and characterizes the relationships among the statistical knowledge models for logic reasoning. To achieve exact and efficient reasoning, we employ probabilistic circuits (PCs) to construct the reasoning component.
    Theoretically, we provide end-to-end certification of prediction coverage for \name under  $\ell_2$ bounded adversarial perturbations. We also provide certified coverage considering the finite size of the calibration set.
    Furthermore, we prove that \name achieves higher prediction coverage and accuracy over a single model as long as the utilities of knowledge models are non-trivial.
    Empirically, we show the validity and tightness of our certified coverage, demonstrating the robust conformal prediction of \name on various datasets.
\end{abstract}

\vspace{-1em}
\section{Introduction}
\vspace{-3mm}

Deep neural networks (DNNs) have demonstrated impressive achievements in different domains~\citep{He_2016_CVPR,vaswani2017attention,li2022competition,pan2019characterizing,chen2018differentially}. However, as DNNs are increasingly employed in real-world applications, particularly those safety-critical ones such as autonomous driving \citep{xu2022safebench} and medical diagnosis \citep{kang2023label}, concerns regarding their trustworthiness and reliability have emerged~\citep{intriguing2014szegedy,eykholt2018robust,cao2021invisible,li2017large,wang2023decodingtrust}. 
To address these concerns, it is crucial to 
provide the \textit{worst-case certification} for the model predictions (i.e., certified robustness)~\citep{wong2018provable,cohen2019certified,li2023sok}, and develop 
techniques that enable users to 
statistically evaluate the \textit{uncertainty} linked to the model predictions (i.e., conformal prediction)~\citep{lei2018distribution, solari2022multi}.

\vspace{-0.2em}
In particular, considering potential adversarial manipulations during test time, where a small perturbation could mislead the models to make incorrect predictions~\citep{intriguing2014szegedy,madry2018towards,xiao2018spatially,cao2021invisible,kang2024diffattack}, 
different robustness certification approaches have been explored to provide the worst-case prediction guarantees for neural networks. 
On the other hand, conformal prediction has been studied as an effective tool for generating prediction sets that reflect the prediction uncertainty of a given black-box model, assuming that the data is exchangeable~\citep{romano2020classification,vovk2005algorithmic}. Such prediction \textit{uncertainty} has provided a probabilistic guarantee in addition to the \textit{worst-case certification}.
However, it is unclear how such \textit{uncertainty} certification would perform in the worst-case adversarial environments.

\vspace{-0.2em}
Although the worst-case certification for conformal prediction is promising, it is challenging for purely data-driven models to achieve high \reb{\textbf{certified prediction coverage (i.e., worst-case coverage with bounded input perturbations)}}. Recent studies on the learning-reasoning framework, which integrates extrinsic domain knowledge and reasoning capabilities into data-driven models, have demonstrated great success in improving the model worst-case  certifications~\citep{yang2022improving,zhang2023care}.
In this paper, we aim to bridge the \textit{worst-case robustness certification} and \textit{uncertainty} certification, and explore whether such knowledge-enabled logical reasoning could help improve the certified prediction coverage for conformal prediction. We ask: \textit{How to efficiently integrate knowledge and reasoning capabilities into DNNs for conformal prediction? Can we prove that such knowledge and logical reasoning enabled framework would indeed achieve higher certified prediction coverage and accuracy than that of a single DNN?}

\vspace{-0.2em}
In this work, we first propose a certifiably robust learning-reasoning conformal prediction framework (\name) via probabilistic circuits (PCs). In particular, we train different data-driven DNNs in the ``learning" component, whose logical relationships (e.g., a stop sign should be of octagon shape) are encoded in the ``reasoning" component, which is constructed with probabilistic circuits~\citep{darwiche1999compiling,darwiche2001decomposable} for logical reasoning as shown in Fig~\ref{fig:framework}.
Theoretically, we first derive the \textit{certified coverage} for \name under  $\ell_2$ bounded perturbations. We also provide \textit{certified coverage} considering the finite size of the calibration set. We then prove that \name achieves higher certified coverage and accuracy than that of a single model as long as the knowledge models have non-trivial utilities. 
To our best knowledge, this is the first knowledge-enabled learning framework for conformal prediction and with provably higher certified coverage in adversarial settings. 

\vspace{-0.2em}
We have conducted extensive experiments on GTSRB, CIFAR-10, and AwA2 datasets to demonstrate the effectiveness and  tightness of the certified coverage for \name. We show that the certified prediction coverage of \name is significantly higher compared with the SOTA baselines, and \name has weakened the tradeoff between prediction coverage and prediction set size.
We perform a range of ablation studies to show the impacts of different types of knowledge.



\vspace{-0.8em}
\paragraph{Related work}
\reb{
Conformal prediction is a statistical tool to construct the prediction set with guaranteed prediction coverage \citep{vovk1999machine,vovk2005algorithmic,shafer2008tutorial,lei2013distribution,yang2021finite,solari2022multi,jin2023sensitivity}, assuming that the data is exchangeable.
}
Several works have explored the scenarios where the exchangeability assumption is violated~\citep{jin2023sensitivity,barber2022conformal,ghosh2023probabilistically,kang2023certifiably,kang2024c}. However, it is unclear how to provide the worst-case certification for such prediction coverage considering adversarial setting.
Recently, Gendler~et~al.~\citep{gendler2022adversarially} provided a certified method for conformal prediction by applying randomized smoothing to certify the non-conformity scores. 
However, such certified coverage could be loose due to the generic nature of randomized smoothing.
We propose \name with a sensing-reasoning framework to achieve much stronger certified coverage by leveraging the knowledge-enabled logic reasoning capabilities. We provide a comprehensive literature review on certified robustness and logical reasoning in \cref{app:related}.

\vspace{-3.5mm}
\section{Preliminaries}
\vspace{-3mm}
Suppose that we have $n$ data samples {\eqsmall$\{(X_i,Y_i)\}_{i=1}^n$} with features {\eqsmall$X_i \in \mathbb{R}^d$} and labels {\eqsmall$Y_i \in \mathcal{Y}:=\{1,2,...,N_c\}$}. 
\reb{Assume that the data samples are drawn exchangeably from some unknown distribution {\eqsmall$P_{XY}$}.}
\reb{
Given a desired coverage {\eqsmall$1-\alpha \in (0,1)$}, conformal prediction methods construct a prediction set {\eqsmall$\hat{C}_{n,\alpha} \subseteq \mathcal{Y}$} for a new data sample {\eqsmall$(X_{n+1}, Y_{n+1})\sim P_{XY}$} with the guarantee of \textit{marginal prediction coverage}:
{\eqsmall$
    \mathbb{P}[Y_{n+1} \in \hat{C}_{n,\alpha}(X_{n+1})] \ge 1-\alpha
$.}
}

\vspace{-0.2em}
In this work, we focus on the split conformal prediction setting~\citep{lei2018distribution, solari2022multi}, where the data samples are randomly partitioned into two disjoint sets: a training set {\eqsmall$\mathcal{I}_{\text{tr}}$} and a calibration set {\eqsmall$\mathcal{I}_{\text{cal}}=[n]$}{\eqsmall$\backslash \mathcal{I}_{\text{tr}}$}. In here and what follows, {\eqsmall$[n] := \{1, \cdots, n\}$}.
We fit a classifier {\eqsmall$h(x): \mathbb{R}^d \mapsto \mathcal{Y}$} to the training set {\eqsmall$\mathcal{I}_{\text{tr}}$} to estimate the {\textbf{conditional class probability} {\eqsmall$\pgt:=\sP[Y=y|X=x]$} of {\eqsmall$Y$} given {\eqsmall$X$}.
Using the estimated probabilities that we denote by {\eqsmall$\phatgt$}, we then compute a non-conformity score {\eqsmall$S_{\pmain_y}(X_i,Y_i)$} for each sample in the calibration set {\eqsmall$\gI_{\text{cal}}$} {to measure how much non-conformity each validation sample has with respect to its ground truth label.}
A commonly used non-conformity score for valid and adaptive coverage is introduced by \citep{romano2020classification} as:
{\eqsmall$
    S_{\phat_y}(x,y) = \sum\nolimits_{j \in \gY} \phat_{j}(x) \sI_{[\phat_{j}(x) > \phat_{y}(x)]} + \phat_{y}(x) u,
$}
where {\eqsmall$\sI_{[\cdot]}$} is the indicator function and {\eqsmall$u$} is uniformly sampled over the interval {\eqsmall$[0,1]$}.
Given a desired coverage {\eqsmall$1-\alpha$}, the prediction set of a new data point {\eqsmall$X_{n+1}$} is formulated as:
\begin{equation}
\vspace{-0.2em}
\label{eq:pre_set}
{\eqsmall
    \hat{C}_{n,\alpha}(X_{n+1}) = \{ y \in \gY: S_{\phat_y}(X_{n+1},y) \le Q_{1-\alpha}(\{S_{\phat_y}(X_i,Y_i)\}_{i \in \gI_{\text{cal}}} ) \}
}
\end{equation}
where {\eqsmall$Q_{1-\alpha}(\{S_{\phat_y}(X_i,Y_i)\}_{i \in \gI_{\text{cal}}})$} is the {\eqsmall$\lceil (1-\alpha)(1+|\gI_{\text{cal}}|) \rceil$}-th largest value of the set {\eqsmall$\{S_{\phat_y}(X_i,Y_i)\}_{i \in \gI_{\text{cal}}}$}.
The prediction set {\eqsmall$\hat{C}_{n,\alpha}(X_{n+1})$} includes all the labels with a smaller non-conformity score than the {\eqsmall$(1-\alpha)$}-quantile of scores in the calibration set. 
Since we assume the data samples are exchangeable, the marginal coverage of the prediction set {\eqsmall$\hat{C}_{n,\alpha}(X_{n+1})$} is no less than {\eqsmall$1-\alpha$}. 

\vspace{-2.5mm}
\section{Learning-Reasoning Pipeline for Conformal Prediction}
\vspace{-3mm}
To achieve high certified prediction coverage against adversarial perturbations, we introduce the certifiably robust learning-reasoning conformal prediction framework (\name). 
\reb{For better readability, we provide structured tables for used notations through the analysis in \Cref{app:notation}.}

\begin{wrapfigure}{r}{0.45\textwidth}
    \centering
    \vspace{-1.5em}
\includegraphics[width=0.45\textwidth]{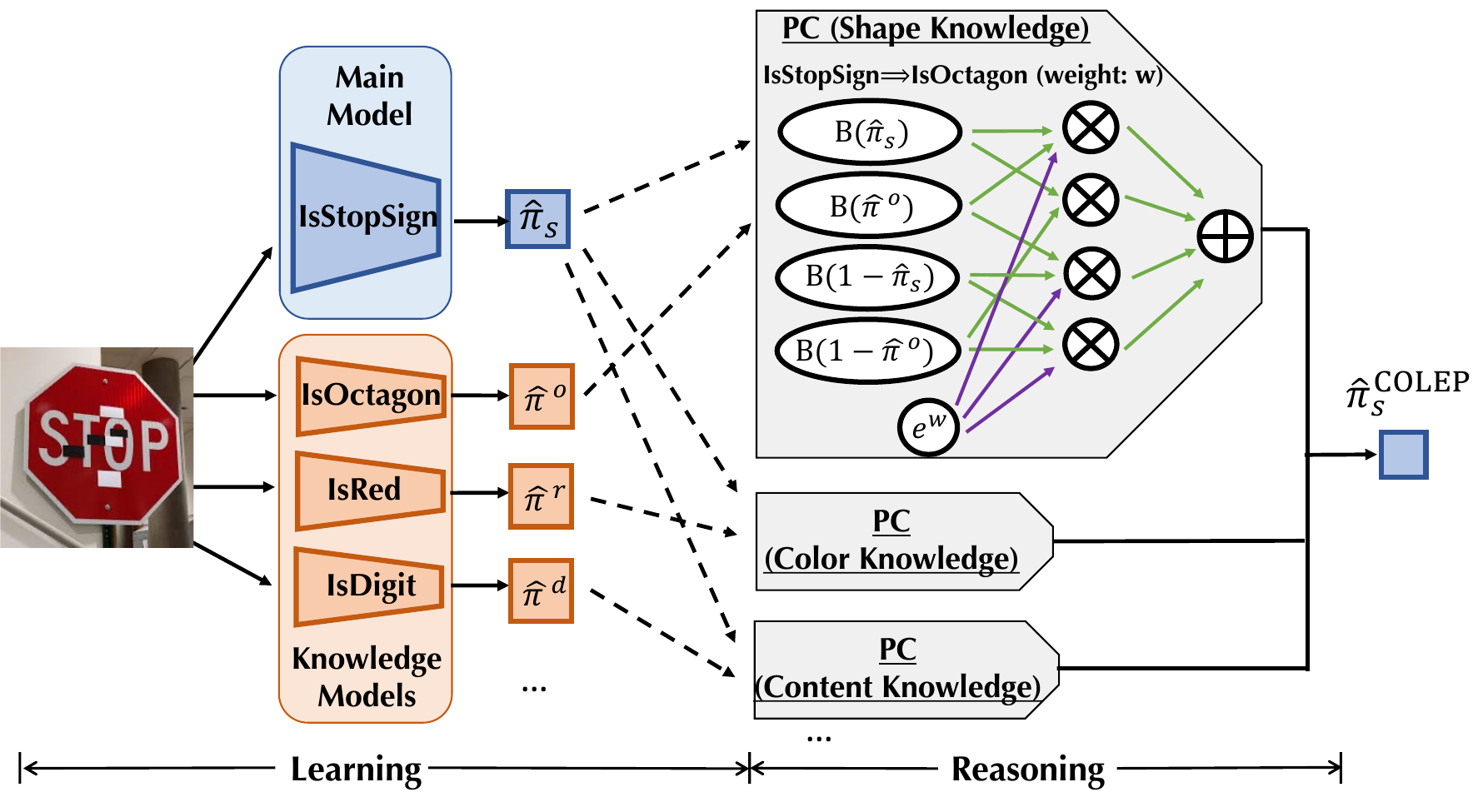}
\vspace{-7mm}
    \caption{\footnotesize Overview of \name.}
    \label{fig:framework}
    \vspace{-6mm}
\end{wrapfigure}

\vspace{-0.8em}
\subsection{Overview of \name}
\vspace{-0.6em}
The \name is comprised of a data-driven \textit{learning component} and a logic-driven \textit{reasoning component}.
The learning component is equipped with a main model to perform the main task of classification and {\eqsmall$\kmodels$} knowledge models, each learns different concepts from data.
Following the learning component is the reasoning component, which consists of {\eqsmall$\pcmodel$} subcomponents (e.g., PCs) responsible for encoding diverse domain knowledge and logical reasoning by characterizing the logical relationships among the learning models.
The \name framework is depicted in Fig~\ref{fig:framework}.
\reb{
Concretely, consider an input {\eqsmall$x\in \mathbb{R}^d$} {(e.g., the stop sign image)} and an untrusted main model {(e.g., stop sign classification model)} within the learning component that predicts the label of {\eqsmall$x$} along with estimates of conditional class probabilities 
\begin{equation}
\vspace{-0.5em}
    \small \pmainj(x):=\sP[\ymain=j|X=x],
\end{equation}}
where {\eqsmall$j\in \mathcal{Y}:=[N_c]$}.
}
{Note that the main model as a $N_c$-class classification model can be viewed as $N_c$ binary classification models, each with success probability $\hat{\pi}_{j \in [N_c]}(\cdot)$. In Fig~\ref{fig:framework}, we particularly illustrate \name using a binary stop sign classifier $\hat{\pi}_s(\cdot)$ as the main model for simplicity.}
Each knowledge model {(e.g., octagon shape classification model $\hat{\pi}^o(\cdot)$)} learns a concept from {\eqsmall$x$} {(e.g., octagon shape)} and outputs its prediction denoted by {\eqsmall$\hat{y}^{(\kidx)}\in \{0, 1\}$} where {\eqsmall$\kidx\in [\kmodels]$} indexes a particular knowledge model.
There can be only two possible outcomes: whether the {\eqsmall$l$}-th knowledge model detects the concept {\eqsmall$\hat{y}^{(\kidx)}=1$} or not {\eqsmall$\hat{y}^{(\kidx)}=0$}. 
\reb{
Consistently with the notation of the main model, we denote the conditional concept probability of the {\eqsmall$\kidx$}-th knowledge model by 
\begin{equation}
\vspace{-0.5em}
    \small \pknowl(x):=\sP[\yknowr=1|X=x]
\end{equation} for {\eqsmall$\kidx\in[\kmodels]$}
}
Once the learning models output their predictions along with estimated probabilities, the reasoning component accepts these estimates as input and passes them through $\pcmodel$ distinct reasoning subcomponents {in parallel}.
It then combines the probabilities at the outputs of the reasoning subcomponents to calculate the corrected estimates of conditional class probabilities by \name, which we denote as {\eqsmall$\pnamej(x)$ for $j \in [N_c]$} {(e.g., corrected probability of $x$ having a stop sign or not: {\eqsmall$\hat{\pi}^{\name}_s(x)$} or {\eqsmall1-$\hat{\pi}^{\name}_s(x)$})} and form the conformal prediction set with them.



\vspace{-0.5em}
\subsection{Reasoning via Probabilistic Circuits}
\label{sec:rea_PC}
\vspace{-2.1mm}
We encode two types of knowledge in the reasoning component: (1) \textit{preventive knowledge} of the form {\eqsmall``$A \implies B$''} {(e.g., ``\textit{IsStopSign} {\eqsmall$\implies$} \textit{IsOctagon}'')}, and (2) \textit{permissive knowledge} of the form {\eqsmall``$B \implies A$''} {(e.g., ``\textit{HasContentStop} {\eqsmall$\implies$} \textit{IsStopSign}'')}, where {\eqsmall$A$} represents the main task corresponding to the main model (e.g., ``\textit{IsStopSign}''), {\eqsmall$B$} as one of the {\eqsmall$\kmodels$} knowledge models  (e.g., ``\textit{IsOctagon}''), and {\eqsmall$\implies$} denotes logical implication with a \textit{conditional variable} as the left operand and a \textit{consequence variable} on the right.
In this part, we consider one particular reasoning subcomponent (i.e., one PC) and encode {\eqsmall$H$} preventive and permissive knowledge rules {\eqsmall$\{K_h\}_{h=1}^H$} in it.
The \textit{{\eqsmall$h$}-th knowledge rule $K_h$} is assigned with weight {\eqsmall$w_h \in \mathbb{R}^+$}.

To enable exact and efficient reasoning, we need to seek for appropriate probabilistic models as the reasoning component. Recent work~\citep{grel2021knowledge,yang2022improving} encode propositional logic rules with Markov Logic Networks (MLNs) \citep{richardson2006markov}, but the inference complexity is exponential regarding the number of knowledge models.
Variational inference is efficient for reaonsing~\citep{zhang2023care}, but induces error during reasoning.
Thus, we use probabilistic circuits (PCs) to achieve better trade-offs between the model expressiveness and reasoning efficiency.

\vspace{-0.2em}

\vspace{-0.25em}
\textbf{Encode preventive and permissive knowledge rules with PCs}.
PCs define a joint distribution over a set of random variables.
PCs take an assignment of the random variables as input and output the joint probability of the assignment.
Specifically, the input of PCs can be represented by an {\eqsmall$N_c+L$}
sized vector of Bernoulli random variables \reb{(e.g., [\textit{IsStopSign}, \textit{IsOctagon}, \textit{IsRed}, \textit{IsDigit}])}
with the corresponding vector of success probabilities {(e.g., \eqsmall{$\hat{\pi}=[\hat{\pi}_s, \hat{\pi}^o, \hat{\pi}^r, \hat{\pi}^d]$})} denoted as:
\vspace{-0.35em}
\begin{equation}\label{eqn:Pi}
{\eqsmall
   \conPihat:=\big[[\pmainj]_{j\in[N_c]}, [\pknowl]_{\kidx\in[\kmodels]}\big]. 
}
\end{equation}
Note that {\eqsmall$\conPihat$} is a concatenated vector of the estimated conditional class and concept probabilities of the main model and knowledge models, respectively {(e.g., {\eqsmall$\hat{\pi}=[\hat{\pi}_s, \hat{\pi}^o, \hat{\pi}^r, \hat{\pi}^d]$} with \eqsmall{$N_c=1, L=2$})}.
Since conformal prediction is our main focus, the conditional probabilities {\eqsmall$\conPihat$} in \cref{eqn:Pi} will be frequently referenced throughout the paper.
To distinguish the index representing each class label {\eqsmall$j\in[N_c]$}, we use {\eqsmall$\cidx\in [N_c+L]$} to address entries of {\eqsmall$N_c+L$} sized vectors.

\vspace{-0.3em}
The computation of PCs is based on an acyclic directed graph {\eqsmall$\gG$}. 
Each leaf node {\eqsmall$O_{\text{leaf}}$} in {\eqsmall$\gG$} represents a univariate distribution {(e.g., {\eqsmall$B(\hat{\pi}_s(x))$}: Bernoulli distribution over \textit{IsStopSign} with success rate {\eqsmall$\hat{\pi}_s(x)$)}}. The output of a PC is typically the root node {\eqsmall$O_{\text{root}}$} {representing the joint probability of an assignment (e.g., {\eqsmall$p(\text{\textit{IsStopSign}}=1, \text{\textit{IsOctagon}}=1|x)$)}}. The graph {\eqsmall$\gG$} contains internal sum nodes computing a convex combination of children nodes, and product nodes computing the product of them.

\vspace{-0.3em}
Formally, we let the leaf node {\eqsmall$O_{\text{leaf}}^{(\cidx)}$} in the computational graph {\eqsmall$\gG$} be a Bernoulli distribution with success probability {\eqsmall$\hat{\pi}_{\cidx}(x)$}. 
\reb{
Let {\eqsmall$\mu \in M:=\{0,1\}^{N_c+L}$} be the index variable (not an observation) of a possible assignment for {\eqsmall$N_c$} classes in the interest of the main model and {\eqsmall$\kmodels$} concepts in the interest of {\eqsmall$\kmodels$} knowledge models. 
}
\reb{
We also define the factor function {\eqsmall$F: \{0,1\}^{N_c+L} \mapsto \sR^+$} which takes as input a possible assignment {\eqsmall$\mu$} and outputs the factor value of the assignment: 
\begin{equation}
\vspace{-0.6em}
    \eqsmall F(\mu) = \exp \left\{\sum_{h=1}^H w_h \mathbb{I}_{[\mu \sim K_h]}\right\},
\vspace{-0.1em}
\end{equation}
where {\eqsmall$\mathbb{I}_{[\mu \sim K_h]}=1$} indicates that the assignment {\eqsmall$\mu$} follows the given logic rule {\eqsmall$K_h$}, and otherwise {\eqsmall$\mathbb{I}_{[\mu \sim K_h]}=0$} {(e.g., in the shape PC in \cref{fig:framework}, {\eqsmall$F([1,1])=e^w$}, {\eqsmall$\sI_{[[1,1] \sim K]}=1$}, where {\eqsmall$K$} is the rule ``\textit{IsStopSign} {\eqsmall$\implies$} \textit{IsOctagon}'' with weight {\eqsmall$w$})}.
}
\reb{
$F(\mu)$ essentially measures how well the assignment $\mu$ conforms to the specified knowledge rules.
}
Following \citep{darwiche2002logical,darwiche2003differential}, we let sum nodes compute the logical ``or'' operation, and product nodes compute the logical ``and'' operation.
We associate each assignment {\eqsmall$\mu \in M$} with the factor value {\eqsmall$F(\mu)$} at the level of leaf nodes as \cref{fig:framework}.

\vspace{-0.5em}
{Given a feasible assignment of random variables, the output of PC {\eqsmall$O_{\text{root}}$} computes the likelihood of the assignment (e.g., $p(\text{\textit{IsStopSign}}=1, \text{\textit{IsOctogon}}=1|x)$). 
When considering the likelihood of the $j$-th class label, we can marginalize class labels other than $j$ and formulate the class probability conditioned on input $x$ as: }
\vspace{-0.5em}
\begin{equation}\label{eq:marginal_}
\vspace{-0.1em}
\begin{small}
{\eqsmall
    \pmainj^{\name}(x) = \dfrac{\sum_{\mu \in M,\ \mu_j=1} O_{\text{root}}(\mu)}{\sum_{\mu \in M} O_{\text{root}}(\mu)} = \dfrac{ \sum_{\mu \in M,\ \mu_j=1} \exp \left\{ \sum_{\cidx=1}^{N_c+L} T(\pmain_{\cidx}(x),\mu_{\cidx}) \right\}F(\mu) }{ \sum_{\mu \in M} \exp \left\{ \sum_{\cidx=1}^{N_c+L} T(\pmain_{\cidx}(x),\mu_{\cidx}) \right\}F(\mu)},
}
\end{small}
\vspace{-0.1em}
\end{equation}
where {\eqsmall$T(a,b):=\log(ab+(1-a)(1-b))$} for simplicity of notations and {\eqsmall$\mu_j$} denotes the {\eqsmall$j$}-th element of vector {\eqsmall$\mu$}.
Note that the numerator and denominator of \cref{eq:marginal_} can be exactly and efficiently computed with a single forwarding pass of PCs \citep{hitzler2022tractable,choi2017relaxing,pmlr-v32-rooshenas14}.
\reb{
We can also improve the expressiveness of the reasoning component by linearly combining outputs of {\eqsmall$\pcmodel$} PCs {\eqsmall $\pmainj^{(r)}$} with coefficients {\eqsmall$\beta_{\pcidx}$} for the {\eqsmall$\pcidx$}-th PC
as 
{\eqsmall
   $\pnamej(x) = \sum_{\pcidx \in [\pcmodel]} \beta_{\pcidx} \pmainj^{(r)}(x)$
}.
The core of this formulation is the mixture model involving a latent variable {\eqsmall $r_{\text{PC}}$} representing the PCs. In short, we can write 
{\eqsmall $\pnamej(x)$} as the marginalized probability over the latent variable {\eqsmall $r_{\text{PC}}$} as {\eqsmall $\pnamej(x)=\sum_{\pcidx \in [\pcmodel]} \sP[r_{\text{PC}}=r] \cdot \pmainj^{(r)}(x)$}.
Hence, the coefficient {\eqsmall $\beta_{\pcidx}$} for the {\eqsmall $\pcidx$}-th PC are determined by {\eqsmall $\sP[r_{\text{PC}}=\pcidx]$}. Although we lack direct knowledge of this probability, we can estimate it using the data by examining how frequently each PC {\eqsmall $\pcidx$} correctly predicts the outcome across the given examples, similarly as in the estimation of prior class probabilities for Naive Bayes classifiers.
}

\vspace{-0.2em}
\reb{
\textbf{Illustrative example}.
We open up the stop sign classification example in \cref{fig:framework} further as follows. The main model predicts that the image {\eqsmall$x$} has a stop sign with probability {\eqsmall$\hat{\pi}_s(x)$}. In parallel to it, the knowledge model that detects octagon shape predicts that the image {\eqsmall$x$} has an object with octagon shape with probability {\eqsmall$\hat{\pi}_o(x)$}.
Then we can encode the logic rule ``\textit{IsStopSign} {\eqsmall$\implies$} \textit{IsOctagon}'' with weight {\eqsmall$w$} as \cref{fig:framework}. 
The reasoning component can correct the output probability with encoded knowledge rules and improve the robustness of the prediction. Consider the following concrete example. Suppose that the adversarial example of a speed limit sign misleads the main classifier to detect it as a stop sign with a large probability (e.g., {\eqsmall $\hat{\pi}_s=0.9$}), but the octagon classifier is not misled and correctly predicts that the speed limit sign has a square shape (e.g., {\eqsmall $\hat{\pi}_o=0.0$}). Then according the computation as \cref{eq:marginal_}, the corrected probability of detecting a stop sign given the attacked speed limit image is {\eqsmall $\hat{\pi}^{\text{COLEP}}_s=0.9/(0.1e^w+0.9)$}, which is down-weighted from the original wrong prediction {\eqsmall $\hat{\pi}_s=0.9$} as we have a positive logical rule weight {\eqsmall $w>0$}. Therefore, the reasoning component can correct the wrong output probabilities with the knowledge rules and lead to a more robust prediction framework.
}
\vspace{-0.2em}
\vspace{-0.85em}
\subsection{Conformal Prediction with \name}
\vspace{-2mm}
After obtaining the corrected conditional class probabilities using \name in~\cref{eq:marginal_}, we move on to conformal prediction with \name. To begin with, we construct prediction sets for each class label within the learning component. Using~\cref{eq:pre_set}, the prediction set for the {\eqsmall$j$}-th class, where {\eqsmall$j\in [N_c]$}, can be formulated as:
\begin{equation}\label{eqn:C_j}
{\eqsmall
    \hat{C}^{\fname_j}_{n,\alpha_j}(X_{n+1}) = \{ q^y \in \{0,1\}: S_{\pnamej}(X_{n+1},q^y) \le Q_{1-\alpha_j}(\{S_{\pnamej}(X_i, \sI_{[Y_i=j]})\}_{i \in \gI_{\text{cal}}})\}
}
\end{equation}
where {\eqsmall$1-\alpha_j$} is the user-defined coverage level, similar to {\eqsmall$\alpha$} in the standard conformal prediction.
The fact that element $1$ is present in {\eqsmall$\hat{C}^{\fname_j}_{n,\alpha_j}(X_{n+1})$} signifies that the class label {\eqsmall$j$} is a possible ground truth label for {\eqsmall$X_{n+1}$} and can be included in the final prediction set. That is:
\begin{equation}
\label{eq:pre_set_final}
{\eqsmall
     \hat{C}^{\name}_{n,\alpha}(X_{n+1}) = \{ j \in [N_c]: 1 \in \hat{C}^{\fname_j}_{n,\alpha_j}(X_{n+1})\}
}
\end{equation}
where {\eqsmall$\alpha=\max_{j \in [N_c]}\{\alpha_j\}$}. It is worth noting that if {\eqsmall$\alpha_j$} is set to a fixed value for all {\eqsmall$j\in[N_c]$}, {\eqsmall$\alpha$} in the standard conformal prediction setting is recovered. In this work, we adopt a fixed {\eqsmall$\alpha$} for each {\eqsmall$\alpha_j$} for simplicity, but note that our framework allows setting different miscoverage levels for each class {\eqsmall$j$}, which is advantageous in addressing imbalanced classes.
We defer more discussions on the flexibility of defining class-wise coverage levels to \Cref{app:colep}. 

\vspace{-3mm}
\section{Certifiably Robust Learning-Reasoning Conformal Prediction} 
\label{sec:cert_conform}
\vspace{-3mm}


In this section, we provide robustness certification of \name for conformal prediction. We start by providing the certification of the reasoning component in \Cref{sec:goal}, and then move to end-to-end certified coverage as well as finite-sample certified coverage in \Cref{sec:ee_rob}. 
We defer all the proofs to Appendix. The overview of the certification framework is provided in \cref{fig:thm} in \Cref{sec:overview_certi}.
\vspace{-0.85em}
\subsection{Certification Framework and Certification of the Reasoning Component}
\label{sec:goal}
\vspace{-2mm}

The inference time adversaries violate the data exchangeability assumption, compromising the guaranteed coverage.
With \fname, we aim to investigate (1) whether the guaranteed coverage can be preserved for \name using a prediction set that takes the perturbation bound into account, and (2) whether we can provide a worst-case coverage (a lower bound) using the standard prediction set as before in \cref{eq:pre_set_final}. We provide certification for goal (1) and (2) in \Cref{thm:cer_set,thm:worst_case}, respectively.

The certification procedure can be decomposed into several subsequent stages. {First}, we \textbf{certify learning robustness} by deriving the bounds for 
{\eqsmall$\conPihat$} within the learning component given the input perturbation bound {\eqsmall$\delta$}. 
There are several existing approaches to certify the robustness of models within this component \citep{cohen2019certified,zhang2018efficient,gowal2018effectiveness}. In this work, we use randomized smoothing \citep{cohen2019certified} (details in \Cref{app:rs}), and get the bound of {\eqsmall$\conPihat$} under the perturbation. {Next}, we \textbf{certify reasoning robustness} by deriving the reasoning-corrected bounds using the bounds of the learning component (\Cref{thm:pc_rob}). 
{Finally}, we \textbf{certify the robustness of conformal prediction} considering the reasoning-corrected bounds (\Cref{thm:cer_set} and \Cref{thm:worst_case}).

\vspace{-0.3em}
As robustness of the learning component is performed using randomized smoothing, we directly focus on the robustness certification of reasoning component.
\vspace{-0.3em}

\begin{theorem}[Bounds for Conditional Class Probabilities $\pmainj^{\fname}(x)$ within the Reasoning Component] 
\label{thm:pc_rob}
Given any input {\eqsmall$x$} and perturbation bound {\eqsmall$\delta$}, we let {\scriptsize$[\underline{\pmain_{\cidx}}(x),\overline{\pmain_{\cidx}}(x)]$} be bounds for the estimated conditional class and concept probabilities by all models with {\eqsmall$\cidx\in[N_c+L]$} (for example, achieved via randomized smoothing).  
Let {\eqsmall$V_{d}^{j}$} be the set of index of conditional variables in the PC except for {\eqsmall$j \in [N_c]$} and {\eqsmall$V_{s}^{j}$} be that of consequence variables.
Then the bound for \fname-corrected estimate of the conditional class probability {\eqsmall$\pmainj^{\fname}$} is given by: 
\vspace{-0.5em}
\begin{equation}
    \footnotesize
    \begin{aligned}
    \label{eq:thm1}
        \sU[\pmainj^{\fname}(x)] = \left\{ \dfrac{(1-\overline{\pmainj}(x)) \sum\limits_{\mathclap{\mu_j=0}} \exp \left \{ \sum\limits_{~~~~~~\mathclap{\cidx \in V_{d}^{j}} } T(\overline{\hat{\pi}_{\cidx}}(x),\mu_{\cidx}) + ~\sum\limits_{\mathclap{\cidx \in V_{s}^{j}}}~ T(\underline{\hat{\pi}_{\cidx}}(x),\mu_{\cidx}) \right \} F(\mu) }{\overline{\pmainj}(x)\sum\limits_{\mathclap{\mu_j=1}} \exp \left\{~~ \sum\limits_{\mathclap{\cidx \in V_{d}^{j}} } T(\underline{\hat{\pi}_{\cidx}}(x),\mu_{\cidx}) + \sum\limits_{\mathclap{\cidx \in V_{s}^{j}} } T(\overline{\hat{\pi}_{\cidx}}(x),\mu_{\cidx}) \right\}F(\mu)} + 1\right\}^{-1}
    \end{aligned}
    \vspace{-0.2em}
\end{equation}
where {\eqsmall$T(a,b)=\log(ab+(1-a)(1-b))$}. We similarly give the lower bound $\sL[\pmainj^{\fname}]$ in \Cref{app:lem_pc}.
\end{theorem}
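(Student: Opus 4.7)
The plan is to start from the closed-form expression for $\pmainj^{\fname}(x)$ in \cref{eq:marginal_} and rewrite it as $\pmainj^{\fname}(x)=(1+N_0/N_1)^{-1}$, where $N_b$ denotes the numerator sum restricted to assignments with $\mu_j=b$ for $b\in\{0,1\}$. Upper bounding $\pmainj^{\fname}(x)$ thus reduces to lower bounding the scalar ratio $N_0/N_1$. Since the $j$-th coordinate is fixed in each sum, the factor $\exp(T(\pmain_j(x),\mu_j))$ pulls out, giving $N_1=\pmain_j(x)\,\tilde N_1$ and $N_0=(1-\pmain_j(x))\,\tilde N_0$, where $\tilde N_0,\tilde N_1$ are the residual sums over $\mu_{-j}$. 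Because $(1-\pmain_j)/\pmain_j$ is strictly decreasing in $\pmain_j$, its minimum over $[\underline{\pmain_j},\overline{\pmain_j}]$ equals $(1-\overline{\pmain_j})/\overline{\pmain_j}$, which accounts for the $\overline{\pmain_j}(x)$ factors in the theorem statement.

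The next step is to lower bound $\tilde N_0/\tilde N_1$ by bounding $\tilde N_0$ from below and $\tilde N_1$ from above, one coordinate $\pmain_{\cidx}$ at a time for each $\cidx\neq j$. The central observation will be that the knowledge-rule role of $\cidx$ dictates the direction of monotonicity. For a rule of the form $\cidx\Rightarrow j$, i.e., $\cidx\in V_d^j$, the factor $F(\mu)$ is attenuated precisely on assignments with $\mu_{\cidx}=1,\mu_j=0$, while the same rule is vacuously satisfied for every $\mu_{\cidx}$ when $\mu_j=1$. Using $\exp(T(\pmain_{\cidx},1))=\pmain_{\cidx}$ and $\exp(T(\pmain_{\cidx},0))=1-\pmain_{\cidx}$, and noting that $\tilde N_0$ and $\tilde N_1$ are affine in $\pmain_{\cidx}$, I would verify that $\tilde N_0$ is nonincreasing in $\pmain_{\cidx}$ while $\tilde N_1$ is insensitive to it. The extremal substitution is therefore $\pmain_{\cidx}\leftarrow\overline{\pmain_{\cidx}}$ inside $\tilde N_0$ and $\pmain_{\cidx}\leftarrow\underline{\pmain_{\cidx}}$ inside $\tilde N_1$, matching the theorem's formula; a symmetric argument for $j\Rightarrow\cidx$ produces the $V_s^j$ substitutions.

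The step I expect to be the main obstacle is extending this coordinate-wise monotonicity to PCs whose rule set involves interactions among several non-$j$ variables, since then $F(\mu)$ does not cleanly factor across $\cidx$ within the $\mu_j$-restricted sums. The plan is to decompose $F(\mu)=F_j(\mu_j,\mu_{-j})\,G(\mu_{-j})$, where $G$ collects rule factors independent of $\mu_j$, and to show that $\partial\log(\tilde N_0/\tilde N_1)/\partial\pmain_{\cidx}$ has the sign dictated by the $\cidx$-vs-$j$ rules by rewriting it as a difference of $G$-weighted conditional expectations of $F_j$; the non-$j$ factor then contributes a common positive weighting that does not flip the sign. Once this claim is established, $\sU[\pmainj^{\fname}(x)]$ follows by plugging the monotone extremum for each coordinate into the ratio, and the companion lower bound $\sL[\pmainj^{\fname}(x)]$ deferred to the appendix is obtained by the identical argument with $\overline{\cdot}\leftrightarrow\underline{\cdot}$ swapped and $\pmain_j\leftarrow\underline{\pmain_j}$.
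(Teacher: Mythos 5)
Your first two paragraphs reproduce the paper's argument almost exactly: the paper also rewrites $\pmainj^{\fname}(x)$ as $\bigl(1+\tfrac{1-\pmainj(x)}{\pmainj(x)}\cdot\tfrac{\tilde N_0}{\tilde N_1}\bigr)^{-1}$, observes that each restricted sum $G(q,d)=\sum_{\mu_j=d}\exp\{\sum_{\cidx\neq j}T(\pmain_{\cidx}(x),\mu_{\cidx})\}F(\mu)$ is affine in $\hat{\pi}_q(x)$, determines the sign of the coefficient from whether $q$ is a conditional or consequence variable (its Lemma on function monotonicity), and then bounds the $\mu_j=0$ and $\mu_j=1$ sums separately by plugging in the appropriate endpoints coordinate by coordinate. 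So the core of your plan is sound and matches the paper.

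The issue is your third paragraph. First, the "obstacle" you identify is not one: the coefficient of $\hat{\pi}_q(x)$ in $G(q,d)$ is $\sum_{\mu_j=d,\,\mu_q=0}\exp\{\cdots\}\,[F(R(\mu,q))-F(\mu)]$, where $R(\mu,q)$ flips $\mu_q$ to $1$, and the pointwise inequality $F(R(\mu,q))\le F(\mu)$ for a conditional variable $q$ (flipping an antecedent from $0$ to $1$ can only violate more implications) holds term by term for \emph{every} fixed configuration of the other coordinates, regardless of how the remaining rules couple the non-$j$ variables. No factorization of $F$ across coordinates is needed, and iterating the endpoint substitution one coordinate at a time stays within the same functional form. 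Second, the fix you propose would not work: you switch from bounding $\tilde N_0$ and $\tilde N_1$ separately (correct, and what the theorem's formula encodes) to arguing that $\partial\log(\tilde N_0/\tilde N_1)/\partial\pmain_{\cidx}$ has a determined sign. For a conditional $\cidx$ both $\tilde N_0$ and $\tilde N_1$ are nonincreasing in $\pmain_{\cidx}$, so the sign of the difference of their logarithmic derivatives is not determined, and the ratio is not monotone in general. Indeed, if it were, the extremum would use the \emph{same} endpoint of $\pmain_{\cidx}$ in both sums, whereas the stated bound deliberately uses $\overline{\hat{\pi}_{\cidx}}$ in the $\mu_j=0$ sum and $\underline{\hat{\pi}_{\cidx}}$ in the $\mu_j=1$ sum for the same $\cidx\in V_d^j$. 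Drop the log-ratio detour and carry out the separate numerator/denominator bounding of your second paragraph, justified by the term-by-term comparison above; that is precisely the paper's proof.
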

\vspace{-0.5em}

\reb{
\textit{Remarks.} \Cref{thm:pc_rob} establishes a certification connection from the probability bound of the learning component to that of the reasoning component. Applying robustness certification methods to DNNs, we can get the probability bound of the learning component $[\underline{\hat{\pi}},\overline{\hat{\pi}}]$ within bounded input perturbations.
\Cref{thm:pc_rob} shows that we can compute the probability bound after the reasoning component $[\sL[\pmainj^{\fname}],\sU[\pmainj^{\fname}]]$ following \cref{eq:thm1}.
Since \Cref{eq:thm1} presents an analytical formulation of the bound after the reasoning component as a function of the bounds before the reasoning component, we can directly and efficiently compute $[\sL[\pmainj^{\fname}],\sU[\pmainj^{\fname}]]$ in evaluations.
}

\vspace{-0.5em}
\subsection{Certifiably Robust Conformal Prediction}
\label{sec:ee_rob}
\vspace{-2mm}
In this section, we leverage the reasoning component bounds in \Cref{thm:pc_rob} to perform calibration with a worst-case non-conformity score function for adversarial perturbations. This way, we perform certifiably robust conformal prediction in the adversary setting during the inference time. We formulate the procedure to achieve this in \Cref{thm:cer_set}.
\begin{theorem}[Certifiably Robust Conformal Prediction of \name]
\label{thm:cer_set}
    Consider a new test sample {\eqsmall$X_{n+1}$} drawn from {\eqsmall$P_{XY}$}.
    For any bounded perturbation {\eqsmall$\|\epsilon\|_2 \le \delta$} in the input space and the adversarial sample {\eqsmall$\tilde{X}_{n+1}:=X_{n+1}+\epsilon$},
    we have the following guaranteed marginal coverage:
    \vspace{-0.2em}
    {\eqsmall
    \begin{equation}
    \label{eq:cov}
        \sP[Y_{n+1} \in \hat{C}^{\fname_{\delta}}_{n,\alpha}(\tilde{X}_{n+1})] \ge 1- \alpha
    \end{equation}
    }
    \vspace{-0.2em}
if we construct the certified prediction set of \name where
{\eqsmall
\begin{equation}
\label{eq:final}
    \hat{C}^{\fname_{\delta}}_{n,\alpha}(\tilde{X}_{n+1})= \left\{j \in [N_c]: S_{\pmainj^{\fname}}(\tilde{X}_{n+1},1) \le Q_{1-\alpha}(\{S_{\pmainj^{\fname_{\delta}}}(X_i,\sI_{[Y_i=j]})\}_{i \in \gI_{\text{cal}}}) \right\}
\end{equation} 
}
\vspace{-0.5em}
and {\eqsmall$S_{\pmainj^{\fname_{\delta}}}(\cdot,\cdot)$} is a function of worst-case non-conformity score considering perturbation radius {\eqsmall$\delta$}:
{\eqsmall
    \begin{equation}
    \label{eq:worst_score}
        S_{\pmainj^{\fname_{\delta}}}(X_i, \sI_{[Y_i=j]}) = \left\{ \begin{aligned}
            \sU_\delta[\pmainj^{\fname}(X_i)] + u (1-\sU_\delta[\pmainj^{\fname}(X_i)]), \quad Y_i\neq j \\
            1- \sL_\delta[\pmainj^{\fname}(X_i)] + u \sL_\delta[\pmainj^{\fname}(X_i)] , \quad Y_i=j
        \end{aligned} \right. 
    \end{equation}}
    with {\eqsmall$\sU_\delta[\pmainj^{\fname}(x)] = \max_{|\eta|_2 \le \delta }\pmainj^{\fname}(x+\eta)$} and {\eqsmall$\sL_\delta[\pmainj^{\fname}(x)] = \min_{|\eta|_2 \le \delta }\pmainj^{\fname}(x+\eta)$}. 
\end{theorem}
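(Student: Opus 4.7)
The plan is to upper-bound the actual perturbed test score by a worst-case unperturbed score (exploiting monotonicity of the non-conformity score in $\pmainj^{\fname}$), and then apply a standard split-conformal exchangeability argument to the \emph{worst-case} scores at the unperturbed $X_i$'s and $X_{n+1}$. The coverage claim then follows by combining the two.

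\textbf{Step 1 (monotone dominance of the score).} Fix any $j\in[N_c]$, $y\in\{0,1\}$, and $u\in[0,1]$, and write $p:=\pmainj^{\fname}(X)$. In the per-class binary formulation, the APS score $S_{\pmainj^{\fname}}(X,y)$ is dominated (by bounding the indicator in the definition by $1$) by $p(1-u)+u$ when $y=0$ and by $1-p(1-u)$ when $y=1$. These envelopes are respectively monotone increasing and monotone decreasing in $p$. Combined with the defining bounds $\sL_\delta[\pmainj^{\fname}(X)]\le \pmainj^{\fname}(X+\eta)\le \sU_\delta[\pmainj^{\fname}(X)]$ for all $\|\eta\|_2\le\delta$, substituting $p=\sU_\delta$ into the $y=0$ branch and $p=\sL_\delta$ into the $y=1$ branch reproduces exactly the worst-case score in \eqref{eq:worst_score}. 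Hence
\begin{equation*}
S_{\pmainj^{\fname}}(X+\eta,\,y)\;\le\;S_{\pmainj^{\fname_\delta}}(X,\,y)\qquad\text{for every }\|\eta\|_2\le\delta.
\end{equation*}
Specializing to $\eta=\tilde X_{n+1}-X_{n+1}$, $j=Y_{n+1}$, and $y=\sI_{[Y_{n+1}=j]}=1$ yields the key pointwise inequality $S_{\pmain_{Y_{n+1}}^{\fname}}(\tilde X_{n+1},1)\le S_{\pmain_{Y_{n+1}}^{\fname_\delta}}(X_{n+1},1)$.

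\textbf{Step 2 (exchangeability and the conformal quantile).} For each fixed $j$, the map $(X,Y)\mapsto S_{\pmainj^{\fname_\delta}}(X,\sI_{[Y=j]})$ is a deterministic measurable function of $(X,Y)$ (the per-sample uniform $u$ being treated as an auxiliary independent coordinate attached to each data point). Exchangeability of $\{(X_i,Y_i)\}_{i\in\gI_{\text{cal}}\cup\{n+1\}}$ therefore transfers to the corresponding scores, and the standard split-conformal quantile lemma yields
\begin{equation*}
\sP\Bigl[\,S_{\pmainj^{\fname_\delta}}(X_{n+1},\sI_{[Y_{n+1}=j]})\le Q_{1-\alpha}\bigl(\{S_{\pmainj^{\fname_\delta}}(X_i,\sI_{[Y_i=j]})\}_{i\in\gI_{\text{cal}}}\bigr)\,\Bigr]\ge 1-\alpha.
\end{equation*}
Composing this with the pointwise bound from Step~1 at $j=Y_{n+1}$ shows that, with probability at least $1-\alpha$, the perturbed test score $S_{\pmain_{Y_{n+1}}^{\fname}}(\tilde X_{n+1},1)$ lies below the calibration quantile, which by the definition of $\hat C^{\fname_\delta}_{n,\alpha}$ in \eqref{eq:final} is exactly the event $Y_{n+1}\in\hat C^{\fname_\delta}_{n,\alpha}(\tilde X_{n+1})$.

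\textbf{Main obstacle.} The substantive piece is Step~1: one has to unpack $S_{\phat_y}$, identify the correct monotone envelope in $p$, and thereby justify why $\sL_\delta$ must pair with $Y_i=j$ while $\sU_\delta$ pairs with $Y_i\ne j$ in~\eqref{eq:worst_score}; everything else is a verbatim application of split conformal prediction. A secondary subtlety is that the index $j=Y_{n+1}$ used to select the score and quantile is itself random, so the per-class guarantee of Step~2 must be instantiated at the realized label; this can be handled either by reading Step~2 as a uniform-in-$j$ statement applied along the trajectory $j=Y_{n+1}$, or, more cleanly, by recasting the calibration score as the single label-coupled function $\tilde g(X,Y):=S_{\pmain_Y^{\fname_\delta}}(X,1)$ and invoking split CP directly on $(\tilde g(X_i,Y_i))_i$. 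Once the monotone dominance is in place, no further ingredients are needed.
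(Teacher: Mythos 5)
Your proposal is correct and follows essentially the same route as the paper's proof: both hinge on the pointwise dominance $S_{\pmainj^{\fname}}(\tilde X_{n+1},\sI_{[Y_{n+1}=j]})\le S_{\pmainj^{\fname_\delta}}(X_{n+1},\sI_{[Y_{n+1}=j]})$, obtained from the monotone envelopes $u+(1-u)p$ (for $y=0$) and $1-(1-u)p$ (for $y=1$) of the binary APS score, followed by the standard split-conformal exchangeability argument applied to the worst-case calibration scores. The only cosmetic difference is that the paper handles the randomness of the label index by bounding the miscoverage with $\max_{j\in[N_c]}$ of the per-class miscoverage probabilities, whereas you propose the equivalent label-coupled-score formulation.
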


\reb{
\textit{Remarks.} 
 \Cref{thm:cer_set} shows that the coverage guarantee of \name in the adversary setting is still valid (\cref{eq:cov}) if we construct the prediction set by considering the worst-case perturbation as in \cref{eq:final}. That is, the prediction set of \name in \cref{eq:final} covers the ground truth of an adversarial sample with nominal level {\eqsmall$1-\alpha$}. To achieve that, we use a worst-case non-conformity score as in \cref{eq:worst_score} during calibration to counter the influence of adversarial sample during inference. The bound of output probability ($\sU_\delta[\hat{\pi}_j^{\text{COLEP}}],\sL_\delta[\hat{\pi}_j^{\text{COLEP}}]$) in \cref{eq:worst_score} can be computed by \Cref{thm:pc_rob} to achieve end-to-end robustness certification of \name, which generally holds for any DNN and probabilistic circuits.
 We also consider the finite-sample errors of using randomized smoothing for the learning component certification as \cite{anonymous2023provably} and provided the theoretical statement and proofs in \cref{thm:thm2_fin} in \Cref{app:cerset_fin}.
}

\vspace{-0.2em}
\begin{theorem}[Certified (Worst-Case) Coverage of \name]
\label{thm:worst_case}
Consider the new sample {\eqsmall$X_{n+1}$} drawn from {\eqsmall$P_{XY}$} and adversarial sample {\eqsmall$\tilde{X}_{n+1}:=X_{n+1}+\epsilon$} with  any perturbation {\eqsmall$\|\epsilon\|_2 \le \delta$} in the input space.
    We have:
    \vspace{-0.5em}
    {\eqsmall
    \begin{equation}
    \label{eq:certified_cov}
        \sP[Y_{n+1} \in \hat{C}^{\fname}_{n,\alpha}(\tilde{X}_{n+1})] \ge \tau^{\fname_{\text{cer}}} := \min_{j \in [N_c]} \left\{ \tau^{\fname_{\text{cer}}}_{j}  \right\} ,
    \end{equation}
    }
    \vspace{-0.2em}
    where the certified (worst-case) coverage of the {\eqsmall$j$}-th class label {\eqsmall$\tau^{\fname_{\text{cer}}}_{j}$} is formulated as:
    \vspace{-0.5em}
    {\eqsmall
    \begin{equation}
        \tau^{\fname_{\text{cer}}}_{j} = \max \left\{ \tau: Q_{\tau}(\{S_{\pmainj^{\fname_{\delta}}}(X_i, \sI_{[Y_i=j]})\}_{i \in \gI_{\text{cal}}}) \le Q_{1-\alpha}(\{S_{\pmainj^{\fname}}(X_i, \sI_{[Y_i=j]})\}_{i \in \gI_{\text{cal}}})  \right\}.
        \label{eq:eqthm3}
    \end{equation}}
\end{theorem}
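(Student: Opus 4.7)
The plan is to follow the same skeleton as Theorem 2 but read the inequality in the opposite direction: keep the standard prediction set (calibrated on untouched clean scores) and read off, class by class, the largest coverage level $\tau$ that the perturbed test point still achieves; the final marginal bound comes from taking a minimum over class labels.

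First, via the binary construction in \cref{eq:pre_set_final,eqn:C_j}, the coverage event admits the disjoint decomposition
\begin{equation*}
\{Y_{n+1}\in\hat{C}^{\fname}_{n,\alpha}(\tilde{X}_{n+1})\}=\bigsqcup_{j=1}^{N_c}\bigl\{Y_{n+1}=j,\ S_{\pmainj^{\fname}}(\tilde{X}_{n+1},1)\le Q^{\mathrm{std},j}_{1-\alpha}\bigr\},
\end{equation*}
where I abbreviate $Q^{\mathrm{std},j}_{1-\alpha}:=Q_{1-\alpha}(\{S_{\pmainj^{\fname}}(X_i,\sI_{[Y_i=j]})\}_{i\in\gI_{\text{cal}}})$. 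Using the perturbation envelope that already appears in \cref{eq:worst_score}, namely $S_{\pmainj^{\fname}}(\tilde{X}_{n+1},1)\le S_{\pmainj^{\fname_{\delta}}}(X_{n+1},1)$ whenever $\|\epsilon\|_2\le\delta$, I would monotonically replace each class-$j$ event by its worst-case counterpart at the clean input $X_{n+1}$.

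Second, I would introduce the deterministic score $g_j(X,Y):=S_{\pmainj^{\fname_{\delta}}}(X,\sI_{[Y=j]})$. Because $g_j$ is measurable and the data is exchangeable, the augmented sequence $\{g_j(X_i,Y_i)\}_{i\in\gI_{\text{cal}}\cup\{n+1\}}$ is exchangeable. The definition of $\tau^{\fname_{\text{cer}}}_j$ in \cref{eq:eqthm3} is precisely the largest $\tau$ for which $Q_{\tau}(\{g_j(X_i,Y_i)\}_{i\in\gI_{\text{cal}}})\le Q^{\mathrm{std},j}_{1-\alpha}$, so the clean threshold dominates the $\tau^{\fname_{\text{cer}}}_j$-quantile of the worst-case calibration scores. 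The standard quantile-rank inequality for exchangeable variables (the same tool used in Theorem 2) then yields $\sP[g_j(X_{n+1},Y_{n+1})\le Q^{\mathrm{std},j}_{1-\alpha}]\ge \tau^{\fname_{\text{cer}}}_j$. Since only on $\{Y_{n+1}=j\}$ does $g_j(X_{n+1},Y_{n+1})$ actually coincide with the coverage-relevant $S_{\pmainj^{\fname_{\delta}}}(X_{n+1},1)$, I would upgrade the rank bound to the joint form $\sP[Y_{n+1}=j,\ g_j(X_{n+1},Y_{n+1})\le Q^{\mathrm{std},j}_{1-\alpha}]\ge \sP[Y_{n+1}=j]\,\tau^{\fname_{\text{cer}}}_j$ by invoking exchangeability of the paired sequence $(g_j(X_i,Y_i),\,\sI_{[Y_i=j]})_i$. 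Summing over $j$ and lower-bounding each $\tau^{\fname_{\text{cer}}}_j$ by $\min_{j'}\tau^{\fname_{\text{cer}}}_{j'}$ closes the argument:
\begin{equation*}
\sP[Y_{n+1}\in\hat{C}^{\fname}_{n,\alpha}(\tilde{X}_{n+1})]\ge \sum_{j} \sP[Y_{n+1}=j]\,\tau^{\fname_{\text{cer}}}_j\ge \min_{j} \tau^{\fname_{\text{cer}}}_j=\tau^{\fname_{\text{cer}}}.
\end{equation*}

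The main obstacle is this last conditional-to-joint upgrade: a bare application of exchangeability only gives the marginal probability $\sP[g_j\le Q^{\mathrm{std},j}_{1-\alpha}]$, which also counts the useless $\{Y_{n+1}\ne j\}$ branch where $g_j$ is evaluated at indicator value $0$ and has nothing to do with coverage of the true class. Extracting the intended $\{Y_{n+1}=j\}$ slice cleanly requires either invoking symmetry of the paired exchangeable sequence or equivalently conditioning on $Y_{n+1}=j$ and using class-wise exchangeability of the $X_i$ with $Y_i=j$, together with the fact that the all-samples threshold $Q^{\mathrm{std},j}_{1-\alpha}$ still dominates the class-$j$ worst-case quantile at level $\tau^{\fname_{\text{cer}}}_j$. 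Finite-sample rounding of the $\lceil \tau(n+1)\rceil$ type is absorbed by the maximum defining $\tau^{\fname_{\text{cer}}}_j$ in \cref{eq:eqthm3}.
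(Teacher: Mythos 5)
Your proposal assembles the same ingredients as the paper's argument --- the score domination $S_{\pmainj^{\fname}}(\tilde{X}_{n+1},\cdot)\le S_{\pmainj^{\fname_{\delta}}}(X_{n+1},\cdot)$ for $\|\epsilon\|_2\le\delta$, the reading of \cref{eq:eqthm3} as a comparison between the standard threshold and a quantile of the worst-case calibration scores, and exchangeability of those worst-case scores --- but the final assembly has a genuine gap, which you flag yourself as ``the main obstacle'' and do not close. Your disjoint decomposition over $\{Y_{n+1}=j\}$ forces you to prove the \emph{label-conditional} statement $\sP[g_j(X_{n+1},Y_{n+1})\le Q^{\mathrm{std},j}_{1-\alpha}\mid Y_{n+1}=j]\ge\tau^{\fname_{\text{cer}}}_{j}$. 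Exchangeability of the paired sequence $(g_j(X_i,Y_i),\sI_{[Y_i=j]})_i$ yields only the marginal rank bound; conditioning on $Y_{n+1}=j$ breaks the symmetry unless the quantile is recomputed over the class-$j$ calibration points alone (Mondrian-style calibration), whereas $\tau^{\fname_{\text{cer}}}_{j}$ in \cref{eq:eqthm3} is defined through quantiles of the \emph{full} set $\{S_{\pmainj^{\fname_{\delta}}}(X_i,\sI_{[Y_i=j]})\}_{i\in\gI_{\text{cal}}}$, which mixes points with $Y_i=j$ and $Y_i\neq j$. Your proposed fix (``the all-samples threshold still dominates the class-$j$ worst-case quantile at level $\tau^{\fname_{\text{cer}}}_{j}$'') is not a consequence of \cref{eq:eqthm3} and would require an additional assumption or argument.

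The paper avoids this by never conditioning on the true label: it lower-bounds the coverage by $\min_{j\in[N_c]}\sP[\sI_{[Y_{n+1}=j]}\in\hat{C}^{\fname}_{n,\alpha,j}(\tilde{X}_{n+1})]$, i.e.\ by the \emph{marginal} coverage of each auxiliary binary problem, mirroring the reduction used in the proof of \Cref{thm:cer_set}. For that marginal event the chain is exactly the one you intend: the event is $\{S_{\pmainj^{\fname}}(\tilde{X}_{n+1},\sI_{[Y_{n+1}=j]})\le Q_{1-\alpha}(\{S_{\pmainj^{\fname}}(X_i,\sI_{[Y_i=j]})\}_{i\in\gI_{\text{cal}}})\}$; by \cref{eq:eqthm3} this threshold dominates $Q_{\tau^{\fname_{\text{cer}}}_{j}}$ of the worst-case scores; by score domination the adversarial test score is at most the worst-case clean score $S_{\pmainj^{\fname_{\delta}}}(X_{n+1},\sI_{[Y_{n+1}=j]})$; and the plain, unconditional exchangeability rank bound applied to $\{S_{\pmainj^{\fname_{\delta}}}(X_i,\sI_{[Y_i=j]})\}_{i\in\gI_{\text{cal}}\cup\{n+1\}}$ gives probability at least $\tau^{\fname_{\text{cer}}}_{j}$. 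To repair your route you should either switch to this marginal binary formulation (dropping the $\sum_j\sP[Y_{n+1}=j]\,\tau^{\fname_{\text{cer}}}_{j}$ refinement, which is unobtainable here) or redefine both the calibration quantile and $\tau^{\fname_{\text{cer}}}_{j}$ class-conditionally; as written, the conditional-to-joint upgrade is the missing step.
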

\vspace{-0.5em}
\reb{
\textit{Remarks.} 
{\Cref{thm:cer_set} constructs a certified prediction set with $1-\alpha$ coverage as \cref{eq:final} by considering the worst-case perturbation during calibration to counter the influence of test-time adversaries.}
In contrast, \Cref{thm:worst_case} targets the case when we still adopt a standard calibration process without the consideration of test-time perturbations, which would lead to a lower coverage than $1-\alpha$ and \Cref{thm:worst_case} concretely shows the formulation of the worst-case coverage.
\Cref{thm:worst_case} presents the formulations of the lower bound of the coverage ($\tau^{\fname_{\text{cer}}}_{j}$) with the standard prediction set as \cref{eq:pre_set_final} in the adversary setting. In addition to the certified coverage, we consider finite calibration set size and certified coverage with finite sample errors in \Cref{app:finite_cov}. 
}

\vspace{-2.5mm}
\section{Theoretical Analysis of \name}
\label{sec:theor}
\vspace{-3mm}
In \Cref{sec:res_thm}, we theoretically show that \name achieves better coverage and prediction accuracy than a single main model as long as the utility of knowledge models and rules (characterized in \Cref{sec:chara}) is non-trivial.
To achieve it, we provide a connection between those utilities and the effectiveness of PCs in \Cref{sec:eff_pc}.
The certification overview is provided in \cref{fig:thm} in \Cref{sec:overview_certi}.
\vspace{-1em}
\subsection{Characterization of Models and Knowledge Rules}
\label{sec:chara}
\vspace{-2mm}

We consider a mixture of benign distribution $\gD_b$ and adversarial distribution $\gD_a$ denoted by {\eqsmall$\gD_m:=p_{\gD_b} \gD_b + p_{\gD_a} \gD_a$} with {\eqsmall$p_{\gD_b}+p_{\gD_a} = 1$}.
We map a set of implication rules encoded in the {\eqsmall$r$}-th PC to an undirected graph {\eqsmall$\gG_r = (\gV_r, \gE_r)$}, where {\eqsmall$\gV_r=[N_c+L]$} corresponds to {\eqsmall$N_c$} (binary) class labels and {\eqsmall$L$} (binary) concept labels. There exists an edge between node {\eqsmall$j_1 \in \gV_r$} and node {\eqsmall$j_2 \in \gV_r$} iff they are connected by an implication rule {\eqsmall$j_1 \implies j_2$} or {\eqsmall$j_2 \implies j_1$}.
\vspace{-0.2em}


Let {\eqsmall$\gA_{\pcidx}(\cidx)$} be the set of nodes in the connected component of node {\eqsmall$\cidx$} in graph $\gG_{\pcidx}$. 
Let {\eqsmall$\gA_{\pcidx,d}(\cidx) \subseteq \gA_{\pcidx}(\cidx)$} be the set of  conditional variables and {\eqsmall$\gA_{\pcidx,s}(\cidx) \subseteq \gA_{\pcidx}(\cidx)$} be the set of consequence variables. 
For any sample {\eqsmall$x$}, let {\eqsmall$\nu(x) \in [0,1]^{N_c+L}$} be the Boolean vector of ground truth class and concepts such that {\eqsmall$\nu(x):=\big[[\sI_{[y=j]}]_{j\in[N_c]}, [\sI_{[{y}^{(\kidx)}=1]}]_{\kidx\in[\kmodels]}\big]$}. We denote {\eqsmall$\nu(x)$} as {\eqsmall$\nu$} when sample {\eqsmall$x$} is clear in the context. For simplicity, we assume a fixed weight {\eqsmall$w \in \sR^+$} for all knowledge rules in the analysis. 
{In what follows, we characterize the prediction capability of each model (main or knowledge) and establish a relation between these characteristics and the prediction performance of \fname.}

\vspace{-0.2em}
For each class and concept probability estimated by learning models {\eqsmall$\conPihat_{\cidx\in [N_c+L]}: \sR^d \mapsto [0,1]$} (recall~\cref{eqn:Pi}), we have the following definition:
{\eqsmall
\begin{equation}
\begin{aligned}
     \sP_{\gD}\left[\conPihat_{\cidx}(x) \le 1-t_{\cidx, {\gD}} \left| \nu_{{\cidx}}=0 \right.\right] \ge z_{\cidx, {\gD}}, \quad  \sP_{\gD}\left[\conPihat_{\cidx}(x) \ge t_{\cidx, {\gD}} \left| \nu_{{\cidx}}=1 \right. \right] \ge z_{\cidx, {\gD}}
\end{aligned}
\end{equation}
}where {\eqsmall$t_{\cidx, {\gD}}, \ z_{\cidx, {\gD}}$} are parameters quantifying {quality of the main model and knowledge models in predicting correct class and concept, respectively.}
In particular, {\eqsmall$t_{\cidx, {\gD}}$} characterizes the threshold of confidence whereas {\eqsmall$z_{\cidx, {\gD}}$} quantifies the probability matching this threshold over data distribution {\eqsmall$\gD$}.
Specifically, models with a large {\eqsmall$t_{\cidx, {\gD}}, z_{\cidx, {\gD}}$} imply that the model outputs a small class probability when the ground truth of input is $0$ and a large class probability when the ground truth of input is $1$, indicating the model performs well on prediction.
We define {the combined utility of models within {\eqsmall$\pcidx$}-th PC by using individual model qualities. For $j$-th class, we have:}
\vspace{-0.2em}
{\eqsmall
\begin{equation}
    T^{(\pcidx)}_{j,\gD} = \Pi_{\cidx \in \gA_{\pcidx}(j)} t_{\cidx, {\gD}}, \quad  Z^{(\pcidx)}_{j,\gD} = \Pi_{\cidx \in \gA_{\pcidx}(j)} z_{\cidx, {\gD}}
\end{equation}}and the utility of associated knowledge rules within {\eqsmall$\pcidx$}-th PC for the $j$-th class as:
{\eqsmall
\begin{equation}
    U^{(\pcidx)}_{j} = \left\{ \begin{aligned}
        \sP\left[ \nu_{\cidx}=0, \exists {\cidx} \in \gA_{\pcidx,s}(j) \left| \nu_j=0 \right. \right], \quad \pcidx \in \mathcal{P}_d(j) \\
        \sP\left[ \nu_{\cidx}=1, \exists {\cidx} \in \gA_{\pcidx,d}(j) \left| \nu_j=1 \right. \right], \quad \pcidx \in \mathcal{P}_s(j)
    \end{aligned} \right. 
\end{equation}
}\vspace{-0.2em}where {{\eqsmall$\mathcal{P}_d(j)$} and {\eqsmall$\mathcal{P}_s(j)$}} are PC index sets where {\eqsmall$j$} appears as conditional and consequence variables.
{\eqsmall$U^{(\pcidx)}_{j}$} quantifies the utility of knowledge rules for class {\eqsmall$j$}.
Consider the case where {\eqsmall$j$} corresponds to a conditional variable in the {\eqsmall$r$}-th PC. {Small {\eqsmall$U^{(\pcidx)}_{j}$}} indicates that 
{\eqsmall$\sP\left[ \nu_{\cidx}=1, \forall \cidx \in \gA_{\pcidx,s}(j) \left| \nu_j=0 \right. \right]$} is high, hence implication rules for {\eqsmall$\cidx$} are universal and of low utility. 
\vspace{-0.4em}
\subsection{Effectiveness of the Reasoning Component with PCs}
\label{sec:eff_pc}
\vspace{-2.2mm}
Here we theoretically build the connection between the utility of knowledge models {\eqsmall$T^{(\pcidx)}_{\cidx,\gD}$, $Z^{(\pcidx)}_{\cidx,\gD}$} and the utility of implication rules {\eqsmall$U^{(\pcidx)}_{\cidx}$} with the effectiveness of reasoning components. We quantify the effectiveness of the reasoning component as the relative confidence in ground truth class probabilities.
Specifically, we expect {\eqsmall$\sE\left[\pnamej(X) - \hat{\pi}_j(X)  \left| Y=j \right. \right]$} to be large for {\eqsmall$j \in [N_c]$} as it enables \name to correct the prediction of the main model, especially in the adversary setting.

\begin{lemma}[Effectiveness of the Reasoning Component]
\label{lem:pc}
    For any class probability within models {\eqsmall$j \in [N_c]$} and data sample {\eqsmall$(X,Y)$} drawn from the mixture distribution {\eqsmall$\gD_m$}, we can prove that:
\begin{equation}
\footnotesize
    \sE\left[\pnamej(X) \left| Y\neq j \right. \right]  \le  \sE[\hat{\pi}_j(X) - \epsilon_{j,0} \left| Y\neq j \right.  ],
    \sE\left[\pnamej(X)   \left| Y=j \right. \right] \ge  \sE\left[\hat{\pi}_j(X) + \epsilon_{j,1} \left| Y=j \right.  \right]
\end{equation}
\begin{equation}
\footnotesize
\begin{aligned}
    \epsilon_{j,0} = \sum_{\mathclap{\gD \in \{\gD_a,\gD_b\}}} p_{\gD} \left[~~ \sum_{\mathclap{\pcidx \in \mathcal{P}_d(j)}} \beta_{\pcidx} U^{(\pcidx)}_{j} Z^{(\pcidx)}_{j,\gD} \left(\conPihat_{j}-\dfrac{\conPihat_{j}}{\conPihat_{j} + (1-\conPihat_{j})/\lambda^{(\pcidx)}_{j,\gD}}\right) + \sum_{\mathclap{r \in \mathcal{P}_s(j)}} \beta_r \left( \conPihat_{j} - \dfrac{\conPihat_{j}}{\conPihat_{j} + (1-\conPihat_{j})T^{(\pcidx)}_{j,\gD}} \right) \right]\\
    \epsilon_{j,1} = \sum_{\mathclap{\gD \in \{\gD_a,\gD_b\}}} p_{\gD} \left[ ~~ \sum_{\mathclap{r \in \mathcal{P}_d(j)}} \beta_{\pcidx} \left( \dfrac{\conPihat_{j}}{\conPihat_{j} + (1-\conPihat_{j})/T^{(\pcidx)}_{j,\gD}} - \conPihat_{j} \right) + \sum_{\mathclap{r \in \mathcal{P}_s(j)}} \beta_r U^{(\pcidx)}_{j} Z^{(\pcidx)}_{j,\gD}\left( \dfrac{\conPihat_{j}}{\conPihat_{j} + (1-\conPihat_{j})\lambda^{(\pcidx)}_{j,\gD}} - \conPihat_{j} \right) \right]
    \label{eq:epsilon}
\end{aligned}
\end{equation}
where {\eqsmall$\lambda^{(\pcidx)}_{j,\gD}=(1/T^{(\pcidx)}_{j,\gD} + e^{-w}-1)$} and we shorthand {\eqsmall$\conPihat_{j}(X)$ as $\conPihat_{j}$}.
\end{lemma}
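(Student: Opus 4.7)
The plan is to analyze a single probabilistic circuit at a time, then take a convex combination with weights $\beta_r$ and finally an expectation over the mixture $\gD_m = p_{\gD_b}\gD_b + p_{\gD_a}\gD_a$. For a fixed PC indexed by $r$, I would start from the marginalization identity \cref{eq:marginal_} written for $\hat{\pi}_j^{(r)}(x)$ and rewrite it as $\hat{\pi}_j^{(r)}(x) = [\,1 + (1-\hat{\pi}_j)/\hat{\pi}_j \cdot R_j^{(r)}(x)\,]^{-1}$ where $R_j^{(r)}(x)$ is the ratio between the ``$\mu_j=0$'' and the ``$\mu_j=1$'' slices of $\sum_\mu \exp\{\sum_{\cidx\neq j} T(\hat{\pi}_{\cidx},\mu_{\cidx})\} F(\mu)$. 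The key observation is that, for each knowledge rule tied to $j$, the factor $F(\mu)$ contributes $e^w$ precisely when the rule is satisfied, and otherwise $1$, so the ratio $R_j^{(r)}(x)$ can be pinched on the event that the associated conditional/consequence variables have ground-truth values compatible with $\nu_j$.

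I would then split into the two regimes of the lemma. When $Y=j$ (so $\nu_j=1$) and $r\in\mathcal{P}_d(j)$, every rule $j\Rightarrow \cidx$ is satisfied whenever $\mu_\cidx=1$, so using $\hat{\pi}_\cidx\ge t_{\cidx,\gD}$ on the good event with probability $\ge z_{\cidx,\gD}$ per knowledge model, I can factor the slice of $F(\mu)$ that forces $\mu_\cidx=1$ for all $\cidx\in\gA_{r,s}(j)$, obtaining $R_j^{(r)}(x)\le 1/T_{j,\gD}^{(r)}$ and hence $\hat{\pi}_j^{(r)}(x)\ge \hat{\pi}_j/(\hat{\pi}_j+(1-\hat{\pi}_j)/T_{j,\gD}^{(r)})$. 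For $r\in\mathcal{P}_s(j)$, one gets the analogous bound but multiplied by $U_j^{(r)}Z_{j,\gD}^{(r)}$, because the favorable event now requires that some conditional variable is actually true (captured by $U_j^{(r)}$) \emph{and} that the knowledge models detect it correctly (captured by $Z_{j,\gD}^{(r)}$); on this event the ``bad'' $\mu_j=0$ slice must pay a factor $e^{-w}$ on the rules, which is how $\lambda_{j,\gD}^{(r)}=1/T_{j,\gD}^{(r)}+e^{-w}-1$ enters the bound. The mirror argument for $Y\neq j$ yields the upper bound on $\hat{\pi}_j^{(r)}(x)$ by swapping the roles of $\mu_j=0$ and $\mu_j=1$ and swapping the roles of $\mathcal{P}_d(j)$ and $\mathcal{P}_s(j)$.

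With these per-PC one-sided bounds, the remaining steps are routine: expand $\hat{\pi}_j^{\text{COLEP}}=\sum_r \beta_r \hat{\pi}_j^{(r)}$, plug in the per-PC bounds conditionally on the favorable events, and then use the characterization inequalities $\sP_\gD[\hat{\pi}_\cidx\ge t_{\cidx,\gD}\mid \nu_\cidx=1]\ge z_{\cidx,\gD}$ and the dual, which after multiplying out across all $\cidx\in\gA_r(j)$ produces the product $T_{j,\gD}^{(r)}Z_{j,\gD}^{(r)}$. Taking conditional expectation given $Y$ and applying the law of total expectation over $\gD_b,\gD_a$ with weights $p_{\gD_b},p_{\gD_a}$ assembles exactly the $\epsilon_{j,0}$ and $\epsilon_{j,1}$ of \cref{eq:epsilon}; subtracting $\hat{\pi}_j$ on both sides gives the claimed comparison to the main model's output.

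The main obstacle is the bookkeeping around the mixed role of $j$: each class index $j$ can be a conditional variable in some PCs and a consequence variable in others, and within a single PC its connected component $\gA_r(j)$ mixes both types of siblings. One has to be careful to apply the right model-quality bound ($\hat{\pi}_\cidx$ large vs.\ small) for the right ground-truth value ($\nu_\cidx=1$ vs.\ $0$) depending on whether $\cidx$ sits in $\gA_{r,d}(j)$ or $\gA_{r,s}(j)$, and to show that on the complementary ``bad'' event the contribution to the ratio is at most that of the trivial bound $R_j^{(r)}\le 1$. Handling this case analysis uniformly, so that the utility term $U_j^{(r)}$ only multiplies the gain coming from the appropriate side (conditional vs.\ consequence), is the step that needs the most care; once that is in place, monotonicity of $t\mapsto t/(t+(1-t)c)$ in $c$ lets the per-event bounds be combined into the stated expectation inequality without further loss.
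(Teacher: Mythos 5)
Your plan follows essentially the same route as the paper's proof: rewrite $\hat{\pi}_j^{(r)}$ via the ratio of the $\mu_j=0$ and $\mu_j=1$ slices, split into the four cases given by $Y=j$ vs.\ $Y\neq j$ and $r\in\mathcal{P}_d(j)$ vs.\ $r\in\mathcal{P}_s(j)$, pinch the ratio on the favorable event (ground truth supplying a rule-relevant variable, probability $U_j^{(r)}$, times correct detection, probability $Z_{j,\gD}^{(r)}$) with the trivial one-sided bound on its complement, and then aggregate over PCs with $\beta_r$ and over $\{\gD_a,\gD_b\}$ with $p_\gD$. The only small deviation is that in the $Y=j$, $r\in\mathcal{P}_d(j)$ case you invoke the $z_{\cidx,\gD}$ good event, whereas the paper (consistently with the stated $\epsilon_{j,1}$, which carries no $Z$ factor on the $\mathcal{P}_d(j)$ term) applies the $1/T^{(\pcidx)}_{j,\gD}$ bound there directly.
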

\vspace{-0.5em}
\textit{Remarks.} \Cref{lem:pc} analyzes the relative confidence of the ground truth class probabilities after corrections with the reasoning component. Towards that, we consider different cases based on the utility of the models and knowledge rules described earlier and combine conditional upper bounds with probabilities. We can quantify the effectiveness of the reasoning component with {\eqsmall$\epsilon_{j,0},\epsilon_{j,1}$} since non-trivial {\eqsmall$\epsilon_{j,0},\epsilon_{j,1}$} can correct the confidence of ground truth class in the expected direction and benefit conformal prediction by inducing a smaller non-conformity score for an adversarial sample. 

\vspace{-1em}
\subsection{Comparison between \name and Single Main Model}
\label{sec:res_thm}
\vspace{-2.5mm}
Here we reveal the superiority of \name over a single main model in terms of the certified prediction coverage and prediction accuracy.
Let {\eqsmall$\bm{\epsilon}_{j,0, \gD}=\sE_{(X,Y) \sim \gD}[\epsilon_{j,0} | \nu_{j}=0 ]$} and {\eqsmall$\bm{\epsilon}_{j,1, \gD}=\sE_{(X,Y) \sim \gD}[\epsilon_{j,1} | \nu_{j}=1 ]$}.
In \Cref{sec:eff_pc}, we show that the utility of models and knowledge rules is crucial in forming an understanding of \fname and {\eqsmall$\epsilon_{j,0}>0,\epsilon_{j,1}>0$} holds.
In similar spirit, {\eqsmall$\bm{\epsilon}_{j,0, \gD} > 0$} and {\eqsmall$\bm{\epsilon}_{j,1, \gD} > 0$} for {\eqsmall$j \in [N_c]$} and {\eqsmall$\gD\in\{\gD_a, \gD_b\}$} in the subsequent analysis.

\begin{theorem}[Comparison of Marginal Coverage of \name and Main Model]
\label{thm:comp2}
    Consider the adversary setting that the calibration set {\eqsmall$\gI_{\text{cal}}$} consists of {\eqsmall$n_{\gD_b}$} samples drawn from the benign distribution {\eqsmall$\gD_b$}, while the new sample {\eqsmall$(X_{n+1}, Y_{n+1})$} is drawn {\eqsmall$n_{\gD_a}$} times from the adversarial distribution {\eqsmall$\gD_a$}.
    Assume that {\eqsmall$A(\pmainj,\gD_a)<0.5<A(\pmainj,\gD_b)$} for {\eqsmall$j\in[N_c]$}, where {\eqsmall$A(\pmainj,\gD)$} is the expectation of prediction accuracy of {\eqsmall$\pmainj$} on {\eqsmall$\gD$}.
    Then we have:
    \vspace{-0.7em}
    \begin{equation}
    \footnotesize
    \begin{aligned}
          \sP[Y_{n+1} \in \hat{C}^{\fname}_{n,\alpha}(\tilde{X}_{n+1})] &> \sP[Y_{n+1} \in \hat{C}_{n,\alpha}(\tilde{X}_{n+1})],\quad w.p.  \\
         1 \hspace{-0.3em} - \hspace{-0.3em} \max_{j\in[N_c]} \hspace{-0.1em} \{ \exp\left\{-2n_{\gD_a}(0.5-A(\pmainj,\gD_a))^2 \bm{\epsilon}^2_{j,1, {\gD_a}}\right\} &\hspace{-0.3em} + \hspace{-0.3em}n_{\gD_b} \hspace{-0.3em}  \exp \hspace{-0.3em} \big\{ \hspace{-0.3em} -2n_{\gD_b}\big((A(\pmainj,\gD_b)-0.5)\hspace{-0.7em}{\scriptsize\sum_{c\in\{0, 1\}}} \hspace{-0.5em} p_{jc}\bm{\epsilon}_{j,c,\gD_b}\big)^2 \}
         \big\}
    \end{aligned}
    \label{eq:comp_em}
    \vspace{-0.9em}
    \end{equation}
    where {\eqsmall$p_{j0}=\sP_{\gD_b}[\sI_{[Y \neq j]}]$} and {\eqsmall$p_{j1}=\sP_{\gD_b}[\sI_{[Y = j]}]$} are class probabilities on benign distribution.
\end{theorem}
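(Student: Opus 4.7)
The plan is to establish the inequality $\mathbb{P}[Y_{n+1} \in \hat{C}^{\fname}_{n,\alpha}(\tilde{X}_{n+1})] > \mathbb{P}[Y_{n+1} \in \hat{C}_{n,\alpha}(\tilde{X}_{n+1})]$ in two layers: an \emph{expectation-level} comparison driven by the reasoning-component shifts $\bm{\epsilon}_{j,0,\mathcal{D}}, \bm{\epsilon}_{j,1,\mathcal{D}}$ from \Cref{lem:pc}, followed by a \emph{concentration} argument that converts the expected inequality into a finite-sample high-probability statement via Hoeffding's inequality. The two exponential terms in \cref{eq:comp_em} correspond exactly to two Hoeffding bounds (one per each source of randomness), and the union bound over $j\in[N_c]$ plus these two events produces the displayed probability.

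First, I would rewrite the coverage event class-by-class using \cref{eq:pre_set_final,eqn:C_j}: $Y_{n+1}\in \hat{C}^{\fname}_{n,\alpha}(\tilde{X}_{n+1})$ is equivalent to the COLEP non-conformity score $S_{\hat{\pi}_{Y_{n+1}}^{\fname}}(\tilde{X}_{n+1},1)$ being $\le$ the $(1-\alpha)$-quantile of $\{S_{\hat{\pi}_{Y_{n+1}}^{\fname}}(X_i,\mathbb{I}[Y_i=Y_{n+1}])\}$ on the calibration set. Using the monotone structure of the APS-type score (lower $\hat{\pi}$ on a wrong label and higher $\hat{\pi}$ on the true label both shrink scores), \Cref{lem:pc} implies that COLEP shifts the \emph{test} score downward in expectation by $\bm{\epsilon}_{j,1,\mathcal{D}_a}$ (through the $Y=j$ branch on the adversarial test sample), and leaves the \emph{calibration} quantile essentially no worse than the single-model quantile, because on $\mathcal{D}_b$ both $\bm{\epsilon}_{j,0,\mathcal{D}_b}$ (on $Y\ne j$ calibration points, fraction $p_{j0}$) and $\bm{\epsilon}_{j,1,\mathcal{D}_b}$ (on $Y=j$ points, fraction $p_{j1}$) push scores downward, contributing $\sum_{c\in\{0,1\}} p_{jc}\bm{\epsilon}_{j,c,\mathcal{D}_b}$ to the quantile shift.

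Next I would quantify these two shifts with Hoeffding-type concentration. For the test side, the $n_{\mathcal{D}_a}$ adversarial draws produce an empirical coverage indicator whose deviation from its mean is controlled: since $A(\hat{\pi}_j,\mathcal{D}_a)<0.5$, the single-model test score exceeds $0.5$ in expectation, and the effective gap we can exploit is $(0.5-A(\hat{\pi}_j,\mathcal{D}_a))\bm{\epsilon}_{j,1,\mathcal{D}_a}$; Hoeffding yields the factor $\exp\{-2n_{\mathcal{D}_a}(0.5-A(\hat{\pi}_j,\mathcal{D}_a))^2 \bm{\epsilon}^2_{j,1,\mathcal{D}_a}\}$. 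For the calibration side, since the quantile is a function of $n_{\mathcal{D}_b}$ i.i.d.\ benign scores, I would invoke Hoeffding (with a union bound across the $O(n_{\mathcal{D}_b})$ possible quantile order statistics, which contributes the prefactor $n_{\mathcal{D}_b}$) to control the empirical quantile around its expected value, where the effective gap is $(A(\hat{\pi}_j,\mathcal{D}_b)-0.5)\sum_c p_{jc}\bm{\epsilon}_{j,c,\mathcal{D}_b}$ (the accuracy-excess times the expected COLEP shift).

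Finally, I would union-bound the two concentration failures and maximize over the class index $j\in[N_c]$ to arrive at \cref{eq:comp_em}. The main obstacle I anticipate is the calibration side: the $(1-\alpha)$-quantile is a nonlinear functional of the empirical distribution, so turning the per-sample score-shift from \Cref{lem:pc} into a quantile shift requires a careful argument (either DKW-type uniform control, or an explicit comparison of order statistics with a union bound over rank, which explains the linear-in-$n_{\mathcal{D}_b}$ prefactor). The test side is comparatively routine once the APS score structure is laid out, and the assumption $A(\hat{\pi}_j,\mathcal{D}_a)<0.5<A(\hat{\pi}_j,\mathcal{D}_b)$ is exactly what makes both Hoeffding gaps strictly positive so that the bound is nontrivial.
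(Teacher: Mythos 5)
Your high-level architecture matches the paper's: reduce the coverage comparison to a comparison of the test non-conformity score and of the calibration quantile, control each with Hoeffding over $n_{\mathcal{D}_a}$ test draws and $n_{\mathcal{D}_b}$ calibration points respectively (with the union bound over calibration order statistics supplying the $n_{\mathcal{D}_b}$ prefactor), then take the max over $j$. However, there is a genuine error in the mechanism that drives both Hoeffding gaps. You assert that for the APS-type score, ``lower $\hat{\pi}$ on a wrong label and higher $\hat{\pi}$ on the true label both shrink scores,'' and conclude that \Cref{lem:pc} pushes \emph{both} the test score and the calibration scores downward. That monotonicity claim is false for the score used here: in the binary sub-problem for class $j$, the true-label score is $S=\hat{\pi}_j u$ when $\hat{\pi}_j>0.5$ (correct ranking) but $S=1-\hat{\pi}_j(1-u)$ when $\hat{\pi}_j<0.5$, so increasing $\hat{\pi}_j$ on the true label \emph{increases} the score whenever the model already classifies correctly, and symmetrically for the wrong-label branch. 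If your directional claim were taken literally, the calibration quantile would \emph{decrease} along with the test score, the two effects would compete, and no conclusion about coverage would follow.

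The paper's proof resolves this by conditioning on whether the main model classifies correctly, which flips the sign of $\partial S/\partial\hat{\pi}_j$; averaging over the two cases gives $\mathbb{E}[S^{\texttt{COLEP}}-S]=(A(\hat{\pi}_j,\mathcal{D})-0.5)\,\mathbb{E}[\hat{\pi}_j^{\texttt{COLEP}}-\hat{\pi}_j]$. This is exactly where the assumption $A(\hat{\pi}_j,\mathcal{D}_a)<0.5<A(\hat{\pi}_j,\mathcal{D}_b)$ enters: on the adversarial test distribution the net expected score shift is $-(0.5-A(\hat{\pi}_j,\mathcal{D}_a))\,\bm{\epsilon}_{j,1,\mathcal{D}_a}<0$ (test score drops), while on the benign calibration distribution the net shift is $+(A(\hat{\pi}_j,\mathcal{D}_b)-0.5)\sum_c p_{jc}\bm{\epsilon}_{j,c,\mathcal{D}_b}>0$ (calibration scores, hence the quantile, \emph{rise}). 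Both effects then cooperate to enlarge coverage, and the two accuracy-dependent factors appear inside the Hoeffding exponents for that reason --- not because ``the single-model test score exceeds $0.5$ in expectation.'' You reverse-engineered the correct exponents from the theorem statement, but the justification you give for them would not survive being written out, and the calibration-side direction is the step that actually fails.
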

\vspace{-0.7em}
\reb{
\textit{Remarks.}
\Cref{thm:comp2} shows that \name can achieve better marginal coverage than a single model with a high probability exponentially approaching 1.
The probability increases in particular with a higher quality of models represented by {\eqsmall$\bm{\epsilon}_{j,1, \gD_a}$, $\bm{\epsilon}_{j,c, \gD_b},A(\pmainj,\gD_b)$}. 
{\eqsmall$\bm{\epsilon}_{j,1, \gD_a}$, $\bm{\epsilon}_{j,c, \gD_b}$} quantifies the effectiveness of the correction of the reasoning component and is positively correlated with the utility of models $T_{j,\gD}^{(r)},Z_{j,\gD}^{(r)}$ and the utility of knowledge rules $U_j^{(r)}$ as shown in \Cref{lem:pc}, indicating that better knowledge models and rules benefits a higher prediction coverage with \name.
The probability also increases with lower accuracy on the adversarial distribution {\eqsmall$A(\pmainj,\gD_a)$}, indicating \name improves marginal coverage more likely in a stronger adversary setting. 
}

\vspace{-0.2em}
\begin{theorem}[Comparison of Prediction Accuracy of \name and Main Model]
\label{thm:comp_1}
Suppose that we evaluate the expected prediction accuracy of {\eqsmall$\pnamej(\cdot)$} and {\eqsmall$\pmainj(\cdot)$} on {\eqsmall$n$} samples drawn from {\eqsmall$\gD_m$} and denote the prediction accuracy as {\eqsmall$A(\pnamej(\cdot),\gD_{m})$} and {\eqsmall$A(\pmainj(\cdot),\gD_{m})$}.
Then we have:
\vspace{-0.5em}
    \begin{equation}
    \footnotesize
        A(\pnamej(\cdot),\gD_{m}) \ge  A(\pmainj(\cdot),\gD_{m}), ~~~~ \text{w.p.}~~ 1-\hspace{-0.5em} \sum_{\gD\in \{\gD_a,\gD_b\}} \hspace{-0.5em} p_{\gD} \sum_{c \in \{0,1\}} \mathbb{P}_{\gD}\left[Y=j\right] \exp\left\{ -2n(\bm{\epsilon}_{j,c, \gD})^2 \right\}.
    \end{equation}
\end{theorem}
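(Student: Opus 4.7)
The plan is to reduce the desired comparison to a concentration-of-measure argument driven by the per-sample probability shift established in \Cref{lem:pc}. Since prediction accuracy decomposes additively over test samples, the difference $A(\pnamej, \gD_m) - A(\pmainj, \gD_m)$ can be written as the empirical mean of i.i.d.\ per-sample ``improvement'' random variables $\Delta_i$, where $\Delta_i > 0$ indicates that \name gives a more correct prediction on sample $i$ than the main model. Conditioning on the underlying sub-distribution $\gD \in \{\gD_a, \gD_b\}$ and on the label indicator $c = \sI_{[Y_i = j]}$, \Cref{lem:pc} yields $\sE[\Delta_i \mid \gD, c] \ge \bm{\epsilon}_{j,c,\gD}$, because the reasoning component lifts the predicted probability of the true class when $c=1$ and lowers it when $c=0$.

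Next, I would partition the $n$ test samples into the four sub-populations indexed by $(\gD, c) \in \{\gD_a, \gD_b\} \times \{0,1\}$ and apply Hoeffding's inequality within each group. Within a group, the empirical mean of i.i.d.\ bounded summands falls short of its expectation by more than $\bm{\epsilon}_{j,c,\gD}$ with probability at most $\exp\{-2n\,\bm{\epsilon}_{j,c,\gD}^2\}$. Weighting each conditional failure by the group-occurrence probability $p_{\gD} \sP_{\gD}[Y=j]$ via the law of total probability produces the weighted exponential terms in the theorem, and a union bound over the four cells $(\gD, c)$ yields the overall failure bound. On the complementary event, every per-group empirical improvement is nonnegative, so the aggregated difference satisfies $A(\pnamej, \gD_m) - A(\pmainj, \gD_m) \ge 0$.

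The hardest part is converting the conditional probability shift of \Cref{lem:pc} into a bona fide accuracy gain $\Delta_i$: hard $0/1$ accuracy is determined by which side of the $0.5$ decision threshold $\hat{\pi}_j$ lies on, so a generic shift in expected probability need not translate to a shift in the thresholded prediction. I would reconcile this by interpreting accuracy in the soft probabilistic sense $\hat{\pi}_j \sI_{[Y=j]} + (1-\hat{\pi}_j)\sI_{[Y\neq j]}$, which is consistent with the paper's use of predicted class probabilities throughout the conformal prediction pipeline; under this convention the probability shift from \Cref{lem:pc} is exactly the accuracy shift, and the Hoeffding step reduces to concentration of bounded $[0,1]$-valued averages. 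A secondary technical care point is that the $n$ test samples partition into the four sub-populations randomly; applying Hoeffding directly to the weighted summands $\sI_{[(X_i,Y_i)\in(\gD,c)]}\,\Delta_i$ rather than to within-group means cleanly folds the $p_{\gD} \sP_{\gD}[Y=j]$ factors into the tail bound in the form required by the theorem.
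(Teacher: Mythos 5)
Your overall strategy matches the paper's: define the per-sample shift $\Delta_j(X)=\pnamej(X)-\pmainj(X)$, invoke \Cref{lem:pc} to get $\sE[\Delta_j\,|\,Y\neq j]\le -\bm{\epsilon}_{j,0,\gD}$ and $\sE[\Delta_j\,|\,Y=j]\ge \bm{\epsilon}_{j,1,\gD}$, apply Hoeffding's inequality to the empirical averages conditioned on $(\gD,c)$, and combine the four cells with weights $p_{\gD}$ and the class probabilities. The one place you diverge is the step you correctly identify as the hardest: reconciling a probability shift with thresholded $0/1$ accuracy. You resolve it by redefining accuracy in the soft sense $\hat{\pi}_j\sI_{[Y=j]}+(1-\hat{\pi}_j)\sI_{[Y\neq j]}$, which makes the Hoeffding step clean but quietly changes the quantity being bounded — the paper's $A(\cdot,\gD_m)$ is the hard accuracy $\sP[Y\neq j]\sP[\pmainj(X)<0.5\,|\,Y\neq j]+\sP[Y=j]\sP[\pmainj(X)\ge 0.5\,|\,Y=j]$. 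The paper keeps the hard definition and instead exploits a monotonicity observation: on the event that the correction has the right sign (i.e., $\pnamej(X)\le\pmainj(X)$ when $Y\neq j$ and $\pnamej(X)\ge\pmainj(X)$ when $Y=j$), every correct thresholded decision of the main model is preserved by \name, so the hard accuracy cannot decrease; the failure probability is then bounded by the probability of a wrong-signed deviation via the same Hoeffding bounds you use. If you want your argument to prove the theorem as literally stated, you should replace the soft-accuracy reinterpretation with this sign-preservation step; otherwise your proof is of a slightly different (though closely related) statement. Everything else — the $(\gD,c)$ decomposition, the weighting by group-occurrence probabilities, and the union bound — is essentially identical to the paper's proof.
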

\vspace{-1em}

\reb{
\textit{Remarks.} 
\Cref{thm:comp_1} shows that \name achieves better prediction accuracy than the main model with a high probability exponentially approaching 1.
The probability increases with more effectiveness of the reasoning component quantified by a large {\eqsmall$\bm{\epsilon}_{j,c, {\gD}}$}, which positively correlates to the utility of models $T_{j,\gD}^{(r)},Z_{j,\gD}^{(r)}$ and the utility of knowledge rules $U_j^{(r)}$ as shown in \Cref{lem:pc}, indicating that better knowledge models and rules benefits a higher prediction accuracy with \name.
In \Cref{app:add}, we further show that \name achieves higher prediction accuracy with more useful knowledge rules.
}

\vspace{-2.5mm}
\section{Experiments}
\vspace{-3mm}
We evaluate \name on certified conformal prediction in the adversarial setting on various datasets, including GTSRB \citep{stallkamp2012man}, CIFAR-10, and AwA2 \citep{xian2018zero}. 
\reb{
For fair comparisons, we use the same model architecture and parameters in \name and baselines CP \citep{romano2020classification} and RSCP \citep{gendler2022adversarially}. 
}
The desired coverage is set {\eqsmall $0.9$} across evaluations.
\reb{
Note that we leverage randomized smoothing for learning component certification (100k Monte-Carlo sampling) and consider the finite-sample errors of RSCP following \cite{anonymous2023provably} and that of \name following \Cref{thm:thm2_fin}.
We fix weights $w$ as $1.5$ and provide more details in \Cref{app:exp_add}.
}
\vspace{-1.5em}

\reb{
\textbf{Construction of the reasoning component}.
In each PC, we encode a type of implication knowledge with disjoint attributes. In GTSRB, we have a PC of shape knowledge (e.g.,``octagon'', ``square''), a PC of boundary color knowledge (e.g., ``red boundary'', ``black boundary''), and a PC of content knowledge (e.g., ``digit 50'', ``turn left''). Specifically, in the PC of shape knowledge, we can encode the implication rules such as {\eqsmall $\text{IsStopSign} \implies \text{IsOctagon}$} (stop signs are octagon), {\eqsmall $\text{IsSpeedLimit} \implies \text{IsSquare}$} (speed limit signs are square). In summary, we have 3 PCs and 28 knowledge rules in GTSRB, 3 PCs and 30 knowledge rules in CIFAR-10, and 4 PCs and 187 knowledge rules in AwA2. We also provide detailed steps of PC construction in \cref{app:exp_detail}.
}

\begin{figure}[t]
    \centering
    \rotatebox{90}{{\footnotesize Certified Coverage}}
    \hspace{-0.6em}
    \subfigure[GTSRB]{
    \includegraphics[width=0.26\linewidth]{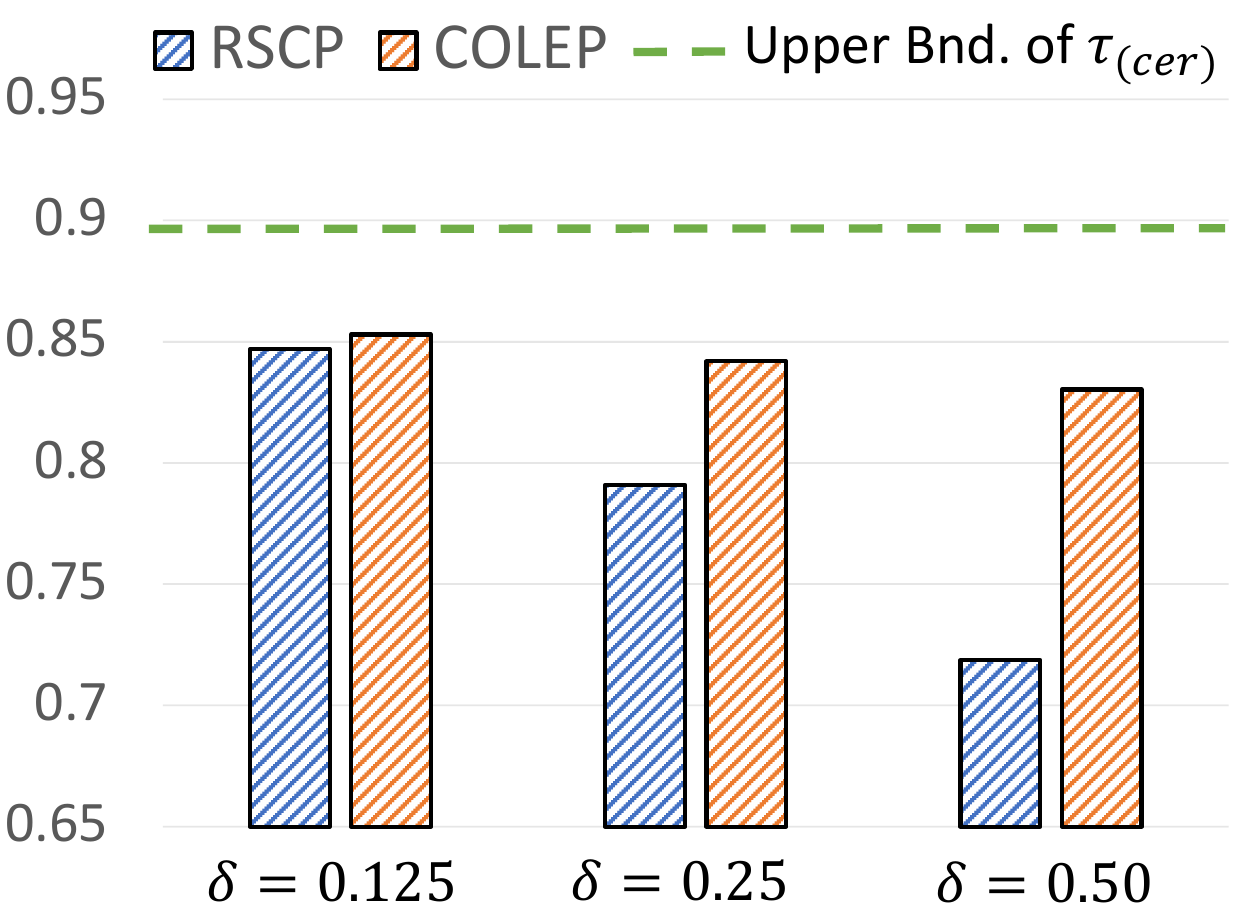}
    \vspace{-1em}
    \label{fig:gtsrb_cer_fin}}\hfill
    \subfigure[CIFAR-10]{
    \includegraphics[width=0.26\linewidth]{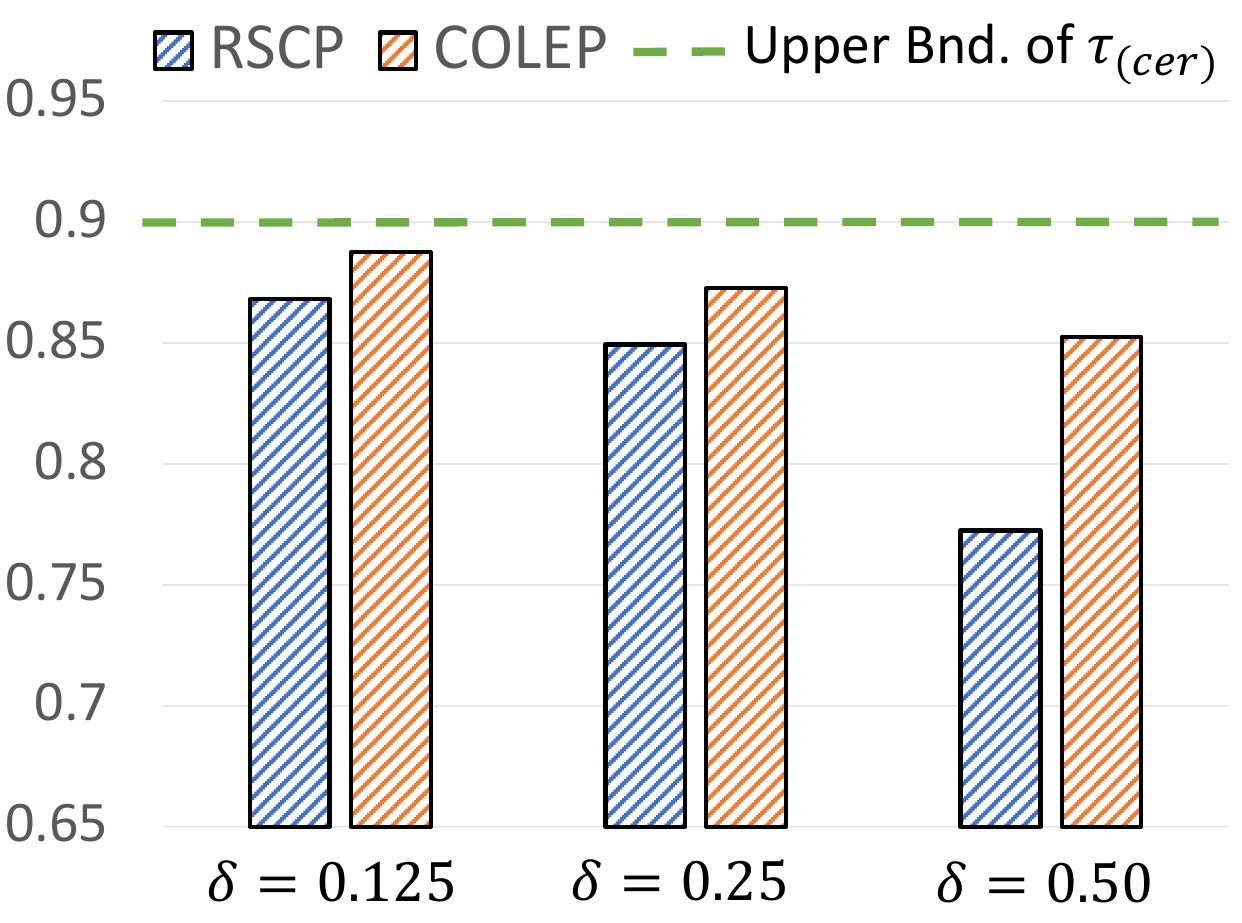}
    \vspace{-1em}
    \label{fig:cifar10_cer_fin}}\hfill
    \subfigure[AwA2]{
    \includegraphics[width=0.26\linewidth]{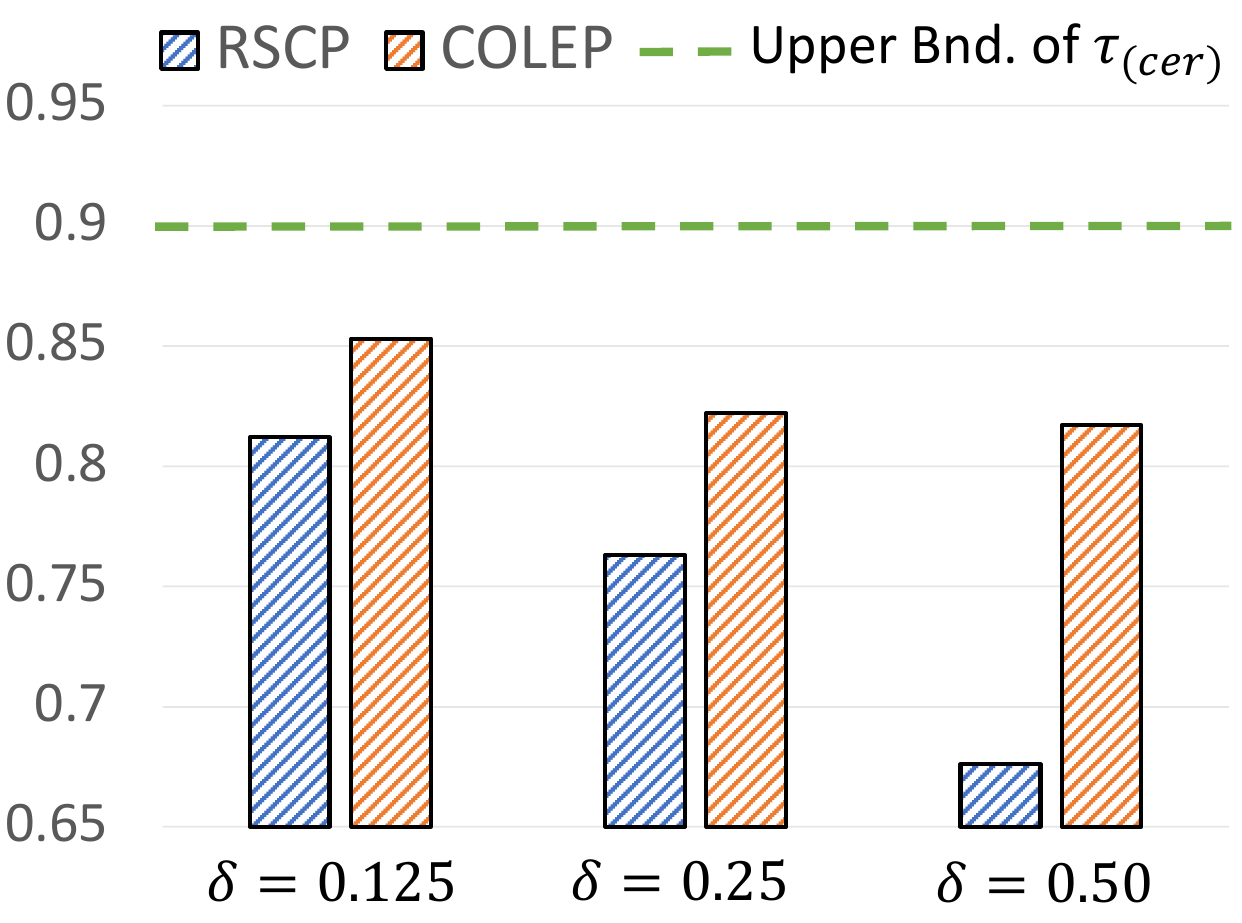} 
    \vspace{-1em}
    \label{fig:awa2_cer_fin}}\hfill
    \vspace{-1.5em}
    \caption{\small
    \reb{
    Comparison of certified coverage between \name ($\tau^{\fname_{\text{cer}}}$) and RSCP under bounded perturbations $\delta=0.125,0.25,0.50$ on GTSRB, CIFAR-10, and AwA2. The upper bound of certified coverage $\tau_{(\text{cer})}$ is 0.9.
    }
    }
    \label{fig:cer}
    \vspace{-1.8em}
\end{figure}
\begin{figure}[t]
    \centering
    \subfigure[GTSRB]{
    \includegraphics[width=0.3\linewidth]{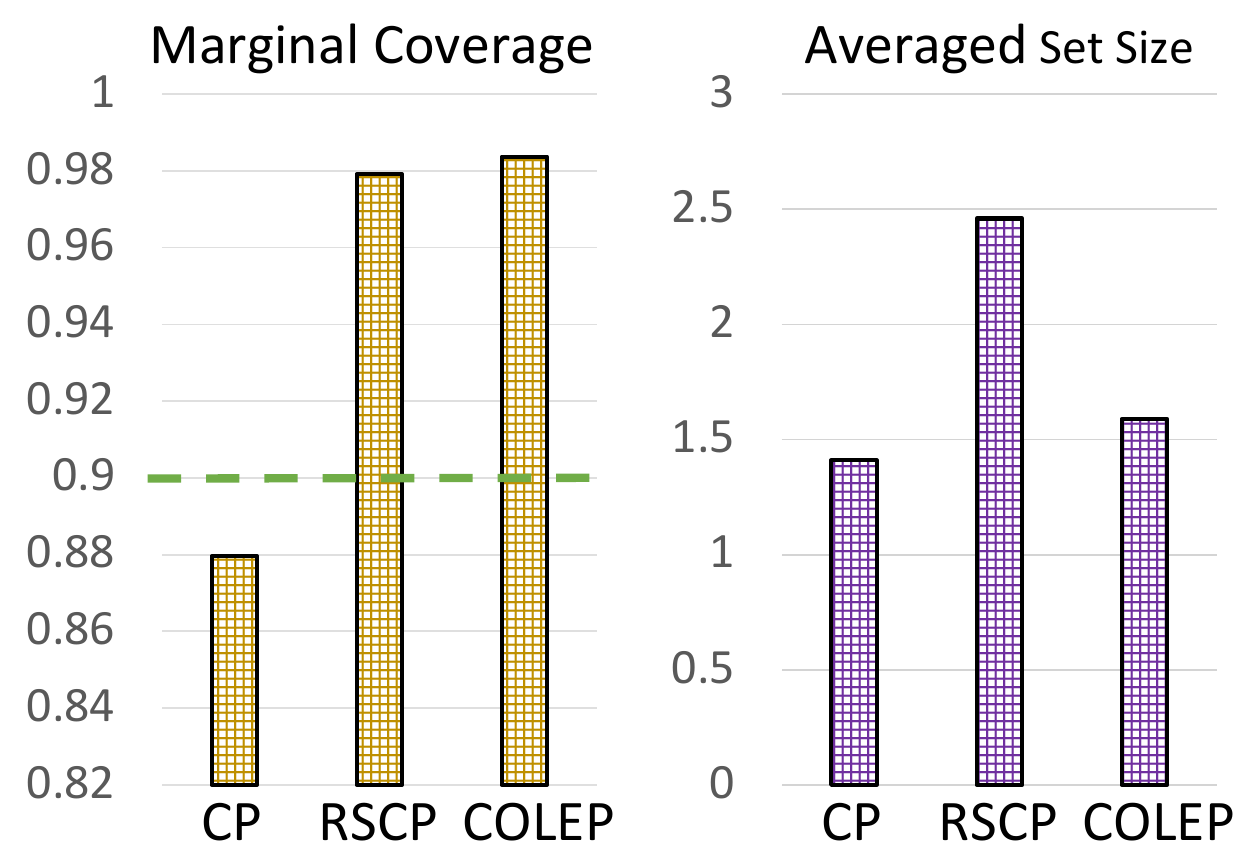}
    \vspace{-1em}
    \label{fig:gtsrb_cer}}\hfill
    \subfigure[CIFAR-10]{
    \includegraphics[width=0.3\linewidth]{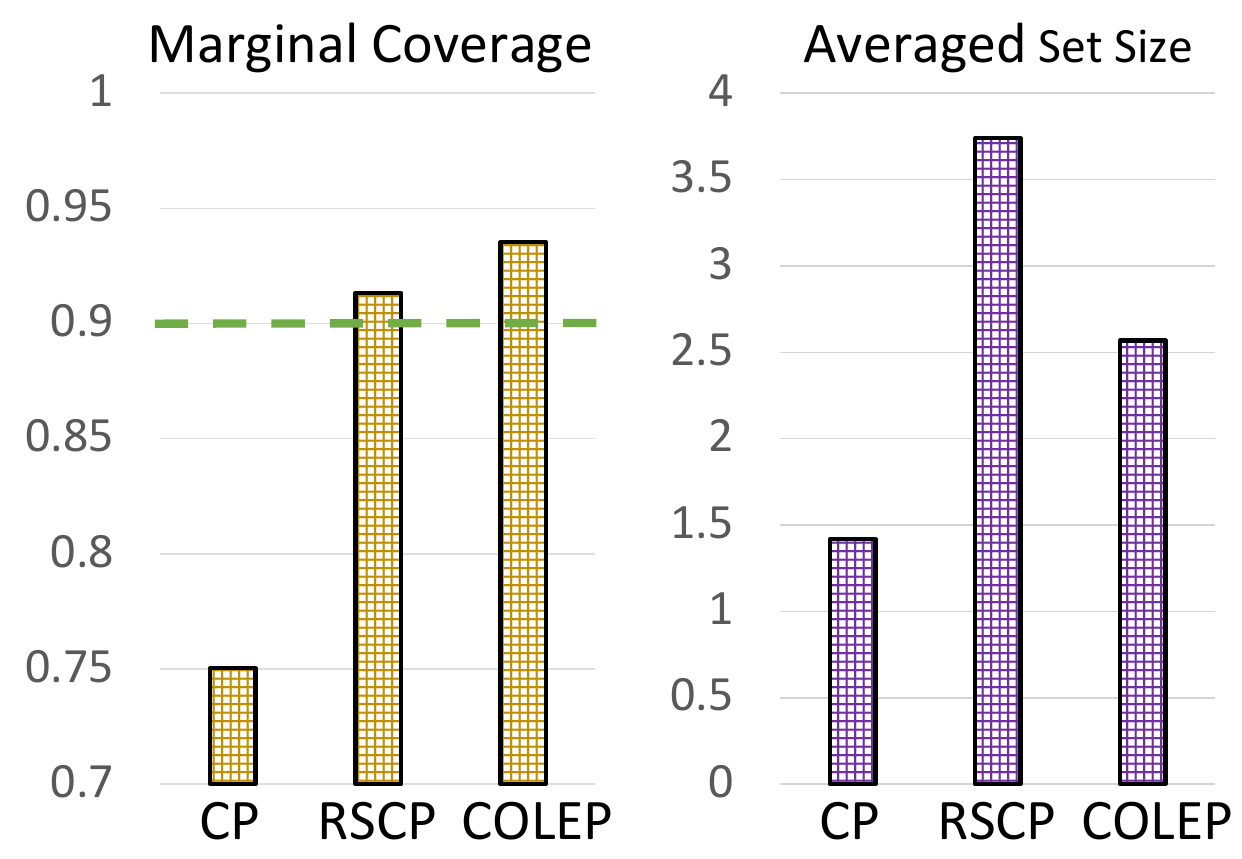}
    \vspace{-1em}
    \label{fig:cifar10_cer}}\hfill
    \subfigure[AwA2]{
    \includegraphics[width=0.3\linewidth]{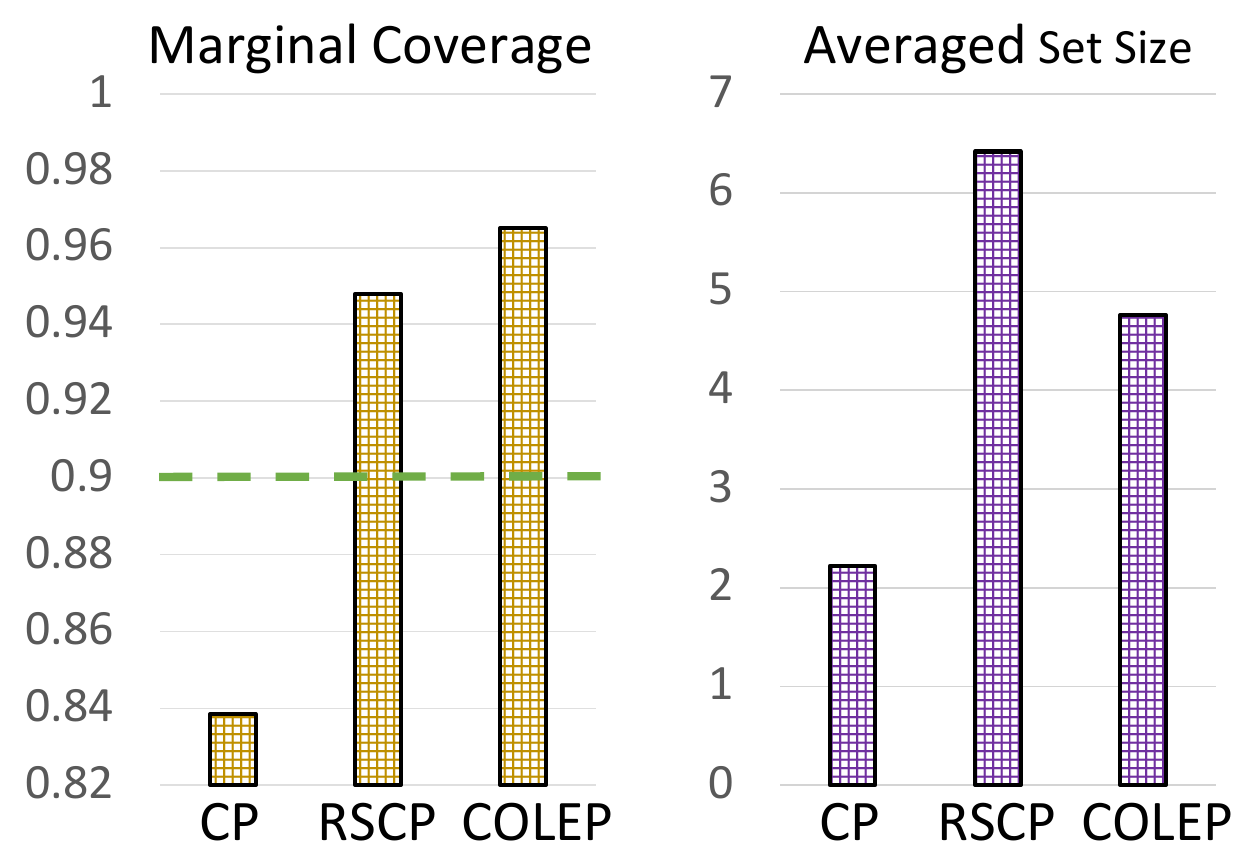} 
    \vspace{-1em}
    \label{fig:awa2_cer}}\hfill
    \vspace{-1.5em}
    \caption{\small
    \reb{
    Comparison of the marginal coverage and averaged set size for CP, RSCP, and \name under PGD attack ($\delta=0.25$) on GTSRB, CIFAR-10, and AwA2. The nominal coverage level (green line) is 0.9. 
    }
    }
    \label{fig:emp}
    \vspace{-1.5em}
\end{figure}

\vspace{-0.5em}
\textbf{Certified Coverage under Bounded Perturbations.} We evaluate the certified coverage 
of \name given a new test sample $X_{n+1}$ with bounded perturbation $\epsilon$ $(|\epsilon|_2 < \delta)$ based on our certification in \Cref{thm:worst_case}. 
Certified coverage indicates the worst-case coverage during inference in the adversary setting, hence a higher certified coverage indicates a more robust conformal prediction and tigher certification.
We compare the certified coverage of \name with the SOTA baseline RSCP (Theorem 2 in \citep{gendler2022adversarially}). The comparison of certified coverage between 
\name and RSCP is provided in \Cref{fig:cer}. The results indicate that \name consistently achieves higher certified coverage under different $\ell_2$ norm bounded perturbations, and \name outperforms RSCP by a large margin under a large perturbation radius $\delta=0.5$.
Note that an obvious upper bound of the certified coverage in the adversary setting is the guaranteed coverage without an adversary, which is $0.9$ in the evaluation.
The closeness between the certified coverage of \name $\tau^{\fname}_{\text{(cer)}}$ and the upper bound of coverage guarantee $\tau_{\text{(cer)}}$ shows the robustness of \name and tightness of our certification.

\vspace{-0.4em}
\textbf{Prediction Coverage and Prediction Set Size under Adversarial Attacks.} We also evaluate the marginal coverage and averaged set size of \name and baselines under adversarial attacks. \name constructs the certifiably robust prediction set following \Cref{thm:cer_set}. We compare the results with the standard conformal prediction (CP) and the SOTA conformal prediction with randomized smoothing (RSCP).
For fair comparisons, we apply PGD attack \citep{madry2018towards} with the same parameters on CP, RSCP, and \name.
\reb{For \name, we consider the adaptive PGD attack against the complete learning-reasoning pipeline.}
The comparison of the marginal coverage and averaged set size for CP, RSCP, and \name under PGD attack ({\eqsmall $\delta=0.25$}) is provided in \Cref{fig:emp}. 
The results indicate that the marginal coverage of CP is below the nominal coverage level $0.9$ under PGD attacks as data exchangeability is violated, while \name still achieves higher marginal coverage than the nominal level, validating the robustness of \name for conformal prediction in the adversary setting.
Compared with RSCP, \name achieves both larger marginal coverage and smaller set size, demonstrating that \name maintains the guaranteed coverage with less inflation of the prediction set.  
The observation validates our theoretical analysis in \Cref{thm:comp2} that \name can achieve better coverage than a single model with the power of kwowledge-enabled logical reasoning. 
\reb{
We provide more evaluations of different coverage levels {\eqsmall $1-\alpha$}, different perturbation bound {\eqsmall $\delta$}, different conformal prediction baselines, and contributions of different knowledge rules in \Cref{app:exp_res}.
}

\vspace{-1.2em}
\paragraph{Conclusion}
In this paper, we present \name, a certifiably robust conformal prediction framework via knowledge-enabled logical reasoning. 
We leverage PCs for efficient reasoning and provide robustness certification. 
We also provide end-to-end certification for finite-sample certified coverage in the presence of adversaries and theoretically prove the advantage of \name over a single model.
\reb{
We included the discussions of limitations, broader impact, and future work in \Cref{app:dissc}.
}

\subsubsection*{Acknowledgments} This work is partially supported by the National Science Foundation under grant No. 1910100, No. 2046726, No. 2229876, DARPA GARD, the National Aeronautics and Space Administration (NASA) under grant No. 80NSSC20M0229, the Alfred P. Sloan Fellowship, the Amazon research award, and the eBay research award.

\subsubsection*{Ethics statement} We do not see potential ethical issues about \name. In contrast, with the power of logical reasoning, \name is a certifiably robust conformal prediction framework against adversaries during the inference time.

\subsubsection*{Reproducibility statement}
The reproducibility of \name span theoretical and experimental perspectives. We provide complete proofs of all the theoretical results in appendices. We provide the source codes for implementing \name at \url{https://github.com/kangmintong/COLEP}.

\bibliography{iclr2024_conference}

\begin{thebibliography}{80}
\providecommand{\natexlab}[1]{#1}
\providecommand{\url}[1]{\texttt{#1}}
\expandafter\ifx\csname urlstyle\endcsname\relax
  \providecommand{\doi}[1]{doi: #1}\else
  \providecommand{\doi}{doi: \begingroup \urlstyle{rm}\Url}\fi

\bibitem[Abramowitz \& Stegun(1948)Abramowitz and
  Stegun]{abramowitz1948handbook}
Milton Abramowitz and Irene~A Stegun.
\newblock \emph{Handbook of mathematical functions with formulas, graphs, and
  mathematical tables}, volume~55.
\newblock US Government printing office, 1948.

\bibitem[Ahmed et~al.(2022)Ahmed, Teso, Chang, Van~den Broeck, and
  Vergari]{ahmed2022semantic}
Kareem Ahmed, Stefano Teso, Kai-Wei Chang, Guy Van~den Broeck, and Antonio
  Vergari.
\newblock Semantic probabilistic layers for neuro-symbolic learning.
\newblock \emph{Advances in Neural Information Processing Systems},
  35:\penalty0 29944--29959, 2022.

\bibitem[Anonymous(2023)]{anonymous2023provably}
Anonymous.
\newblock Provably robust conformal prediction with improved efficiency.
\newblock In \emph{Submitted to The Twelfth International Conference on
  Learning Representations}, 2023.
\newblock URL \url{https://openreview.net/forum?id=BWAhEjXjeG}.
\newblock under review.

\bibitem[Balunovic et~al.(2019)Balunovic, Baader, Singh, Gehr, and
  Vechev]{balunovic2019certifying}
Mislav Balunovic, Maximilian Baader, Gagandeep Singh, Timon Gehr, and Martin
  Vechev.
\newblock Certifying geometric robustness of neural networks.
\newblock In \emph{Advances in Neural Information Processing Systems}, pp.\
  15287--15297, 2019.

\bibitem[Barber et~al.(2022)Barber, Candes, Ramdas, and
  Tibshirani]{barber2022conformal}
Rina~Foygel Barber, Emmanuel~J Candes, Aaditya Ramdas, and Ryan~J Tibshirani.
\newblock Conformal prediction beyond exchangeability.
\newblock \emph{arXiv preprint arXiv:2202.13415}, 2022.

\bibitem[Brown(2003)]{brown2003boolean}
Frank~Markham Brown.
\newblock \emph{Boolean reasoning: the logic of Boolean equations}.
\newblock Courier Corporation, 2003.

\bibitem[Caminhas(2019)]{caminhas2019detecting}
Daniel~D Caminhas.
\newblock Detecting and correcting typing errors in open-domain knowledge
  graphs using semantic representation of entities.
\newblock 2019.

\bibitem[Cao et~al.(2021)Cao, Wang, Xiao, Yang, Fang, Yang, Chen, Liu, and
  Li]{cao2021invisible}
Yulong Cao, Ningfei Wang, Chaowei Xiao, Dawei Yang, Jin Fang, Ruigang Yang,
  Qi~Alfred Chen, Mingyan Liu, and Bo~Li.
\newblock Invisible for both camera and lidar: Security of multi-sensor fusion
  based perception in autonomous driving under physical-world attacks.
\newblock In \emph{2021 IEEE Symposium on Security and Privacy (SP)}, pp.\
  176--194. IEEE, 2021.

\bibitem[Chen et~al.(2020{\natexlab{a}})Chen, Chen, Horrocks, B.~Myklebust, and
  Jimenez-Ruiz]{chen2020correcting}
Jiaoyan Chen, Xi~Chen, Ian Horrocks, Erik B.~Myklebust, and Ernesto
  Jimenez-Ruiz.
\newblock Correcting knowledge base assertions.
\newblock In \emph{Proceedings of the Web Conference 2020}, pp.\  1537--1547,
  2020{\natexlab{a}}.

\bibitem[Chen et~al.(2018)Chen, Xiang, Xue, Li, Borisov, Kaarfar, and
  Zhu]{chen2018differentially}
Qingrong Chen, Chong Xiang, Minhui Xue, Bo~Li, Nikita Borisov, Dali Kaarfar,
  and Haojin Zhu.
\newblock Differentially private data generative models.
\newblock \emph{arXiv preprint arXiv:1812.02274}, 2018.

\bibitem[Chen et~al.(2020{\natexlab{b}})Chen, Jia, and Xiang]{chen2020review}
Xiaojun Chen, Shengbin Jia, and Yang Xiang.
\newblock A review: Knowledge reasoning over knowledge graph.
\newblock \emph{Expert Systems with Applications}, 141:\penalty0 112948,
  2020{\natexlab{b}}.

\bibitem[Choi \& Darwiche(2017)Choi and Darwiche]{choi2017relaxing}
Arthur Choi and Adnan Darwiche.
\newblock On relaxing determinism in arithmetic circuits.
\newblock In \emph{International Conference on Machine Learning}, pp.\
  825--833. PMLR, 2017.

\bibitem[Cohen et~al.(2019)Cohen, Rosenfeld, and Kolter]{cohen2019certified}
Jeremy Cohen, Elan Rosenfeld, and Zico Kolter.
\newblock Certified adversarial robustness via randomized smoothing.
\newblock In \emph{international conference on machine learning}, pp.\
  1310--1320. PMLR, 2019.

\bibitem[Croce \& Hein(2020)Croce and Hein]{croce2020reliable}
Francesco Croce and Matthias Hein.
\newblock Reliable evaluation of adversarial robustness with an ensemble of
  diverse parameter-free attacks.
\newblock In \emph{International conference on machine learning}, pp.\
  2206--2216. PMLR, 2020.

\bibitem[Darwiche(1999)]{darwiche1999compiling}
Adnan Darwiche.
\newblock Compiling knowledge into decomposable negation normal form.
\newblock In \emph{IJCAI}, volume~99, pp.\  284--289. Citeseer, 1999.

\bibitem[Darwiche(2001)]{darwiche2001decomposable}
Adnan Darwiche.
\newblock Decomposable negation normal form.
\newblock \emph{Journal of the ACM (JACM)}, 48\penalty0 (4):\penalty0 608--647,
  2001.

\bibitem[Darwiche(2002)]{darwiche2002logical}
Adnan Darwiche.
\newblock A logical approach to factoring belief networks.
\newblock \emph{KR}, 2:\penalty0 409--420, 2002.

\bibitem[Darwiche(2003)]{darwiche2003differential}
Adnan Darwiche.
\newblock A differential approach to inference in bayesian networks.
\newblock \emph{Journal of the ACM (JACM)}, 50\penalty0 (3):\penalty0 280--305,
  2003.

\bibitem[Eykholt et~al.(2018)Eykholt, Evtimov, Fernandes, Li, Rahmati, Xiao,
  Prakash, Kohno, and Song]{eykholt2018robust}
Kevin Eykholt, Ivan Evtimov, Earlence Fernandes, Bo~Li, Amir Rahmati, Chaowei
  Xiao, Atul Prakash, Tadayoshi Kohno, and Dawn Song.
\newblock Robust physical-world attacks on deep learning visual classification.
\newblock In \emph{Proceedings of the IEEE conference on computer vision and
  pattern recognition}, pp.\  1625--1634, 2018.

\bibitem[Garcez et~al.(2022)Garcez, Bader, Bowman, Lamb, de~Penning, Illuminoo,
  Poon, and Zaverucha]{garcez2022neural}
Artur~d’Avila Garcez, Sebastian Bader, Howard Bowman, Luis~C Lamb, Leo
  de~Penning, BV~Illuminoo, Hoifung Poon, and COPPE~Gerson Zaverucha.
\newblock Neural-symbolic learning and reasoning: A survey and interpretation.
\newblock \emph{Neuro-Symbolic Artificial Intelligence: The State of the Art},
  342\penalty0 (1):\penalty0 327, 2022.

\bibitem[Gendler et~al.(2022)Gendler, Weng, Daniel, and
  Romano]{gendler2022adversarially}
Asaf Gendler, Tsui-Wei Weng, Luca Daniel, and Yaniv Romano.
\newblock Adversarially robust conformal prediction.
\newblock In \emph{International Conference on Learning Representations}, 2022.
\newblock URL \url{https://openreview.net/forum?id=9L1BsI4wP1H}.

\bibitem[Ghosh et~al.(2023)Ghosh, Shi, Belkhouja, Yan, Doppa, and
  Jones]{ghosh2023probabilistically}
Subhankar Ghosh, Yuanjie Shi, Taha Belkhouja, Yan Yan, Jana Doppa, and Brian
  Jones.
\newblock Probabilistically robust conformal prediction.
\newblock In \emph{Uncertainty in Artificial Intelligence}, pp.\  681--690.
  PMLR, 2023.

\bibitem[Goodfellow et~al.(2014)Goodfellow, Shlens, and
  Szegedy]{goodfellow2014explaining}
Ian~J Goodfellow, Jonathon Shlens, and Christian Szegedy.
\newblock Explaining and harnessing adversarial examples.
\newblock \emph{arXiv preprint arXiv:1412.6572}, 2014.

\bibitem[Gowal et~al.(2018)Gowal, Dvijotham, Stanforth, Bunel, Qin, Uesato,
  Arandjelovic, Mann, and Kohli]{gowal2018effectiveness}
Sven Gowal, Krishnamurthy Dvijotham, Robert Stanforth, Rudy Bunel, Chongli Qin,
  Jonathan Uesato, Relja Arandjelovic, Timothy Mann, and Pushmeet Kohli.
\newblock On the effectiveness of interval bound propagation for training
  verifiably robust models.
\newblock \emph{arXiv preprint arXiv:1810.12715}, 2018.

\bibitem[G\"urel et~al.(2021)G\"urel, Qi, Rimanic, Zhang, and
  Li]{grel2021knowledge}
Nezihe~Merve G\"urel, Xiangyu Qi, Luka Rimanic, Ce~Zhang, and Bo~Li.
\newblock Knowledge-enhanced machine learning pipeline against diverse
  adversarial attacks.
\newblock In \emph{International Conference on Machine Learning}, 2021.

\bibitem[He et~al.(2016)He, Zhang, Ren, and Sun]{He_2016_CVPR}
Kaiming He, Xiangyu Zhang, Shaoqing Ren, and Jian Sun.
\newblock Deep residual learning for image recognition.
\newblock In \emph{Proceedings of the IEEE Conference on Computer Vision and
  Pattern Recognition (CVPR)}, June 2016.

\bibitem[Hitzler \& Sarker(2022)Hitzler and Sarker]{hitzler2022tractable}
P~Hitzler and MK~Sarker.
\newblock Tractable boolean and arithmetic circuits.
\newblock \emph{Neuro-Symbolic Artificial Intelligence: The State of the Art},
  342:\penalty0 146, 2022.

\bibitem[Jin et~al.(2023)Jin, Ren, and Cand{\`e}s]{jin2023sensitivity}
Ying Jin, Zhimei Ren, and Emmanuel~J Cand{\`e}s.
\newblock Sensitivity analysis of individual treatment effects: A robust
  conformal inference approach.
\newblock \emph{Proceedings of the National Academy of Sciences}, 120\penalty0
  (6):\penalty0 e2214889120, 2023.

\bibitem[Kang et~al.(2022)Kang, Li, Weber, Liu, Zhang, and
  Li]{kang2022certifying}
Mintong Kang, Linyi Li, Maurice Weber, Yang Liu, Ce~Zhang, and Bo~Li.
\newblock Certifying some distributional fairness with subpopulation
  decomposition.
\newblock \emph{Advances in Neural Information Processing Systems},
  35:\penalty0 31045--31058, 2022.

\bibitem[Kang et~al.(2023{\natexlab{a}})Kang, Li, Zhu, Lu, Fishman, Yuille, and
  Zhou]{kang2023label}
Mintong Kang, Bowen Li, Zengle Zhu, Yongyi Lu, Elliot~K Fishman, Alan Yuille,
  and Zongwei Zhou.
\newblock Label-assemble: Leveraging multiple datasets with partial labels.
\newblock In \emph{2023 IEEE 20th International Symposium on Biomedical Imaging
  (ISBI)}, pp.\  1--5. IEEE, 2023{\natexlab{a}}.

\bibitem[Kang et~al.(2023{\natexlab{b}})Kang, Li, and Li]{kang2023fashapley}
Mintong Kang, Linyi Li, and Bo~Li.
\newblock Fashapley: Fast and approximated shapley based model pruning towards
  certifiably robust dnns.
\newblock In \emph{2023 IEEE Conference on Secure and Trustworthy Machine
  Learning (SaTML)}, pp.\  575--592. IEEE, 2023{\natexlab{b}}.

\bibitem[Kang et~al.(2023{\natexlab{c}})Kang, Lin, Sun, Xiao, and
  Li]{kang2023certifiably}
Mintong Kang, Zhen Lin, Jimeng Sun, Cao Xiao, and Bo~Li.
\newblock Certifiably byzantine-robust federated conformal prediction.
\newblock 2023{\natexlab{c}}.

\bibitem[Kang et~al.(2024{\natexlab{a}})Kang, G{\"u}rel, Yu, Song, and
  Li]{kang2024c}
Mintong Kang, Nezihe~Merve G{\"u}rel, Ning Yu, Dawn Song, and Bo~Li.
\newblock C-rag: Certified generation risks for retrieval-augmented language
  models.
\newblock \emph{arXiv preprint arXiv:2402.03181}, 2024{\natexlab{a}}.

\bibitem[Kang et~al.(2024{\natexlab{b}})Kang, Song, and Li]{kang2024diffattack}
Mintong Kang, Dawn Song, and Bo~Li.
\newblock Diffattack: Evasion attacks against diffusion-based adversarial
  purification.
\newblock \emph{Advances in Neural Information Processing Systems}, 36,
  2024{\natexlab{b}}.

\bibitem[Kisa et~al.(2014)Kisa, Van~den Broeck, Choi, and
  Darwiche]{kisa2014probabilistic}
Doga Kisa, Guy Van~den Broeck, Arthur Choi, and Adnan Darwiche.
\newblock Probabilistic sentential decision diagrams.
\newblock In \emph{Proceedings of the 14th international conference on
  principles of knowledge representation and reasoning (KR)}, pp.\  1--10,
  2014.

\bibitem[Kumar et~al.(2022)Kumar, Levine, and Feizi]{kumar2021policy}
Aounon Kumar, Alexander Levine, and Soheil Feizi.
\newblock Policy smoothing for provably robust reinforcement learning.
\newblock In \emph{International Conference on Learning Representations}, 2022.

\bibitem[Lecuyer et~al.(2019)Lecuyer, Atlidakis, Geambasu, Hsu, and
  Jana]{lecuyer2019certified}
Mathias Lecuyer, Vaggelis Atlidakis, Roxana Geambasu, Daniel Hsu, and Suman
  Jana.
\newblock Certified robustness to adversarial examples with differential
  privacy.
\newblock In \emph{2019 IEEE Symposium on Security and Privacy (SP)}, pp.\
  656--672. IEEE, 2019.

\bibitem[Lei et~al.(2013)Lei, Robins, and Wasserman]{lei2013distribution}
Jing Lei, James Robins, and Larry Wasserman.
\newblock Distribution-free prediction sets.
\newblock \emph{Journal of the American Statistical Association}, 108\penalty0
  (501):\penalty0 278--287, 2013.

\bibitem[Lei et~al.(2018)Lei, G’Sell, Rinaldo, Tibshirani, and
  Wasserman]{lei2018distribution}
Jing Lei, Max G’Sell, Alessandro Rinaldo, Ryan~J Tibshirani, and Larry
  Wasserman.
\newblock Distribution-free predictive inference for regression.
\newblock \emph{Journal of the American Statistical Association}, 113\penalty0
  (523):\penalty0 1094--1111, 2018.

\bibitem[Li et~al.(2017)Li, Roundy, Gates, and Vorobeychik]{li2017large}
Bo~Li, Kevin Roundy, Chris Gates, and Yevgeniy Vorobeychik.
\newblock Large-scale identification of malicious singleton files.
\newblock In \emph{Proceedings of the seventh ACM on conference on data and
  application security and privacy}, pp.\  227--238, 2017.

\bibitem[Li et~al.(2019)Li, Zhong, Li, and Xie]{li2019robustra}
Linyi Li, Zexuan Zhong, Bo~Li, and Tao Xie.
\newblock Robustra: Training provable robust neural networks over reference
  adversarial space.
\newblock In \emph{IJCAI}, pp.\  4711--4717, 2019.

\bibitem[Li et~al.(2021)Li, Weber, Xu, Rimanic, Kailkhura, Xie, Zhang, and
  Li]{li2021tss}
Linyi Li, Maurice Weber, Xiaojun Xu, Luka Rimanic, Bhavya Kailkhura, Tao Xie,
  Ce~Zhang, and Bo~Li.
\newblock {TSS}: Transformation-specific smoothing for robustness
  certification.
\newblock In \emph{Proceedings of the 2021 ACM SIGSAC Conference on Computer
  and Communications Security}, pp.\  535–557, 2021.
\newblock ISBN 9781450384544.

\bibitem[Li et~al.(2022{\natexlab{a}})Li, Zhang, Xie, and Li]{li2022double}
Linyi Li, Jiawei Zhang, Tao Xie, and Bo~Li.
\newblock Double sampling randomized smoothing.
\newblock In \emph{International Conference on Machine Learning},
  2022{\natexlab{a}}.

\bibitem[Li et~al.(2023)Li, Xie, and Li]{li2023sok}
Linyi Li, Tao Xie, and Bo~Li.
\newblock Sok: Certified robustness for deep neural networks.
\newblock In \emph{44th {IEEE} Symposium on Security and Privacy, {SP} 2023,
  San Francisco, CA, USA, 22-26 May 2023}. {IEEE}, 2023.

\bibitem[Li et~al.(2022{\natexlab{b}})Li, Choi, Chung, Kushman, Schrittwieser,
  Leblond, Eccles, Keeling, Gimeno, Dal~Lago, et~al.]{li2022competition}
Yujia Li, David Choi, Junyoung Chung, Nate Kushman, Julian Schrittwieser,
  R{\'e}mi Leblond, Tom Eccles, James Keeling, Felix Gimeno, Agustin Dal~Lago,
  et~al.
\newblock Competition-level code generation with alphacode.
\newblock \emph{Science}, 378\penalty0 (6624):\penalty0 1092--1097,
  2022{\natexlab{b}}.

\bibitem[Madry et~al.(2018)Madry, Makelov, Schmidt, Tsipras, and
  Vladu]{madry2018towards}
Aleksander Madry, Aleksandar Makelov, Ludwig Schmidt, Dimitris Tsipras, and
  Adrian Vladu.
\newblock Towards deep learning models resistant to adversarial attacks.
\newblock In \emph{International Conference on Learning Representations}, 2018.
\newblock URL \url{https://openreview.net/forum?id=rJzIBfZAb}.

\bibitem[Manhaeve et~al.(2018)Manhaeve, Dumancic, Kimmig, Demeester, and
  De~Raedt]{manhaeve2018deepproblog}
Robin Manhaeve, Sebastijan Dumancic, Angelika Kimmig, Thomas Demeester, and Luc
  De~Raedt.
\newblock Deepproblog: Neural probabilistic logic programming.
\newblock \emph{Advances in neural information processing systems}, 31, 2018.

\bibitem[Massart(1990)]{massart1990tight}
Pascal Massart.
\newblock The tight constant in the dvoretzky-kiefer-wolfowitz inequality.
\newblock \emph{The annals of Probability}, pp.\  1269--1283, 1990.

\bibitem[Melo \& Paulheim(2017)Melo and Paulheim]{melo2017approach}
Andr{\'e} Melo and Heiko Paulheim.
\newblock An approach to correction of erroneous links in knowledge graphs.
\newblock In \emph{CEUR Workshop Proceedings}, volume 2065, pp.\  54--57. RWTH
  Aachen, 2017.

\bibitem[Pan et~al.(2019)Pan, Xiao, He, Yang, Peng, Sun, Yi, Yang, Liu, Li,
  et~al.]{pan2019characterizing}
Xinlei Pan, Chaowei Xiao, Warren He, Shuang Yang, Jian Peng, Mingjie Sun,
  Jinfeng Yi, Zijiang Yang, Mingyan Liu, Bo~Li, et~al.
\newblock Characterizing attacks on deep reinforcement learning.
\newblock \emph{AISTATS}, 2019.

\bibitem[Ray~Chaudhury et~al.(2022)Ray~Chaudhury, Li, Kang, Li, and
  Mehta]{ray2022fairness}
Bhaskar Ray~Chaudhury, Linyi Li, Mintong Kang, Bo~Li, and Ruta Mehta.
\newblock Fairness in federated learning via core-stability.
\newblock \emph{Advances in neural information processing systems},
  35:\penalty0 5738--5750, 2022.

\bibitem[Richardson \& Domingos(2006)Richardson and
  Domingos]{richardson2006markov}
Matthew Richardson and Pedro Domingos.
\newblock Markov logic networks.
\newblock \emph{Machine learning}, 62:\penalty0 107--136, 2006.

\bibitem[Romano et~al.(2020)Romano, Sesia, and
  Candes]{romano2020classification}
Yaniv Romano, Matteo Sesia, and Emmanuel Candes.
\newblock Classification with valid and adaptive coverage.
\newblock In H.~Larochelle, M.~Ranzato, R.~Hadsell, M.F. Balcan, and H.~Lin
  (eds.), \emph{Advances in Neural Information Processing Systems}, volume~33,
  pp.\  3581--3591. Curran Associates, Inc., 2020.
\newblock URL
  \url{https://proceedings.neurips.cc/paper/2020/file/244edd7e85dc81602b7615cd705545f5-Paper.pdf}.

\bibitem[Rooshenas \& Lowd(2014)Rooshenas and Lowd]{pmlr-v32-rooshenas14}
Amirmohammad Rooshenas and Daniel Lowd.
\newblock Learning sum-product networks with direct and indirect variable
  interactions.
\newblock In Eric~P. Xing and Tony Jebara (eds.), \emph{Proceedings of the 31st
  International Conference on Machine Learning}, volume~32 of \emph{Proceedings
  of Machine Learning Research}, pp.\  710--718, Bejing, China, 22--24 Jun
  2014. PMLR.
\newblock URL \url{https://proceedings.mlr.press/v32/rooshenas14.html}.

\bibitem[Salman et~al.(2019)Salman, Li, Razenshteyn, Zhang, Zhang, Bubeck, and
  Yang]{salman2019provably}
Hadi Salman, Jerry Li, Ilya Razenshteyn, Pengchuan Zhang, Huan Zhang, Sebastien
  Bubeck, and Greg Yang.
\newblock Provably robust deep learning via adversarially trained smoothed
  classifiers.
\newblock In \emph{Advances in Neural Information Processing Systems}, pp.\
  11289--11300, 2019.

\bibitem[Shafer \& Vovk(2008)Shafer and Vovk]{shafer2008tutorial}
Glenn Shafer and Vladimir Vovk.
\newblock A tutorial on conformal prediction.
\newblock \emph{Journal of Machine Learning Research}, 9\penalty0 (3), 2008.

\bibitem[Singla et~al.(2022)Singla, Singla, and Feizi]{singla2021householder}
Sahil Singla, Surbhi Singla, and Soheil Feizi.
\newblock Improved deterministic l2 robustness on {CIFAR}-10 and {CIFAR}-100.
\newblock In \emph{International Conference on Learning Representations}, 2022.

\bibitem[Solari \& Djordjilovi{\'c}(2022)Solari and
  Djordjilovi{\'c}]{solari2022multi}
Aldo Solari and Vera Djordjilovi{\'c}.
\newblock Multi split conformal prediction.
\newblock \emph{Statistics \& Probability Letters}, 184:\penalty0 109395, 2022.

\bibitem[Stallkamp et~al.(2012)Stallkamp, Schlipsing, Salmen, and
  Igel]{stallkamp2012man}
Johannes Stallkamp, Marc Schlipsing, Jan Salmen, and Christian Igel.
\newblock Man vs. computer: Benchmarking machine learning algorithms for
  traffic sign recognition.
\newblock \emph{Neural networks}, 32:\penalty0 323--332, 2012.

\bibitem[Szegedy et~al.(2014)Szegedy, Zaremba, Sutskever, Bruna, Erhan,
  Goodfellow, and Fergus]{intriguing2014szegedy}
Christian Szegedy, Wojciech Zaremba, Ilya Sutskever, Joan Bruna, Dumitru Erhan,
  Ian~J. Goodfellow, and Rob Fergus.
\newblock Intriguing properties of neural networks.
\newblock In Yoshua Bengio and Yann LeCun (eds.), \emph{2nd International
  Conference on Learning Representations, {ICLR} 2014, Banff, AB, Canada, April
  14-16, 2014, Conference Track Proceedings}, 2014.

\bibitem[Tsuzuku et~al.(2018)Tsuzuku, Sato, and Sugiyama]{tsuzuku2018lipschitz}
Yusuke Tsuzuku, Issei Sato, and Masashi Sugiyama.
\newblock Lipschitz-margin training: scalable certification of perturbation
  invariance for deep neural networks.
\newblock In \emph{Advances in Neural Information Processing Systems}, 2018.

\bibitem[Vaswani et~al.(2017)Vaswani, Shazeer, Parmar, Uszkoreit, Jones, Gomez,
  Kaiser, and Polosukhin]{vaswani2017attention}
Ashish Vaswani, Noam Shazeer, Niki Parmar, Jakob Uszkoreit, Llion Jones,
  Aidan~N Gomez, {\L}ukasz Kaiser, and Illia Polosukhin.
\newblock Attention is all you need.
\newblock \emph{Advances in neural information processing systems}, 30, 2017.

\bibitem[Vovk et~al.(2005)Vovk, Gammerman, and Shafer]{vovk2005algorithmic}
Vladimir Vovk, Alexander Gammerman, and Glenn Shafer.
\newblock \emph{Algorithmic learning in a random world}, volume~29.
\newblock Springer, 2005.

\bibitem[Vovk et~al.(1999)Vovk, Gammerman, and Saunders]{vovk1999machine}
Volodya Vovk, Alexander Gammerman, and Craig Saunders.
\newblock Machine-learning applications of algorithmic randomness.
\newblock 1999.

\bibitem[Wang et~al.(2023)Wang, Chen, Pei, Xie, Kang, Zhang, Xu, Xiong, Dutta,
  Schaeffer, et~al.]{wang2023decodingtrust}
Boxin Wang, Weixin Chen, Hengzhi Pei, Chulin Xie, Mintong Kang, Chenhui Zhang,
  Chejian Xu, Zidi Xiong, Ritik Dutta, Rylan Schaeffer, et~al.
\newblock Decodingtrust: A comprehensive assessment of trustworthiness in gpt
  models.
\newblock \emph{arXiv preprint arXiv:2306.11698}, 2023.

\bibitem[Wong \& Kolter(2018)Wong and Kolter]{wong2018provable}
Eric Wong and Zico Kolter.
\newblock Provable defenses against adversarial examples via the convex outer
  adversarial polytope.
\newblock In \emph{International Conference on Machine Learning}, pp.\
  5286--5295, 2018.

\bibitem[Wu et~al.(2022{\natexlab{a}})Wu, Li, Huang, Vorobeychik, Zhao, and
  Li]{wu2021crop}
Fan Wu, Linyi Li, Zijian Huang, Yevgeniy Vorobeychik, Ding Zhao, and Bo~Li.
\newblock {CROP}: Certifying robust policies for reinforcement learning through
  functional smoothing.
\newblock In \emph{International Conference on Learning Representations},
  2022{\natexlab{a}}.

\bibitem[Wu et~al.(2022{\natexlab{b}})Wu, Li, Xu, Zhang, Kailkhura, Kenthapadi,
  Zhao, and Li]{wu2022copa}
Fan Wu, Linyi Li, Chejian Xu, Huan Zhang, Bhavya Kailkhura, Krishnaram
  Kenthapadi, Ding Zhao, and Bo~Li.
\newblock Copa: Certifying robust policies for offline reinforcement learning
  against poisoning attacks.
\newblock In \emph{International Conference on Learning Representations},
  2022{\natexlab{b}}.

\bibitem[Xian et~al.(2018)Xian, Lampert, Schiele, and Akata]{xian2018zero}
Yongqin Xian, Christoph~H Lampert, Bernt Schiele, and Zeynep Akata.
\newblock Zero-shot learning—a comprehensive evaluation of the good, the bad
  and the ugly.
\newblock \emph{IEEE transactions on pattern analysis and machine
  intelligence}, 41\penalty0 (9):\penalty0 2251--2265, 2018.

\bibitem[Xiao et~al.(2018)Xiao, Zhu, Li, He, Liu, and Song]{xiao2018spatially}
Chaowei Xiao, Jun~Yan Zhu, Bo~Li, Warren He, Mingyan Liu, and Dawn Song.
\newblock Spatially transformed adversarial examples.
\newblock In \emph{6th International Conference on Learning Representations,
  ICLR 2018}, 2018.

\bibitem[Xie et~al.(2021)Xie, Chen, Chen, and Li]{xie2021crfl}
Chulin Xie, Minghao Chen, Pin-Yu Chen, and Bo~Li.
\newblock Crfl: Certifiably robust federated learning against backdoor attacks.
\newblock In \emph{International Conference on Machine Learning}, pp.\
  11372--11382. PMLR, 2021.

\bibitem[Xie et~al.(2022)Xie, Long, Chen, and Li]{xie2022uncovering}
Chulin Xie, Yunhui Long, Pin-Yu Chen, and Bo~Li.
\newblock Uncovering the connection between differential privacy and certified
  robustness of federated learning against poisoning attacks.
\newblock \emph{arXiv preprint arXiv:2209.04030}, 2022.

\bibitem[Xu et~al.(2022{\natexlab{a}})Xu, Ding, Lyu, Liu, Wang, He, Hu, Zhao,
  and Li]{xu2022safebench}
Chejian Xu, Wenhao Ding, Weijie Lyu, Zuxin Liu, Shuai Wang, Yihan He, Hanjiang
  Hu, Ding Zhao, and Bo~Li.
\newblock Safebench: A benchmarking platform for safety evaluation of
  autonomous vehicles.
\newblock \emph{Advances in Neural Information Processing Systems},
  35:\penalty0 25667--25682, 2022{\natexlab{a}}.

\bibitem[Xu et~al.(2018)Xu, Zhang, Friedman, Liang, and Broeck]{xu2018semantic}
Jingyi Xu, Zilu Zhang, Tal Friedman, Yitao Liang, and Guy Broeck.
\newblock A semantic loss function for deep learning with symbolic knowledge.
\newblock In \emph{International conference on machine learning}, pp.\
  5502--5511. PMLR, 2018.

\bibitem[Xu et~al.(2022{\natexlab{b}})Xu, Li, and Li]{xu2022lot}
Xiaojun Xu, Linyi Li, and Bo~Li.
\newblock Lot: Layer-wise orthogonal training on improving l2 certified
  robustness.
\newblock In \emph{Advances in Neural Information Processing Systems},
  2022{\natexlab{b}}.

\bibitem[Yang \& Kuchibhotla(2021)Yang and Kuchibhotla]{yang2021finite}
Yachong Yang and Arun~Kumar Kuchibhotla.
\newblock Finite-sample efficient conformal prediction.
\newblock \emph{arXiv preprint arXiv:2104.13871}, 2021.

\bibitem[Yang et~al.(2022)Yang, Zhao, Wang, Zhang, Li, Pei, Karlaš, Liu, Guo,
  Zhang, and Li]{yang2022improving}
Zhuolin Yang, Zhikuan Zhao, Boxin Wang, Jiawei Zhang, Linyi Li, Hengzhi Pei,
  Bojan Karlaš, Ji~Liu, Heng Guo, Ce~Zhang, and Bo~Li.
\newblock Improving certified robustness via statistical learning with logical
  reasoning.
\newblock In \emph{Advances in Neural Information Processing Systems 35
  (NeurIPS 2022)}, 2022.

\bibitem[Yi et~al.(2018)Yi, Wu, Gan, Torralba, Kohli, and
  Tenenbaum]{yi2018neural}
Kexin Yi, Jiajun Wu, Chuang Gan, Antonio Torralba, Pushmeet Kohli, and Josh
  Tenenbaum.
\newblock Neural-symbolic vqa: Disentangling reasoning from vision and language
  understanding.
\newblock \emph{Advances in neural information processing systems}, 31, 2018.

\bibitem[Zhang et~al.(2018)Zhang, Weng, Chen, Hsieh, and
  Daniel]{zhang2018efficient}
Huan Zhang, Tsui-Wei Weng, Pin-Yu Chen, Cho-Jui Hsieh, and Luca Daniel.
\newblock Efficient neural network robustness certification with general
  activation functions.
\newblock In \emph{Advances in neural information processing systems}, pp.\
  4939--4948, 2018.

\bibitem[Zhang et~al.(2023)Zhang, Li, Zhang, and Li]{zhang2023care}
Jiawei Zhang, Linyi Li, Ce~Zhang, and Bo~Li.
\newblock {CARE}: Certifiably robust learning with reasoning via variational
  inference.
\newblock In \emph{First IEEE Conference on Secure and Trustworthy Machine
  Learning}, 2023.
\newblock URL \url{https://openreview.net/forum?id=1n6oWTTV1n}.

\end{thebibliography}
\bibliographystyle{iclr2024_conference}

\newpage
\appendix 

\noindent {\Large{\textbf{Appendix}}} 

\DoToC

\newpage


\section{Discussions, Broader Impact, Limitations, and Future Work}
\label{app:dissc}

\textbf{Broader Impact.} In this paper, we propose a certifiably robust conformal prediction framework \name via knowledge-enabled logic reasoning.
We prove that for any adversarial test sample with $\ell_2$-norm bounded perturbations, the prediction set of \name captures the ground truth label with a guaranteed coverage level.
Considering distribution shifts during inference time and weakening the assumption of data exchangeability has been an active area of conformal prediction.
\name demonstrates both theoretical and empirical improvements in the adversarial setting and can inspire future work to explore various types of distribution shifts with the power of logical reasoning and the certification framework of \name.
Furthermore, robustness certification for conformal prediction can be viewed as the generalization of certified prediction robustness, which justifies a certifiably robust prediction rather than a prediction set.
Considering the challenges of providing tight bounds for certified prediction robustness, we expect that \name can motivate the certification of prediction robustness to benefit from a statistical perspective.

\textbf{Limitations.} A possible limitation may lie in the computational costs induced by pretraining knowledge models. The problem can be partially relieved by training knowledge models with parallelism. Moreover, we only need to pretrain them once and then can encode different types of knowledge rules based on domain knolwedge.
The training cost is worthy since we theoretically and empirically show that more knowledge models benefit the robustness of \name a lot for conformal prediction.
{
Another possible limitation may lie in the access to knowledge rules.
For the datasets that release hierarchical information (e.g., ImageNet, CIFAR-100, Visual Genome), we can easily adapt the information to implication rules used in \name. We can also seek knowledge graphs \citep{chen2020review} with related label semantics for rule design. These approaches may be effective in assisting practitioners in designing knowledge rules, but the process is not fully automatic, and human efforts are still needed. Overall, for any knowledge system, one naturally needs domain experts to design the knowledge rules specific to that application. There is probably no universal strategy on how to aggregate knowledge for any arbitrary application, and rather application-specific constructions are needed. This is where the power as well as limitation of our framework comes from.}

{
\textbf{Future work.} The following directions are potentially interesting future work based on the framework of \name: 1) exploring different structures of knowledge representations and their effectiveness on conformal prediction performance, 2) proposing a more systematic way of designing logic rules based on domain knowledge, and 3) initiating a better understanding of the importance of different types of knowledge rules and motivating better design of rule weights accordingly.
}

\reb{
\textbf{Discussions.} We will consider the cases where the knowledge graph is contaminated by adversarial attackers or contains noises and misinformation in this part. The robustness of COLEP mainly originates from the knowledge rules via factor function $F(\cdot)$ denoting how well each assignment conforms to the specified knowledge rules. Since the knowledge rules and weights are specified by practitioners and fixed during inferences, the PCs encoding the knowledge rules together with the main model and knowledge models (learning + reasoning component in \Cref{fig:framework}) can be viewed as a complete model. According to the adversarial attack settings in the literature \cite{goodfellow2014explaining,gendler2022adversarially}, the model weights can be well protected in practical cases and the attackers can only manipulate the inputs but not the model, and thus, the PC graph (i.e., the source of side information) which is a part of the augmented model can not be manipulated by adversarial attacks.
On the other hand, it is an interesting question to discuss the case when the knowledge rules may be contaminated and contain misinformation due to the neglect of designers. We can view the problem of checking knowledge rules and correcting misinformation as a pre-processing step before the employment of COLEP. The parallel module of knowledge checking and correction is well-explored by a line of research \cite{melo2017approach,caminhas2019detecting,chen2020correcting}, which basically detects and corrects false logical relations in the semantic embedding space via consistency-checking techniques. It is also interesting to leverage more advanced large language models to check and correct human-designed knowledge rules. We deem that the process of knowledge-checking is independent and parallel to COLEP, which may take additional efforts before the deployment of COLEP but significantly benefits in achieving a much more robust system based on our analysis and evaluations.}

\reb{
\textbf{Alternative certification}. There are two general certification directions for certifiably robust conformal prediction: (1) plugging in the standard score for for test example, and the quantile is computed on the worst-case scores for each calibration point as in our paper; (2) plugging in the worst-case score for the test example and the standard score for each calibration point.
We can also adapt the results with the second line based on the symmetry between $x$ and $x'$ is indeed an interesting point. Specifically, given perturbed $x'$ during test time, we can compute the bounds of output logits in the $\delta$-ball, which will also cover the clean $x$. Since the logit bounds also hold for clean $x$, we can compute the worst-case scores based on the bounds and finally construct the prediction set with the clean quantile value. 
The two lines of certification are essentially parallel. It is fundamentally due to the exchangeability of clean samples, which leads to consideration of worst-case bounds either during calibration or inference. One practical advantage of the first line of certification might be that the computation of worst-case bounds requires more runtime, and thus, computing it during the offline calibration process might be more desirable than during the online inference process. 
}

\reb{
\textbf{Discussions of selection of rule weights}.
If a knowledge rule is true with a lower probability, then the associated weight rule should be smaller, and vice versa. For example, there might be a case that a knowledge rule only holds with probability $70\%$. In the example, the probability is with respect to the data distribution, which means that $70\%$ of data satisfies the rule ``IsStopSign$\rightarrow$IsOctogan''. Besides, other factors such as the portions of class labels also affect the importance of corresponding rules. If one class label is pretty rare in the data distribution, then the knowledge rules associated with it clearly should be assigned a small weight, and vice versa. In summary, the importance of knowledge rules depends on the data distribution, no matter whether we select it via grid search or optimize it in a more subtle way. Therefore, we deem that a general guideline for an effective and efficient selection of rule weights is to perform a grid search on a held-out validation set, which is also essential for standard model training. However, we do not think the effectiveness of COLEP is quite sensitive to the selection of weights. From the theoretical analysis and evaluations, we can expect benefits from the knowledge rules as long as we have a positive weight $w$ for the rules.
}

\section{More related work}
\label{app:related}
\noindent\textbf{Certified Prediction Robustness}
To provide the worst-case robustness certification, a rich body of work has been proposed, including
convex relaxations~\citep{zhang2018efficient,li2019robustra,kang2023fashapley}, Lipschitz-bounded model layers~\citep{tsuzuku2018lipschitz,singla2021householder,xu2022lot}, and randomized smoothing~\citep{cohen2019certified,salman2019provably,li2021tss,li2022double}.
However, existing work mainly focuses on certifying the \textit{prediction robustness} rather than the \textit{prediction coverage}, which is demonstrated to provide essential uncertainty quantification in conformal prediction~\citep{jin2023sensitivity}.
In this work, we aim to bridge worst-cast robustness certification and prediction coverage, and design learning frameworks to improve the uncertainty certification. 

{\textbf{Knowledged-Enabled Logical Reasoning} Neural-symbolic methods \citep{ahmed2022semantic,garcez2022neural,yi2018neural} use symbols to represent objects, encode relationships with logical rules, and perform logical inference on them.
DeepProbLog \citep{manhaeve2018deepproblog} and Semantic loss \citep{xu2018semantic} are representatives that leverage logic circuits \cite{brown2003boolean} with weighted mode counting (WMC) to integrate symbolic knowledge.
\textit{Probabilistic circuits (PCs)} \citep{darwiche2002logical,darwiche2003differential,kisa2014probabilistic} are another family of probabilistic models which allow exact and efficient inference. 
PCs permit various types of inference to be performed in linear time of the size of circuits \citep{hitzler2022tractable,choi2017relaxing,pmlr-v32-rooshenas14}.
}

\textbf{Worst-case Certification}. In particular, considering potential adversarial manipulations during test time, where a small perturbation could mislead the models to make incorrect predictions~\citep{intriguing2014szegedy,madry2018towards,xiao2018spatially,cao2021invisible}, 
different robustness certification approaches have been explored to provide the worst-case prediction guarantees for neural networks. 
For instance, a range of deterministic and statistical \textit{worst-case certifications} have been proposed for a single model~\citep{wong2018provable,zhang2018efficient,cohen2019certified,lecuyer2019certified}, 
reinforcement learning~\citep{wu2021crop,kumar2021policy,wu2022copa}, federated learning paradigms~\citep{xie2021crfl,xie2022uncovering}, and fair learning \citep{kang2022certifying,ray2022fairness}, under constrained perturbations~\citep{balunovic2019certifying,li2021tss,li2022double}.

\section{Preliminaries}

\subsection{Probabilistic Circuits}
\label{app:pre}

\begin{definition}[Probabilistic Circuit]
A probabilistic circuit (PC) {\eqsmall$\gP$} defines a joint distribution over the set of random variables {\eqsmall$\rmX$} and can be defined as a tuple {\eqsmall$(\gG,\psi)$}, where {\eqsmall$\gG$} represents an acyclic directed graph {\eqsmall$(\gV,\gE)$}, and {\eqsmall$\psi: \gV \mapsto 2^\rmX$} the scope function that maps each node in {\eqsmall$V$} to a subset of {\eqsmall$\rmX$} (i.e., scope). This scope function maps the nodes in the graph to specific subsets of random variables {\eqsmall$\rmX$}.
A leaf node {\eqsmall$O_{\text{leaf}}$} of {\eqsmall$\gG$} computes a probability density over its scope {\eqsmall$\psi(O_{\text{leaf}})$}. All internal nodes of {\eqsmall$\gG$} are either sum nodes ({\eqsmall$S$}) or product nodes ({\eqsmall$P$}). A sum node {\eqsmall$S$} computes a convex combination of its children: {\eqsmall$S = \sum_{N \in \textbf{ch}(S)} w_{S,N}N$} where {\eqsmall$\textbf{ch}(S)$} denotes the children of node {\eqsmall$S$}. A product node {\eqsmall$P$} computes a product of its children: {\eqsmall$P = \prod_{N \in \textbf{ch}(P)} N$}.
\end{definition}

The output of PC is typically the value of the root node $O_{\text{root}}$.
We mainly focus on the marginal probability computation inside PCs
(e.g., $\sP[x_j=1]=\int_{x \in \rmX, x_j=1} O_{\text{root}}(x) / \int_{x \in \rmX}O_{\text{root}}(x)dx$).  
The inference of marginal probability is tractable by imposing structural constraints of \textit{smoothness} and \textit{decomposability}
: (1) \textit{decomposability}: for any product node {\eqsmall$P \in \gV$, $\psi(v_1) \cap \psi(v_2) = \emptyset$}, for {\eqsmall$v_1,v_2 \in \textbf{ch}(P), v_1 \neq v_2$}, and 2) \textit{smoothness}: for any sum node {\eqsmall$S \in \gV$, $\psi(v_1) = \psi(v_2)$}, for {\eqsmall$v_1,v_2 \in \textbf{ch}(S)$}. 
With the constraint of decomposability, the integrals on the product node $P$ can be factorized into integrals of children nodes $\textbf{ch}(P)$, which can be formulated as follows:
\begin{equation}
    \begin{aligned}
        \int_{x \in \rmX} P(x) dx =& \int_{x \in \rmX} \prod\limits_{N \in \textbf{ch}(P)} N(x) dx \\
        =& \prod\limits_{N \in \textbf{ch}(P)} \left[ \int_{x \in \rmX} N(x) dx \right]
    \end{aligned}
\end{equation}

With the constraint of smoothness, the integrals on the sum node $S$ can be decomposed linearly to integrals on the children nodes $\textbf{ch}(S)$, which can be formulated as:
\begin{equation}
    \begin{aligned}
        \int_{x \in \rmX} S(x) dx =& \int_{x \in \rmX} \sum\limits_{N \in \textbf{ch}(S)} w_{S,N} N(x) dx \\
        =& \sum\limits_{N \in \textbf{ch}(S)} w_{S,N} \left[ \int_{x \in \rmX} N(x) dx \right]
    \end{aligned}
\end{equation}

We can see that the smoothness and decomposability of PCs enable the integral computation on parent nodes to be pushed down to that on children nodes, and thus, the integral $\int_{x \in \rmX, x_j=1} O_{\text{root}}(x)$ or $\int_{x \in \rmX} O_{\text{root}}(x)$ can be computed by a single forwarding pass of the graph $\gG$. In total, we only need two forwarding passes in the graph $\gG$ to compute the marginal probability $\sP[x_j=1]=\int_{x \in \rmX, x_j=1} O_{\text{root}}(x) / \int_{x \in \rmX}O_{\text{root}}(x)dx$.

\textbf{Knowledge rules in PCs.} For the implication rule $A \implies B$, we define $A$ as conditional variables and $B$ as consequence variables.
Suppose that the $\pcidx$-th PC encodes a set of implication rules defined over the set of Boolean variables $V_{\pcidx}$.
The PC is called a homogeneous logic PC if there exists a bipartition $V_{\pcidx d}$ and $V_{\pcidx s}$ $(V_{\pcidx d} \cup V_{\pcidx s} = V,~V_{\pcidx d} \cap V_{\pcidx s} = \emptyset)$ such that all variables in $V_{\pcidx d}$ only appear as conditional variables and all variables in $V_{\pcidx s}$ only appears as consequence variables.
We construct homogeneous logic PCs to follow the structural properties of PCs (i.e., decomposability and smoothness) and can show that such construction is intuitive in real-world applications.
For example, in the road sign recognition task, we construct a PC with the knowledge of shapes. We encode a set of implication rules $A \implies B$ where $A$ is the road sign, and $B$ is the corresponding shape (e.g., a road sign can imply an octagon shape). We can verify that such a PC is a homogeneous PC and follow the structural constraints in the computation graph $\gG$, as shown in \Cref{fig:framework}.

\subsection{Multiple PCs with coefficients $\beta_r$ in the combination of PCs}
\label{app:multi_pc}
Let us consider the case with $R$ PCs.
The conditional class probability of the $j$-th class corrected by {\eqsmall$\pcidx$}-th PC can be formulated as: 
\vspace{-0.5em}
\begin{equation}\label{eq:marginal}
{\eqsmall
    \pmainj^{(r)}(x) = \dfrac{\sum_{\mu \in M,\ \mu_j=1} O_{\text{root}}^{(r)}(\mu)}{\sum_{\mu \in M} O_{\text{root}}^{(r)}(\mu)} = \dfrac{ \sum_{\mu \in M,\ \mu_j=1} \exp \left\{ \sum_{\cidx=1}^{N_c+L} T(\pmain_{\cidx}(x),\mu_{\cidx}) \right\}F_r(\mu) }{ \sum_{\mu \in M} \exp \left\{ \sum_{\cidx=1}^{N_c+L} T(\pmain_{\cidx}(x),\mu_{\cidx}) \right\}F_r(\mu)}
}
\end{equation}
where {\eqsmall$\mu_j$} denotes the {\eqsmall$j$}-th element of vector {\eqsmall$\mu$, and $T(a,b)=\log(ab+(1-a)(1-b))$}, and {\eqsmall$F_{\pcidx}(\mu) = \exp \left\{\sum_{h=1}^{H_{\pcidx}} w_{h{\pcidx}} \mathbb{I}[\mu \sim K_{h{\pcidx}}]\right\}$}. {\eqsmall$H_{\pcidx}$, $K_{h{\pcidx}}$, $w_{h{\pcidx}}$} are the number of knowledge rules, the {\eqsmall$h$}-th logic rule, and the weight of the {\eqsmall$h$}-th logic rule for the {\eqsmall${\pcidx}$}-th PC.
Note that the numerator and denominator of \cref{eq:marginal} can be exactly and efficiently computed with a single forwarding pass of PCs \citep{hitzler2022tractable,choi2017relaxing,pmlr-v32-rooshenas14}.
Specifically, the inference time complexity of one PC defined on graph $(\gV,\gE)$ with node set $\gV$ and edge set $\gE$ is $\gO(|\gV|)$.

We improve the expressiveness of the reasoning component by combining {\eqsmall$\pcmodel$} PCs with coefficients {\eqsmall$\beta_{\pcidx}$} for the {\eqsmall$\pcidx$}-th PC. Formally, given data input $x$, the corrected conditional class probability {\eqsmall$\pnamej$} can be expressed as:
\begin{equation}
\label{eq:pc_full}
{\eqsmall
    \pnamej(x) = \sum_{\pcidx \in [\pcmodel]} \beta_{\pcidx} \pmainj^{(r)}(x)
}
\end{equation}
We also note that the coefficients {\eqsmall$\beta_{\pcidx}$} correspond to the accuracy of {\eqsmall$\pcidx$}-th PC normalized over all PC accuracies.
The core of this formulation is the mixture model involving a latent variable $r_{\text{PC}}$ representing the PCs. In short, we can write $\pmainj^{(r)}(x)$ as $\pmainj^{(r)}(x)=\sP[Y=y|X=x, r_{\text{PC}}=r]$, and $\pnamej(x)$ as the marginalized probability over the latent variable $r_{\text{PC}}$ such as:
\begin{equation}
    \label{eq:beta-marginalization}
    \begin{split}
            \pnamej(x)&=\sP[Y=y|X=x]=\sum_{\pcidx \in [\pcmodel]} \sP[r_{\text{PC}}=r] \cdot \sP[Y=y|X=x, r_{\text{PC}}=r]\\
            &=\sum_{\pcidx \in [\pcmodel]} \sP[r_{\text{PC}}=r] \cdot \pmainj^{(r)}(x).
    \end{split}
\end{equation}
Hence, the coefficient $\beta_{\pcidx}$ for the $\pcidx$-th PC are determined by $\sP[r_{\text{PC}}=\pcidx]$. Although we lack direct knowledge of this probability, we can estimate it using the data by examining how frequently each PC $\pcidx$ correctly predicts the outcome across the given examples, similarly as in the estimation of prior class probabilities for Naive Bayes classifiers.
\subsection{Randomized Smoothing}
\label{app:rs}

\begin{lemma}[Randomized Smoothing \citep{cohen2019certified}]
\label{lem:rand}
Considering the class conditional probability $\pmainj: \sR^d \mapsto [0,1]$, we construct a smoothed model $g_j(x;\sigma)=\E_{\eta \sim \gN(0,\sigma^2)}[\pmainj(x+\eta)]$. With input perturbation $\|\epsilon\|_2 \le \delta$, the smoothed model satisfies:
\begin{equation}
    \Phi(\Phi^{-1}(g_j(x;\sigma)) - \delta/\sigma) \le g_j(x+\epsilon;\sigma) \le \Phi(\Phi^{-1}(g_j(x;\sigma)) + \delta/\sigma),
\end{equation}
where $\Phi$ is the Gaussian cumulative density function (CDF) and $\Phi^{-1}$ is its inverse.
\end{lemma}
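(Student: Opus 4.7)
The plan is to recognize this statement as the standard randomized smoothing bound of Cohen et al.\ (2019) and to derive it via the Neyman--Pearson lemma applied to a pair of shifted Gaussian measures. Set $P = \gN(x, \sigma^2 I)$ and $Q = \gN(x+\epsilon, \sigma^2 I)$, so that by the change of variables $Z = x + \eta$ we have $g_j(x;\sigma) = \E_{Z \sim P}[\pmainj(Z)]$ and $g_j(x+\epsilon;\sigma) = \E_{Z \sim Q}[\pmainj(Z)]$. Write $p := g_j(x;\sigma)$. The problem then becomes: given a measurable $\pmainj : \sR^d \to [0,1]$ with $\E_P[\pmainj]=p$, how large or small can $\E_Q[\pmainj]$ be?

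First I would invoke the Neyman--Pearson lemma (in its form for $[0,1]$-valued test functions, obtained from the usual indicator form by a layer-cake decomposition $\pmainj(z) = \int_0^1 \mathbb{1}[\pmainj(z) \ge u]\,du$, or by randomization). It tells us that the extremal $\E_Q[\pmainj]$ subject to $\E_P[\pmainj]=p$ is attained by a (possibly randomized) indicator of a super-level set of the likelihood ratio $\Lambda(z) := dQ/dP(z)$. Next I would compute this likelihood ratio for the Gaussian shift: a direct calculation gives
\begin{equation*}
\log \Lambda(z) \;=\; \tfrac{1}{\sigma^2}\langle \epsilon, z - x \rangle \;-\; \tfrac{\|\epsilon\|^2}{2\sigma^2},
\end{equation*}
so the level sets of $\Lambda$ are exactly the half-spaces $H_t := \{z : \langle \epsilon/\|\epsilon\|, z - x\rangle \ge t\}$ with normal direction $\epsilon$.

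Then I would evaluate both measures on $H_t$. The one-dimensional projection $U := \langle \epsilon/\|\epsilon\|, Z - x\rangle$ satisfies $U \sim \gN(0,\sigma^2)$ under $P$ and $U \sim \gN(\|\epsilon\|, \sigma^2)$ under $Q$. Hence $P(H_t) = \Phi(-t/\sigma)$ and $Q(H_t) = \Phi(-t/\sigma + \|\epsilon\|/\sigma)$. Calibrating $t$ so that $P(H_t) = p$ yields $t = -\sigma\,\Phi^{-1}(p)$, and therefore the Neyman--Pearson optimum for the upper problem is
\begin{equation*}
\sup \E_Q[\pmainj] \;=\; \Phi\!\bigl(\Phi^{-1}(p) + \|\epsilon\|/\sigma\bigr).
\end{equation*}
For the lower problem, the extremizer is the complementary half-space, giving $\inf \E_Q[\pmainj] = \Phi(\Phi^{-1}(p) - \|\epsilon\|/\sigma)$; alternatively this follows by applying the upper bound to $1-\pmainj$ together with the identity $\Phi(-s) = 1 - \Phi(s)$.

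Finally, since $\Phi$ is strictly increasing and $\|\epsilon\|_2 \le \delta$, I would conclude
\begin{equation*}
\Phi\!\bigl(\Phi^{-1}(p) - \delta/\sigma\bigr) \;\le\; g_j(x+\epsilon;\sigma) \;\le\; \Phi\!\bigl(\Phi^{-1}(p) + \delta/\sigma\bigr),
\end{equation*}
which is exactly the claimed statement with $p = g_j(x;\sigma)$. The main obstacle I anticipate is the Neyman--Pearson step for $[0,1]$-valued (not merely $\{0,1\}$-valued) $\pmainj$: one has to justify that a soft classifier cannot outperform an optimally chosen half-space indicator, which I would handle through the layer-cake representation above so that every super-level set argument reduces to the classical indicator version of Neyman--Pearson. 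The Gaussian geometry (rotational symmetry and the explicit $\Phi$ form of half-space probabilities) then makes the rest mechanical.
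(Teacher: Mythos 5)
The paper gives no proof of this lemma: it is imported verbatim from Cohen et al.\ (2019) (in the soft-classifier/expectation form popularized by Salman et al.), and the appendix only adds a remark on Monte-Carlo estimation. So there is no in-paper argument to compare against; your derivation is the standard one behind the citation, and it is correct. The likelihood ratio of the two shifted Gaussians is monotone in $\langle \epsilon, z-x\rangle$, the Neyman--Pearson extremizers are therefore half-spaces with normal $\epsilon$, and calibrating the half-space to have $P$-measure $p$ gives exactly $\Phi(\Phi^{-1}(p)\pm\|\epsilon\|_2/\sigma)$, after which monotonicity of $\Phi$ and $\|\epsilon\|_2\le\delta$ finish the claim. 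The one step you should tighten is the reduction from $[0,1]$-valued $\pmainj$ to indicators: the layer-cake identity $\E_Q[\pmainj]=\int_0^1 Q(\pmainj\ge u)\,du$ together with the indicator bound only yields $\E_Q[\pmainj]\le\int_0^1\Phi\bigl(\Phi^{-1}(P(\pmainj\ge u))+\|\epsilon\|_2/\sigma\bigr)\,du$, and passing the integral inside $\Phi(\Phi^{-1}(\cdot)+c)$ additionally requires the concavity of $a\mapsto\Phi(\Phi^{-1}(a)+c)$ for $c\ge 0$ (true, since its derivative $\exp(-c\,\Phi^{-1}(a)-c^2/2)$ is decreasing in $a$, but it must be invoked explicitly). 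Alternatively, apply the Neyman--Pearson lemma directly in its randomized-test form for $[0,1]$-valued tests via the pointwise inequality $(f^*-f)(\Lambda-k)\ge 0$, which sidesteps the layer-cake issue entirely.
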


In practice, we can estimate the expectation in $g_j(x;\sigma)$ by taking the average over many i.i.d. realizations of Gaussian noises following \citep{cohen2019certified}.
Thus, the output probability of the smoothed model can be bounded given input perturbations. Note that the specific ways of certifying the learning component are orthogonal to certifying the reasoning component and the conformal prediction procedure, and one can plug in different certification strategies for the learning component.

\section{Conformal Prediction with \name}

\subsection{Flexibility of user-defined coverage level}
\label{app:colep}

In this part, we illustrate the flexibility of \name to use different miscoverage levels $\alpha_j$ for different label classes $j \in [N_c]$ and prove the guaranteed coverage.
Recall that in the standard conformal prediction setting, we are given a desired coverage $1-\alpha$ and construct the prediction set of a new test sample $\hat{C}_{n,\alpha}(X_{n+1})$ with the guarantee of marginal coverage: $\mathbb{P}[Y_{n+1} \subseteq \hat{C}_{n,\alpha}(X_{n+1})] \ge 1-\alpha$. In \name, we construct the prediction set $\hat{C}^{\fname_j}_{n,\alpha_j}(X_{n+1})$ for the $j$-th class ($j \in [N_c]$) as in \Cref{eqn:C_j} and construct the final prediction set $\hat{C}^{\name}_{n,\alpha}(X_{n+1})$ as in \Cref{eq:pre_set_final}. Then we have the following:
\begin{align}
    &\sP\left[ Y_{n+1} \notin \hat{C}^{\name}_{n,\alpha}(X_{n+1}) \right] \\ =& \sP \left[1 \notin \hat{C}^{\fname_{Y_{n+1}}}_{n,\alpha_j}(X_{n+1})\right] \\
    =& \sP \left[S_{\hat{\pi}^{\fname}_{Y_{n+1}}}(X_{n+1},1) > Q_{1-\alpha_{Y_{n+1}}}(S_{\hat{\pi}^{\fname}_{Y_{n+1}}}(\{X_i,\sI_{[Y_i=Y_{n+1}]}\}_{i \in \gI_{\text{cal}}}) \right] \\
    \le& \max_{j \in [N_c]} \sP\left[ S_{\pmainj^{\fname}}(X_{n+1},\sI_{[Y_{n+1}=j]}) > Q_{1-\alpha_j}(S_{\pmainj^{\fname}}(\{X_i,\sI_{[Y_i=j]}\}_{i \in \gI_{\text{cal}}}) \right] \\
    =& \max_{j \in [N_c]} \alpha_j
\end{align}
Therefore, we conclude that as long as we set $\max_{j \in [N_c]}\{\alpha_j\} \le \alpha$, we have the guarantee of marginal coverage: $\sP\left[ Y_{n+1} \in \hat{C}^{\name}_{n,\alpha}(X_{n+1}) \right] \ge 1-\alpha$.
The flexibility of \name to set various coverage levels for different label classes is advantageous in addressing problems related to the imbalance of classes. 
For example, when samples of class $j$ are extremely vulnerable to be attacked by the adversary, we can set a higher miscoverage level $\alpha_j$ for class $j$ to consider the class-specific vulnerability instead of setting a higher $\alpha$ for all classes which may affect the performance on other classes.

{
\subsection{Illustrative example}
\label{app:colep_eg}
Let us consider the example of classifying whether the given image has a stop sign in \Cref{fig:framework}. The main model generates an estimate $\hat{\pi}(x)$, meaning that the image $x$ has a stop sign with probability $\hat{\pi}(x)$. Suppose that we have a knowledge model to detect whether the image has an object of octagon shape with an estimate $\hat{\pi}_o(x)$, meaning that the image $x$ has an object of octagon shape with probability $\hat{\pi}_o(x)$.
Then we can encode the logic rule ``IsStopSign $\implies$ IsOctagon'' with weight $w$ in the PC, as shown in \Cref{fig:framework}. 
The PC defines a joint distribution over the Bernoulli random variables $\text{IsStopSign}$ and $\text{IsStopSign}$ and computes the likelihood of an instantiation of the random variables at its output (e.g., $p(\text{IsStopSign}=1, \text{IsOctagon}=0)$).
The represented joint distribution in the PC is determined by the encoded logic rule with the pattern in the top PC of \Cref{fig:framework}. 
We can see that only the assignment ($\text{IsStopSign}=1, \text{IsOctagon}=0$) is not multiplied by the weight $w$ due to the contradiction of the rule. 
Therefore, the probability of instantiations of random variables with such contradictions will be down-weighted after the correction of the PC. 
In the simple PC, we formally compute the class probability as:}
\begin{equation}
    \begin{aligned}
         p(\text{IsStopSign}=1)= &{\left\{p(\text{IsStopSign}=1, \text{IsOctagon}=0)+p(\text{IsStopSign}=1, \text{IsOctagon}=1)\right\}}/ \\ 
         &\left\{p(\text{IsStopSign}=1,p(\text{IsOctagon}=0)+p(\text{IsStopSign}=1, \text{IsOctagon}=1) \right. \\ & \left. +p(\text{IsStopSign}=0, \text{IsOctagon}=0)+p(\text{IsStopSign}=0, \text{IsOctagon}=1)\right\}.
    \end{aligned}
\end{equation}
{
Note that the nominator and denominator can be tractably computed by a forwarding pass of the PC graph.
COLEP can enlarge the probability estimate $p(\text{IsStopSign}=1)$ when $x$ is a stop sign and reduce it when $x$ is not a stop sign by leveraging additional concept information $p(\text{IsStopSign}$, as rigorously analyzed in \Cref{lem:pc}.
}

\section{Overview of Certification}
We provide the overview of certification flow in \Cref{fig:thm}, for a better understanding the connections and differences of certifications in \name.
\label{sec:overview_certi}
\begin{figure}[h]
    \centering
    \includegraphics[width=1\linewidth]{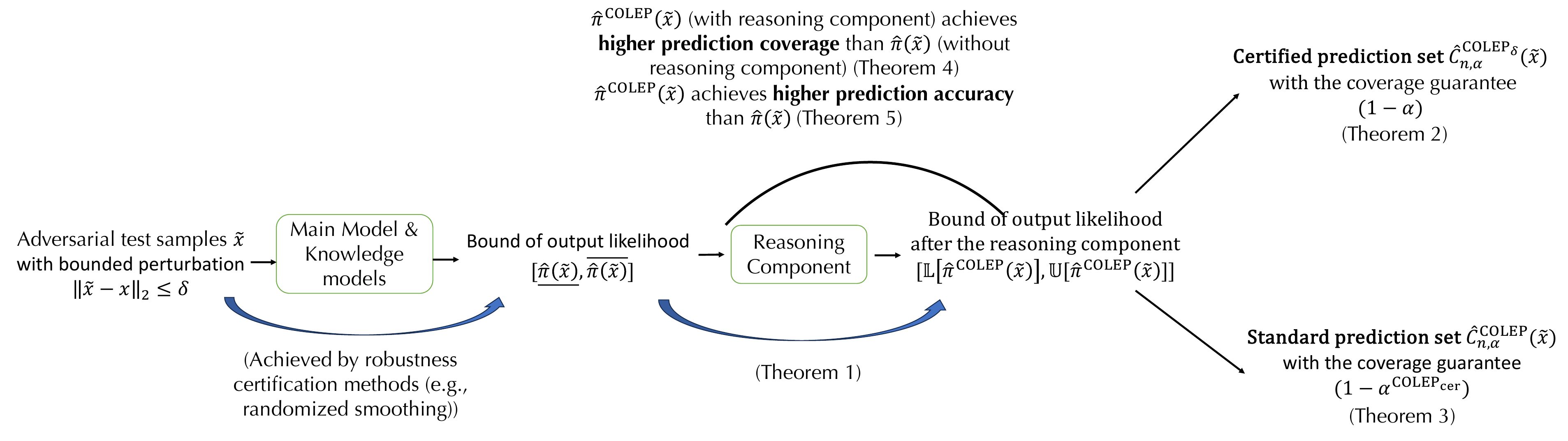}
    \caption{Overview of prediction certification of \name. The certification setting is that the inference time adversaries can violate the data exchangeability assumption, compromising the guaranteed coverage. Therefore, the certification generally achieves the following three goals. (1) We can preserve the guaranteed coverage using a prediction set that takes the perturbation bound into account (\Cref{thm:cer_set}), achieved by computing the probability bound of models before the reasoning component (by randomized smoothing) and the bound after the reasoning component (by \Cref{thm:pc_rob}). (2) We prove the worst-case coverage (a lower bound) if we use the standard prediction set as before (\Cref{thm:worst_case}). (3) We theoretically show that \name can achieve a better prediction coverage (\Cref{thm:comp2}) and prediction accuracy (\Cref{thm:comp_1}) than a data-driven model without the reasoning component. }
    \label{fig:thm}
\end{figure}

\section{Omitted Proofs in \Cref{sec:cert_conform}}
\subsection{Proof of \Cref{thm:pc_rob}}
\label{app:lem_pc}


\textbf{Theorem 1} (Generalization of \Cref{thm:pc_rob} with multiple PCs). \textit{Given any input $x$ and perturbation bound $\delta$, we let $[\underline{\pmain_{\cidx}}(x),\overline{\pmain_{\cidx}}(x)]$ be bounds for the estimated conditional class and concept probabilities by the models with $\cidx\in[N_c+L]$. 
Let $V_{\pcidx d}^{j}$ be the set of index of conditional variables in the $\pcidx$-th PC except for $j \in [N_c]$ and $V_{\pcidx s}^{j}$ be that of consequence variables.
Then the bound for \fname-corrected estimate of the conditional class probability {\eqsmall$\pmainj^{\fname}$} is given by: 
\begin{equation}
    \footnotesize
    \begin{aligned}
    \label{eq:lem1}
        \sU[\pmainj^{\fname}(x)] = \sum_{\mathclap{\pcidx \in [\pcmodel]}} \beta_{\pcidx} \left\{ \dfrac{(1-\overline{\pmainj}(x)) \sum\limits_{\mathclap{\mu_j=0}} \exp \left \{ \sum\limits_{~~~~~~\mathclap{\cidx \in V_{\pcidx d}^{j}} } T(\overline{\hat{\pi}_{\cidx}}(x),\mu_{\cidx}) + ~\sum\limits_{\mathclap{\cidx \in V_{\pcidx s}^{j}}}~ T(\underline{\hat{\pi}_{\cidx}}(x),\mu_{\cidx}) \right \} F_{\pcidx}(\mu) }{\overline{\pmainj}(x)\sum\limits_{\mathclap{\mu_j=1}} \exp \left\{~~ \sum\limits_{\mathclap{\cidx \in V_{\pcidx d}^{j}} } T(\underline{\hat{\pi}_{\cidx}}(x),\mu_{\cidx}) + \sum\limits_{\mathclap{\cidx \in V_{\pcidx s}^{j}} } T(\overline{\hat{\pi}_{\cidx}}(x),\mu_{\cidx}) \right\}F_{\pcidx}(\mu)} + 1\right\}^{-1} \\
        \sL[\pmainj^{\fname}(x)] = \sum_{\mathclap{\pcidx \in [\pcmodel]}} \beta_{\pcidx} \left\{ \dfrac{(1-\underline{\pmainj}(x)) \sum\limits_{\mathclap{\mu_j=0}} \exp \left \{ \sum\limits_{~~~~~~\mathclap{\cidx \in V_{\pcidx d}^{j}} } T(\underline{\hat{\pi}_{\cidx}}(x),\mu_{\cidx}) + ~\sum\limits_{\mathclap{\cidx \in V_{\pcidx s}^{j}}}~ T(\overline{\hat{\pi}_{\cidx}}(x),\mu_{\cidx}) \right \} F_{\pcidx}(\mu) }{\underline{\pmainj}(x)\sum\limits_{\mathclap{\mu_j=1}} \exp \left\{~~ \sum\limits_{\mathclap{\cidx \in V_{\pcidx d}^{j}} } T(\overline{\hat{\pi}_{\cidx}}(x),\mu_{\cidx}) + \sum\limits_{\mathclap{\cidx \in V_{\pcidx s}^{j}} } T(\underline{\hat{\pi}_{\cidx}}(x),\mu_{\cidx}) \right\}F_{\pcidx}(\mu)} + 1\right\}^{-1}
    \end{aligned}
\end{equation}
where {\eqsmall$T(a,b)=\log(ab+(1-a)(1-b))$}.}

\begin{proof}[Proof of \Cref{thm:pc_rob}]
We first provide a lemma that proves the monotonicity of the complex function with respect to the estimated condition probabilities of conditional variables and consequence variables within PCs as follows.

\begin{lemma}[Function Monotonicity within PCs]
We define the target function as $G(q,d)=\sum_{\mu \in M_{jd}} \exp \left\{ \sum_{\cidx \in [N_c+L] \backslash \{j\}} T(\pmain_{\cidx}(x),\mu_{\cidx}) \right\}F_r(\mu)$, where $q \in [N_c+L] \backslash \{j\}$ and $M_{jd} = \{ \mu \in M: \mu_j = d \}~(d \in \{0,1\})$.
Then we have $\partial G / \partial \hat{\pi}_q(x) \le 0$ if class label $q$ corresponds to a conditional variable in the PC and  $\partial G / \partial \hat{\pi}_q(x) \ge 0$ if class label $q$ corresponds to a consequence variable in the PC.
\label{lem:mono}
\end{lemma}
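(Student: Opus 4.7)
}
My plan is to isolate the dependence of $G(q,d)$ on $\hat\pi_q(x)$ by partitioning the assignments according to the value of $\mu_q$, then reduce the sign of $\partial G/\partial \hat\pi_q$ to a comparison between two sums that differ only in whether $\mu_q=0$ or $\mu_q=1$, and finally exploit the homogeneous structure of the PC (each variable is either always conditional or always consequence) to pin down that sign via the monotonicity of the factor function $F_r$.

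Concretely, first I would use the identity $\exp\bigl(T(\hat\pi_q(x),\mu_q)\bigr)=\hat\pi_q(x)\mu_q+(1-\hat\pi_q(x))(1-\mu_q)$, which equals $\hat\pi_q(x)$ when $\mu_q=1$ and $1-\hat\pi_q(x)$ when $\mu_q=0$. Factoring this term out of the product in the definition of $G$, and letting
\begin{equation*}
    A_d^{(b)} := \sum_{\substack{\mu\in M:\,\mu_j=d\\ \mu_q=b}} \exp\!\Bigl\{\!\!\sum_{\cidx\in[N_c+L]\setminus\{j,q\}}\!\! T(\hat\pi_\cidx(x),\mu_\cidx)\Bigr\} F_r(\mu), \qquad b\in\{0,1\},
\end{equation*}
I obtain the affine decomposition $G(q,d) = \hat\pi_q(x)\, A_d^{(1)} + (1-\hat\pi_q(x))\, A_d^{(0)}$, and therefore $\partial G/\partial \hat\pi_q(x) = A_d^{(1)} - A_d^{(0)}$. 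The sign question then reduces entirely to comparing $A_d^{(1)}$ with $A_d^{(0)}$.

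Next I would observe that $A_d^{(1)}$ and $A_d^{(0)}$ are identical sums indexed by the same reduced assignments $\mu_{-q}$ with $\mu_j=d$, except that the factor $F_r$ is evaluated at $\mu_q=1$ versus $\mu_q=0$. I would then split $F_r$ as $F_r(\mu)=F_{r,\neq q}(\mu_{-q})\cdot F_{r,q}(\mu_q,\mu_{-q})$, where $F_{r,q}$ aggregates the weight contributions of only those knowledge rules $K_h$ that involve $q$. Using the truth table of a material implication $A\!\implies\! B$ (satisfied unless $A=1,B=0$), I would argue: (i) if $q$ appears only as a conditional variable, then flipping $\mu_q$ from $0$ to $1$ can only break rules of the form ``$q\!\implies\!B$'' (it never creates newly satisfied ones), so $F_{r,q}(0,\mu_{-q})\ge F_{r,q}(1,\mu_{-q})$ pointwise, giving $A_d^{(0)}\ge A_d^{(1)}$ and hence $\partial G/\partial \hat\pi_q(x)\le 0$; (ii) symmetrically, if $q$ appears only as a consequence variable, flipping $\mu_q$ from $0$ to $1$ can only satisfy rules ``$A\!\implies\! q$'', giving $F_{r,q}(1,\mu_{-q})\ge F_{r,q}(0,\mu_{-q})$ and hence $\partial G/\partial \hat\pi_q(x)\ge 0$. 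The homogeneous-logic-PC assumption (Appendix, Section on preliminaries) is exactly what guarantees that $q$ plays only one of these two roles so that the two cases are mutually exclusive and exhaustive.

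I expect the main obstacle to be the careful bookkeeping in case (i)/(ii): one has to verify that \emph{every} rule involving $q$ changes its $\mathbb{I}_{[\mu\sim K_h]}$ indicator monotonically in the stated direction when $\mu_q$ is flipped, uniformly over all $\mu_{-q}$ with $\mu_j=d$. This pointwise monotonicity is what lets me move the inequality inside the sum over $\mu_{-q}$; the homogeneity assumption makes this bookkeeping clean, since it rules out the awkward scenario where $q$ would appear as a conditional in one rule and as a consequence in another, which could otherwise make $F_{r,q}$ non-monotone in $\mu_q$. Once this monotonicity of $F_{r,q}$ is established, the remainder is a term-by-term comparison of two sums with identical nonnegative multipliers $\exp\{\sum_{\cidx\neq j,q} T(\hat\pi_\cidx,\mu_\cidx)\}F_{r,\neq q}(\mu_{-q})$, completing the proof.
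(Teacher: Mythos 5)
Your proposal is correct and follows essentially the same route as the paper's proof: both exploit the affine dependence of $G$ on $\hat\pi_q(x)$, split the sum by the value of $\mu_q$ so that $\partial G/\partial\hat\pi_q$ becomes a term-by-term difference of the two halves, and then determine its sign from the pointwise monotonicity of $F_r$ under flipping $\mu_q$ from $0$ to $1$ (the paper phrases this via a rewrite map $R(\mu,q)$ and the inequality $\mathbb{I}_{[R(\mu,q)\sim K_{hr}]}\le \mathbb{I}_{[\mu\sim K_{hr}]}$ for conditional variables, which is exactly your truth-table argument). Your explicit appeal to the homogeneous-PC assumption to rule out $q$ appearing in both roles is a point the paper leaves implicit, but it is the same argument.
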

\begin{proof}[Proof of \Cref{lem:mono}]

Note that the function $G(q,d)$ is linear with respect to $\hat{\pi}_q(x)$. To explore the monotonicity, we only need to determine the sign of the coefficients of $\hat{\pi}_q(x)$. 
We first consider the case where class label $q$ corresponds to a conditional variable. 
We can rearrange the expression of $G(q,d)$ as follows:

\begin{equation}
\scriptsize
    \begin{aligned}
        G(q,d) =&  \sum_{\mu \in M_{jd}} \exp \left\{ \sum_{\cidx \in [N_c+L] \backslash \{j\}} T(\pmain_{\cidx}(x),\mu_{\cidx}) \right\}F_r(\mu) \\
        =& \hspace{-1em} \sum_{\mu \in M_{jd}, \mu_q=1} \hspace{-1.5em} \exp \left\{ \sum_{\cidx \in [N_c+L] \backslash \{j\}} \hspace{-1.5em} T(\pmain_{\cidx}(x),\mu_{\cidx}) \right\}F_r(\mu) + \hspace{-1.5em} \sum_{\mu \in M_{jd}, \mu_q=0} \hspace{-1.5em} \exp \left\{ \sum_{\cidx \in [N_c+L] \backslash \{j\}} \hspace{-1.5em} T(\pmain_{\cidx}(x),\mu_{\cidx}) \right\}F_r(\mu)  \\
        =& \hat{\pi}_q(x) \hspace{-2em} \sum_{\mu \in M_{jd}, \mu_q=1}  \hspace{-2.3em}\exp \left\{ \sum_{\cidx \in [N_c+L] \backslash \{j,q\}} \hspace{-3em} T(\pmain_{\cidx}(x),\mu_{\cidx}) \right \}F_r(\mu) + (1-\hat{\pi}_q(x)) \hspace{-2.5em} \sum_{\mu \in M_{jd}, \mu_q=0} \hspace{-2.3em} \exp \left\{ \sum_{\cidx \in [N_c+L] \backslash \{j,q\}} \hspace{-3em} T(\pmain_{\cidx}(x),\mu_{\cidx}) \right\}F_r(\mu) \\
        =& \hat{\pi}_q(x) \left[ \sum_{\mu \in M_{jd}, \mu_q=1} \hspace{-2.0em} \exp \left\{ \sum_{\cidx \in [N_c+L] \backslash \{j,q\}} \hspace{-3em} T(\pmain_{\cidx}(x),\mu_{\cidx}) \right\}F_r(\mu) -  \hspace{-1.5em} \sum_{\mu \in M_{jd}, \mu_q=0} \hspace{-1.5em} \exp \left\{ \sum_{\cidx \in [N_c+L] \backslash \{j,q\}} \hspace{-3em} T(\pmain_{\cidx}(x),\mu_{\cidx}) \right\}F_r(\mu) \right] \\&+ \left[ \sum_{\mu \in M_{jd}, \mu_q=0} \exp \left\{ \sum_{\cidx \in [N_c+L] \backslash \{j,q\}} T(\pmain_{\cidx}(x),\mu_{\cidx}) \right\}F_r(\mu) \right]
    \end{aligned}
\end{equation}
Then we rearrange the coefficients of $\hat{\pi}_q(x)$ as follows:
\begin{equation}
\footnotesize
\begin{aligned}
    & \sum_{\mu \in M_{jd}, \mu_q=1} \hspace{-1em} \exp \left\{ \sum_{\cidx \in [N_c+L] \backslash \{j,q\}} \hspace{-2em} T(\pmain_{\cidx}(x),\mu_{\cidx}) \right\}F_r(\mu) - \hspace{-1em}  \sum_{\mu \in M_{jd}, \mu_q=0} \hspace{-1em} \exp \left\{ \sum_{\cidx \in [N_c+L] \backslash \{j,q\}} \hspace{-2em} T(\pmain_{\cidx}(x),\mu_{\cidx}) \right\}F_r(\mu) \\
    =& \sum_{\mu \in M_{jd}, \mu_q=0} \exp \left\{ \sum_{\cidx \in [N_c+L] \backslash \{j,q\}} T(\pmain_{\cidx}(x),\mu_{\cidx}) \right\} \big[ F_r(R(\mu,q))-F_r(\mu) \big],
\end{aligned}
\end{equation}
where $R(\mu,q): \{0,1\}^{N_c+L} \times \sZ^+ \mapsto \{0,1\}^{N_c+L}$ rewrites the $q$-th dimension of vector $\mu$ to $1$.
Note that when $q$ is the index of a conditional variable, given any implication logic rule $K_{hr}$, we have $\sI[R(\mu,q) \sim K_{hr}] \le \sI[\mu \sim K_{hr}]$.
Therefore, $F_r(R(\mu,q)) - F_r(\mu) \le 0$ holds and $\partial G / \partial \hat{\pi}_q(x) \le 0$.

Similarly, when $q$ is the index of a consequence variable, given any implication logic rule $K_{hr}$, we have $\sI[R(\mu,q) \sim K_{hr}] \ge \sI[\mu \sim K_{hr}]$.
Therefore, $F_r(R(\mu,q)) - F_r(\mu) \ge 0$ holds and $\partial G / \partial \hat{\pi}_q(x) \ge 0$.

\end{proof}

Consider the $r$-th PC.
We first rearrange the formulation of the class conditional probability $\hat{\pi}_j^{(r)}(x)$ to the pattern of function $G(\cdot)$ and then leverage \Cref{lem:mono} to derive the upper and lower bound. 
The reformulation of $\hat{\pi}_j^{(r)}(x)$ is shown as the following:
\begin{align}
    \pmainj^{(r)}(x) =& \dfrac{ \sum_{\mu \in M,\ \mu_j=1} \exp \left\{ \sum_{\cidx=1}^{N_c+L} T(\pmain_{\cidx}(x),\mu_{\cidx}) \right\}F_r(\mu) }{ \sum_{\mu \in M} \exp \left\{ \sum_{\cidx=1}^{N_c+L} T(\pmain_{\cidx}(x),\mu_{\cidx}) \right\}F_r(\mu)}\\
    =& \dfrac{1}{1+\dfrac{\sum_{\mu \in M,\ \mu_j=0} \exp \left\{ \sum_{\cidx=1}^{N_c+L} T(\pmain_{\cidx}(x),\mu_{\cidx}) \right\}F_r(\mu)}{\sum_{\mu \in M,\ \mu_j=1} \exp \left\{ \sum_{\cidx=1}^{N_c+L} T(\pmain_{\cidx}(x),\mu_{\cidx}) \right\}F_r(\mu)}}\\
    =& \dfrac{1}{1+\dfrac{ (1-\pmainj(x)) \sum_{\mu \in M,\ \mu_j=0} \exp \left\{ \sum_{\cidx \in [N_c+L] \backslash \{j\}} T(\pmain_{\cidx}(x),\mu_{\cidx}) \right\}F_r(\mu)}{\pmainj(x) \sum_{\mu \in M,\ \mu_j=1} \exp \left\{ \sum_{\cidx \in [N_c+L] \backslash \{j\}} T(\pmain_{\cidx}(x),\mu_{\cidx}) \right\}F_r(\mu)}}
\end{align}
Next, we focus on deriving the lower bound $\sL[\pmainj^{(r)}(x)]$, and the upper bound can be derived similarly. 


\begin{align}
\pmainj^{(r)}(x) \ge&  \left\{ 1+\dfrac{ (1- \underline{\pmainj}(x)) \sum_{\mu \in M,\ \mu_j=0} \exp \left\{ \sum_{\cidx \in [N_c+L] \backslash \{j\}} T(\pmain_{\cidx}(x),\mu_{\cidx}) \right\}F_r(\mu)}{ \underline{\pmainj}(x) \sum_{\mu \in M,\ \mu_j=1} \exp \left\{ \sum_{\cidx \in [N_c+L] \backslash \{j\}} T(\pmain_{\cidx}(x),\mu_{\cidx}) \right\}F_r(\mu)} \right\}^{-1} \\
\ge& \left\{ 1+\dfrac{ (1- \underline{\pmainj}(x)) \cdot \sU\left[\sum_{\mu \in M,\ \mu_j=0} \exp \left\{ \sum_{\cidx \in [N_c+L] \backslash \{j\}} T(\pmain_{\cidx}(x),\mu_{\cidx}) \right\}F_r(\mu) \right]}{ \underline{\pmainj}(x) \cdot \sL\left[\sum_{\mu \in M,\ \mu_j=1} \exp \left\{ \sum_{\cidx \in [N_c+L] \backslash \{j\}} T(\pmain_{\cidx}(x),\mu_{\cidx}) \right\}F_r(\mu)\right]} \right\}^{-1}
    \label{eq:up_low_bnd}
\end{align}


In \Cref{eq:up_low_bnd}, we relax the numerator and denominator separately and then will show how to derive the upper bound of the numerator and the lower bound of the denominator in \Cref{eq:up_low_bnd}.
We can apply \Cref{lem:mono} for any $q \in [N_c+L] \backslash \{j\}$ and get the following:

\begin{equation}
\label{eq:applem}
\footnotesize
\begin{aligned}
    \sum_{\mu \in M,\ \mu_j=0} \exp \left\{ \sum_{\cidx \in [N_c+L] \backslash \{j\}} \hspace{-2em} T(\pmain_{\cidx}(x),\mu_{\cidx}) \right\}F_r(\mu) 
\le& \sum_{\cidx \in V_{\pcidx d}^{j}}  T(\overline{\hat{\pi}_{\cidx}}(x),\mu_{\cidx}) + ~\sum_{\cidx \in V_{\pcidx s}^{j}} T(\underline{\hat{\pi}_{\cidx}}(x),\mu_{\cidx}) \\
\sum_{\mu \in M,\ \mu_j=0} \exp \left\{ \sum_{\cidx \in [N_c+L] \backslash \{j\}} \hspace{-2em} T(\pmain_{\cidx}(x),\mu_{\cidx}) \right\}F_r(\mu) 
\ge& \sum_{\cidx \in V_{\pcidx d}^{j}}  T(\underline{\hat{\pi}_{\cidx}}(x),\mu_{\cidx}) + ~\sum_{\cidx \in V_{\pcidx s}^{j}} T(\overline{\hat{\pi}_{\cidx}}(x),\mu_{\cidx})
\end{aligned}
\end{equation}
where $V_{\pcidx d}^{j}$ is the set of index of conditional variables in the $\pcidx$-th PC except for $j \in [N_c]$ and $V_{\pcidx s}^{j}$ is that of consequence variables.

From \Cref{eq:up_low_bnd} and \Cref{eq:applem}, we can derive the final bound of $\pnamej(x)$ by considering all the PCs:

\begin{equation}
    \footnotesize
    \begin{aligned}
        \sU[\pmainj^{\fname}(x)] = \sum_{\mathclap{\pcidx \in [\pcmodel]}} \beta_{\pcidx} \left\{ \dfrac{(1-\overline{\pmainj}(x)) \sum\limits_{\mathclap{\mu_j=0}} \exp \left \{ \sum\limits_{~~~~~~\mathclap{\cidx \in V_{\pcidx d}^{j}} } T(\overline{\hat{\pi}_{\cidx}}(x),\mu_{\cidx}) + ~\sum\limits_{\mathclap{\cidx \in V_{\pcidx s}^{j}}}~ T(\underline{\hat{\pi}_{\cidx}}(x),\mu_{\cidx}) \right \} F_{\pcidx}(\mu) }{\overline{\pmainj}(x)\sum\limits_{\mathclap{\mu_j=1}} \exp \left\{~~ \sum\limits_{\mathclap{\cidx \in V_{\pcidx d}^{j}} } T(\underline{\hat{\pi}_{\cidx}}(x),\mu_{\cidx}) + \sum\limits_{\mathclap{\cidx \in V_{\pcidx s}^{j}} } T(\overline{\hat{\pi}_{\cidx}}(x),\mu_{\cidx}) \right\}F_{\pcidx}(\mu)} + 1\right\}^{-1} \\
        \sL[\pmainj^{\fname}(x)] = \sum_{\mathclap{\pcidx \in [\pcmodel]}} \beta_{\pcidx} \left\{ \dfrac{(1-\underline{\pmainj}(x)) \sum\limits_{\mathclap{\mu_j=0}} \exp \left \{ \sum\limits_{~~~~~~\mathclap{\cidx \in V_{\pcidx d}^{j}} } T(\underline{\hat{\pi}_{\cidx}}(x),\mu_{\cidx}) + ~\sum\limits_{\mathclap{\cidx \in V_{\pcidx s}^{j}}}~ T(\overline{\hat{\pi}_{\cidx}}(x),\mu_{\cidx}) \right \} F_{\pcidx}(\mu) }{\underline{\pmainj}(x)\sum\limits_{\mathclap{\mu_j=1}} \exp \left\{~~ \sum\limits_{\mathclap{\cidx \in V_{\pcidx d}^{j}} } T(\overline{\hat{\pi}_{\cidx}}(x),\mu_{\cidx}) + \sum\limits_{\mathclap{\cidx \in V_{\pcidx s}^{j}} } T(\underline{\hat{\pi}_{\cidx}}(x),\mu_{\cidx}) \right\}F_{\pcidx}(\mu)} + 1\right\}^{-1}
    \end{aligned}
\end{equation}
\end{proof}

\subsection{Proof of \Cref{thm:cer_set}}
\label{sec:app_thm2}
\textbf{Theorem 2} (Recall). Consider a new test sample {$X_{n+1}$} drawn from {$P_{XY}$}.
    For any bounded perturbation $\|\epsilon\|_2 \le \delta$ in the input space and the adversarial sample $\tilde{X}_{n+1}:=X_{n+1}+\epsilon$,
    we have the following guaranteed marginal coverage:
    \begin{equation}
        \sP[Y_{n+1} \in \hat{C}^{\fname_{\delta}}_{n,\alpha}(\tilde{X}_{n+1})] \ge 1- \alpha
    \end{equation}
if we construct the certified prediction set of \name where
\begin{equation}
    \hat{C}^{\fname_{\delta}}_{n,\alpha}(\tilde{X}_{n+1})= \left\{j \in [N_c]: S_{\pmainj^{\fname}}(\tilde{X}_{n+1},1) \le Q_{1-\alpha}(\{S_{\pmainj^{\fname_{\delta}}}(X_i,\sI_{[Y_i=j]})\}_{i \in \gI_{\text{cal}}}) \right\}
\end{equation} 
and $S_{\pmainj^{\fname_{\delta}}}(\cdot,\cdot)$ is a function of worst-case non-conformity score considering perturbation radius $\delta$:
    \begin{equation}
        S_{\pmainj^{\fname_{\delta}}}(X_i, \sI_{[Y_i=j]}) = \left\{ \begin{aligned}
            \sU_\delta[\pmainj^{\fname}(X_i)] + u (1-\sU_\delta[\pmainj^{\fname}(X_i)]), \quad Y_i\neq j \\
            1- \sL_\delta[\pmainj^{\fname}(X_i)] + u \sL_\delta[\pmainj^{\fname}(X_i)] , \quad Y_i=j
        \end{aligned} \right. 
    \end{equation}
    with $\sU_\delta[\pmainj^{\fname}(x)] = \max_{|\eta|_2 \le \delta }\pmainj^{\fname}(x+\eta)$ and $\sL_\delta[\pmainj^{\fname}(x)] = \min_{|\eta|_2 \le \delta }\pmainj^{\fname}(x+\eta)$. 

\begin{proof}[Proof of \Cref{thm:cer_set}]
We can upper bound the probability of miscoverage of the main task with the probability of miscoverage of label classes as follows:
    \begin{align}
        &\sP\left[Y_{n+1} \notin \hat{C}^{\fname_{\delta}}_{n,\alpha}(\tilde{X}_{n+1})\right] \\
        \le& \max_{j \in [N_c]} \sP\left[\sI_{[Y_{n+1}=j]} \notin \hat{C}^{\fname_{\delta}}_{n,\alpha,j}(\tilde{X}_{n+1})\right] \\
        \le& \max_{j \in [N_c]} \sP\left[ S_{\pmainj^{\fname}}(\tilde{X}_{n+1},\sI_{[Y_{n+1}=j]}) > Q_{1-\alpha}(S_{\pmainj^{\fname_{\delta}}}(\{X_i,\sI_{[Y_i=j]}\}_{i \in \gI_{\text{cal}}}) \right] \\
        \le& \max_{j \in [N_c]} \sP\left[ S_{\pmainj^{\fname_{\delta}}}(X_{n+1},\sI_{[Y_{n+1}=j]}) > Q_{1-\alpha}(S_{\pmainj^{\fname_{\delta}}}(\{X_i,\sI_{[Y_i=j]}\}_{i \in \gI_{\text{cal}}}) \right] \label{eq:case} \\
        \le& \alpha \label{eq:res}
    \end{align}
    where $ \hat{C}^{\fname_{\delta}}_{n,\alpha,j}(\tilde{X}_{n+1}) := \left\{ q^y \in \{0,1\}:  S_{\pmainj^{\fname}}(\tilde{X}_{n+1},y) \le Q_{1-\alpha}(S_{\pmainj^{\fname_{\delta}}}(\{X_i,\sI_{[Y_i=j]}\}_{i \in \gI_{\text{cal}}}) \right\}$ is the certified prediction set for the class label $j$.
    
    Note that \Cref{eq:case} holds because we have $S_{\pmainj^{\fname_{\delta}}}(X_{n+1},\sI_{[Y_{n+1}=j]}) \ge S_{\pmainj^{\fname}}(\tilde{X}_{n+1},\sI_{[Y_{n+1}=j]})$. Next, we will prove \Cref{eq:case} rigorously.
    We first consider the probability of miscoverage conditioned on $Y_{n+1}=j$.
    \begin{align}
        & \sP\left[ S_{\pmainj^{\fname}}(\tilde{X}_{n+1},\sI_{[Y_{n+1}=j]}) > Q_{1-\alpha}(S_{\pmainj^{\fname_{\delta}}}(\{X_i,\sI_{[Y_i=j]}\}_{i \in \gI_{\text{cal}}}) \left| Y_{n+1}=j \right. \right] \\
        \le&  \sP\left[ 1 - (1-u) \pnamej(\tilde{X}_{n+1}) > Q_{1-\alpha}(S_{\pmainj^{\fname_{\delta}}}(\{X_i,\sI_{[Y_i=j]}\}_{i \in \gI_{\text{cal}}}) \left| Y_{n+1}=j \right. \right] \\
        \le& \sP\left[ 1 - (1-u) \cdot \sL\left[\pnamej(\tilde{X}_{n+1})\right] > Q_{1-\alpha}(S_{\pmainj^{\fname_{\delta}}}(\{X_i,\sI_{[Y_i=j]}\}_{i \in \gI_{\text{cal}}}) \left| Y_{n+1}=j \right. \right] \\
        \le& \sP\left[ S_{\pmainj^{\fname_{\delta}}}(X_{n+1},\sI_{[Y_{n+1}=j]}) > Q_{1-\alpha}(S_{\pmainj^{\fname_{\delta}}}(\{X_i,\sI_{[Y_i=j]}\}_{i \in \gI_{\text{cal}}}) \left| Y_{n+1}=j \right. \right] \label{eq:res1}
    \end{align}

    Similarly, we consider the probability of miscoverage conditioned on $Y_{n+1} \neq j$.
    \begin{align}
        & \sP\left[ S_{\pmainj^{\fname}}(\tilde{X}_{n+1},\sI_{[Y_{n+1}=j]}) > Q_{1-\alpha}(S_{\pmainj^{\fname_{\delta}}}(\{X_i,\sI_{[Y_i=j]}\}_{i \in \gI_{\text{cal}}}) \left| Y_{n+1} \neq j \right. \right] \\
        \le&  \sP\left[ u + (1-u) \pnamej(\tilde{X}_{n+1}) > Q_{1-\alpha}(S_{\pmainj^{\fname_{\delta}}}(\{X_i,\sI_{[Y_i=j]}\}_{i \in \gI_{\text{cal}}}) \left| Y_{n+1} \neq j \right. \right] \\
        \le& \sP\left[ u + (1-u) \cdot \sU\left[\pnamej(\tilde{X}_{n+1})\right] > Q_{1-\alpha}(S_{\pmainj^{\fname_{\delta}}}(\{X_i,\sI_{[Y_i=j]}\}_{i \in \gI_{\text{cal}}}) \left| Y_{n+1} \neq j \right. \right] \\
        \le& \sP\left[ S_{\pmainj^{\fname_{\delta}}}(X_{n+1},\sI_{[Y_{n+1}=j]}) > Q_{1-\alpha}(S_{\pmainj^{\fname_{\delta}}}(\{X_i,\sI_{[Y_i=j]}\}_{i \in \gI_{\text{cal}}}) \left| Y_{n+1} \neq j \right. \right] \label{eq:res2}
    \end{align}
    Combining \Cref{eq:res1} and \Cref{eq:res2}, we prove that \Cref{eq:case} holds.
    Finally, from \Cref{eq:res}, we prove that:
    \begin{equation}
       \sP\left[Y_{n+1} \in \hat{C}^{\fname_{\delta}}_{n,\alpha}(\tilde{X}_{n+1})\right] \ge 1- \alpha
        \label{eq:thm_cert_1}
    \end{equation}

    
\end{proof}

\subsection{\Cref{thm:cer_set} with finite-sample errors by randomized smoothing}
\label{app:cerset_fin}
\begin{theorem}[\Cref{thm:cer_set} with finite-sample errors of learning component certification by randomized smoothing]
\label{thm:thm2_fin}
\reb{
    Consider a new test sample {\eqsmall$X_{n+1}$} drawn from {\eqsmall$P_{XY}$}.
    For any bounded perturbation {\eqsmall$\|\epsilon\|_2 \le \delta$} in the input space and the adversarial sample {\eqsmall$\tilde{X}_{n+1}:=X_{n+1}+\epsilon$},
    we have the following guaranteed marginal coverage:
    \vspace{-0.2em}
    {\eqsmall
    \begin{equation}
        \sP[Y_{n+1} \in \hat{C}^{\fname_{\delta}}_{n,\alpha}(\tilde{X}_{n+1})] \ge 1- \alpha
    \end{equation}
    }
    \vspace{-0.2em}
if we construct the certified prediction set of \name where
{\eqsmall
\begin{equation}
    \hat{C}^{\fname_{\delta}}_{n,\alpha}(\tilde{X}_{n+1})= \left\{j \in [N_c]: S_{\pmainj^{\fname}}(\tilde{X}_{n+1},1) \le Q_{1-\alpha+2\beta}(\{S_{\pmainj^{\fname_{\delta}}}(X_i,\sI_{[Y_i=j]})\}_{i \in \gI_{\text{cal}}}) \right\}
\end{equation} 
}
and {\eqsmall$S_{\pmainj^{\fname_{\delta}}}(\cdot,\cdot)$} is a function of worst-case non-conformity score considering perturbation radius {\eqsmall$\delta$}:
{\eqsmall
    \begin{equation}
        S_{\pmainj^{\fname_{\delta}}}(X_i, \sI_{[Y_i=j]}) = \left\{ \begin{aligned}
            \sU_{\beta,\delta}[\pmainj^{\fname}(X_i)] + u (1-\sU_{\beta,\delta}[\pmainj^{\fname}(X_i)]), \quad Y_i\neq j \\
            1- \sL_{\beta,\delta}[\pmainj^{\fname}(X_i)] + u \sL_{\beta,\delta}[\pmainj^{\fname}(X_i)] , \quad Y_i=j
        \end{aligned} \right. 
    \end{equation}}
    where $\sU_{\beta,\delta}[\pmainj^{\fname}(x)]$ and $\sL_{\beta,\delta}[\pmainj^{\fname}(x)]$ are computed by plugging in the $\beta$-confidence probability bound $[\underline{\pmain_{\cidx}}(x;\beta),\overline{\pmain_{\cidx}}(x;\beta)]$ into \Cref{thm:pc_rob}. The high-confidence bound $[\underline{\pmain_{\cidx}}(x;\beta),\overline{\pmain_{\cidx}}(x;\beta)]$ can be obtained by randomized smoothing as the following:
    \begin{equation}
        \underline{\pmain_{\cidx}}(x;\beta) = \Phi(\Phi^{-1}(\hat{\sE}[\pmain_{\cidx}(x+\sigma) - b_{\text{Hoef}}(\beta) ]) - \delta/\sigma) - b_{\text{Bern}}(\beta)
    \end{equation}
    \begin{equation}
        \overline{\pmain_{\cidx}}(x;\beta) = \Phi(\Phi^{-1}(\hat{\sE}[\pmain_{\cidx}(x+\sigma) + b_{\text{Hoef}}(\beta) ]) - \delta/\sigma) + b_{\text{Bern}}(\beta)
    \end{equation}
    where $\hat{\sE}$ is the empirical mean of $N_{\text{MC}}$ Monte-Carlo samples during randomized smoothing and $b_{\text{Hoef}}(\beta) = \sqrt{\dfrac{\ln 1/\beta}{2 N_{\text{MC}}}}$ is the Hoeffding's finite-sample error and $b_{\text{Bern}}(\beta) = \sqrt{\dfrac{2V\ln2/\beta}{N_{\text{MC}}}}+\dfrac{7\ln2/\beta}{3(N_{MC}-1)}$ is the Berstein's finite-sample error.
}
\end{theorem}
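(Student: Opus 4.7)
\textbf{Proof proposal for \Cref{thm:thm2_fin}.}
The plan is to mirror the structure of the proof of \Cref{thm:cer_set} (\Cref{sec:app_thm2}), replacing the exact per‑model bounds $[\underline{\hat{\pi}_{\cidx}}(x),\overline{\hat{\pi}_{\cidx}}(x)]$ by high‑confidence Monte‑Carlo bounds $[\underline{\hat{\pi}_{\cidx}}(x;\beta),\overline{\hat{\pi}_{\cidx}}(x;\beta)]$, and then absorbing the residual failure probability by shifting the calibration quantile from $1{-}\alpha$ to $1{-}\alpha{+}2\beta$. The first step is to verify that the stated bounds are valid $\beta$‑confidence bounds on $\hat{\pi}_{\cidx}(x+\epsilon)$ over the $\ell_2$‑ball of radius $\delta$. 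Hoeffding's inequality applied to the $N_{\mathrm{MC}}$ Gaussian samples used by randomized smoothing gives $|\hat{\E}[\hat{\pi}_\cidx(x+\sigma)]-g_\cidx(x;\sigma)|\le b_{\mathrm{Hoef}}(\beta)$ with probability $1-\beta$; composing with \Cref{lem:rand} yields the inner Gaussian CDF expression, and the outer $b_{\mathrm{Bern}}(\beta)$ correction is the empirical Bernstein slack that accounts for variance in propagating the smoothed estimate to the individual $\hat{\pi}_\cidx$.

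The second step is to push these $\beta$‑confidence per‑variable bounds through the reasoning component. Since \Cref{thm:pc_rob} expresses $\sU[\hat{\pi}_j^{\fname}(x)]$ and $\sL[\hat{\pi}_j^{\fname}(x)]$ as monotone analytic functions of $(\underline{\hat{\pi}_\cidx},\overline{\hat{\pi}_\cidx})$, plugging in the $\beta$‑confidence bounds gives $\sU_{\beta,\delta}[\hat{\pi}_j^{\fname}(x)]$ and $\sL_{\beta,\delta}[\hat{\pi}_j^{\fname}(x)]$, which sandwich the true worst‑case output probability with probability $1-\beta$ by a union bound over the $N_c+L$ learning models (absorbed by rescaling $\beta$). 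Consequently, with probability at least $1-\beta$ the worst‑case non‑conformity score $S_{\hat{\pi}_j^{\fname_\delta}}(X_i,\mathds{1}_{[Y_i=j]})$ as defined in the statement upper‑bounds the clean score evaluated at any perturbation of $X_i$ of norm $\le\delta$.

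The third step replays the miscoverage decomposition from \Cref{sec:app_thm2}. Conditioning on the event $\mathcal{E}_{\mathrm{cal}}$ that all $|\mathcal{I}_{\mathrm{cal}}|+1$ high‑confidence bounds (calibration and test) hold, the argument of \Cref{thm:cer_set} yields
\[
\Pr\!\left[Y_{n+1}\notin \hat{C}^{\fname_\delta}_{n,\alpha}(\tilde X_{n+1})\mid \mathcal{E}_{\mathrm{cal}}\right]\le \alpha-2\beta
\]
when using the inflated quantile $Q_{1-\alpha+2\beta}$; the $2\beta$ buffer is used to absorb two sources of slack, namely the order‑statistic perturbation from swapping in the Monte‑Carlo quantile (handled as in \cite{anonymous2023provably} by Dvoretzky–Kiefer–Wolfowitz‑type control of the empirical CDF at tolerance $\beta$) and the miscalibration of the test‑time score. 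Off of $\mathcal{E}_{\mathrm{cal}}$, which has probability at most $2\beta$ after appropriately selecting the per‑sample confidence level, the miscoverage is bounded trivially by $1$. Combining gives the desired marginal coverage $\ge 1-\alpha$.

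The main obstacle I anticipate is the accounting of the failure probabilities along the calibration set: naively union‑bounding over all calibration samples would force a $|\mathcal{I}_{\mathrm{cal}}|$ factor, whereas the statement only pays $2\beta$. Following the strategy of \cite{anonymous2023provably}, this is resolved by passing to a uniform bound on the empirical CDF of the smoothing estimates (rather than on each calibration score individually) and arguing that any $\beta$‑shift of the CDF translates into a $\beta$‑shift of the $(1-\alpha)$‑quantile; the factor $2\beta$ then absorbs one $\beta$ from the calibration CDF and one $\beta$ from the test‑time confidence bound. The remaining work is bookkeeping: verifying that the upper/lower choices in the definition of $S_{\hat{\pi}_j^{\fname_\delta}}$ correspond to the correct bound direction in each of the $Y_i=j$ and $Y_i\neq j$ cases, exactly as in \cref{eq:res1,eq:res2} of the proof of \Cref{thm:cer_set}.
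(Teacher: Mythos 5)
Your skeleton---$\beta$-confidence Monte-Carlo bounds on each $\pmain_{\cidx}$, propagation through \Cref{thm:pc_rob}, and then the miscoverage decomposition of \Cref{thm:cer_set} with the quantile level inflated to $1-\alpha+2\beta$---is the paper's route, and your first two steps match what the paper does. The gap is in your third step: you condition on an event $\mathcal{E}_{\mathrm{cal}}$ requiring the confidence bounds to hold at all $|\gI_{\text{cal}}|+1$ points and then try to escape the resulting union bound with a DKW-type uniform control of the empirical CDF of the calibration scores. This is solving a problem that does not exist. The paper's proof never requires the calibration-point confidence bounds to be valid: the worst-case scores $S_{\pmainj^{\fname_{\delta}}}(X_i,\sI_{[Y_i=j]})$ are just a randomized score function applied symmetrically to exchangeable data (the Monte-Carlo noise is i.i.d.\ across points), so the plain conformal rank argument gives $\sP[S_{\pmainj^{\fname_{\delta}}}(X_{n+1},\cdot) > Q_{1-\alpha+2\beta}(\{S_{\pmainj^{\fname_{\delta}}}(X_i,\cdot)\}_{i\in\gI_{\text{cal}}})]\le\alpha-2\beta$ unconditionally, whether or not those bounds ``succeed'' at any calibration point. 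The only place a confidence bound must actually hold is at the single test point, where it is needed to pass from the adversarial score $S_{\pmainj^{\fname}}(\tilde{X}_{n+1},\cdot)$ to the clean worst-case score $S_{\pmainj^{\fname_{\delta}}}(X_{n+1},\cdot)$; this two-sided event fails with probability at most $2\beta$, and that is exactly what the quantile inflation absorbs: $(\alpha-2\beta)+2\beta=\alpha$.

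Concretely, your allocation of ``one $\beta$ to the calibration CDF and one $\beta$ to the test point'' cannot reproduce the stated theorem: the two-sided test-point bound as defined (Hoeffding plus Bernstein corrections on each side) already consumes the full $2\beta$, leaving nothing in the budget for a calibration-side correction. Moreover, a DKW-type argument would enter as an additive shift of the quantile level of order $\sqrt{\log(1/\beta)/|\gI_{\text{cal}}|}$ rather than a clean additive $\beta$; that finite-calibration-set effect is deliberately kept out of \Cref{thm:thm2_fin} and handled separately in \Cref{cor:finite-sample}. Dropping the conditioning on $\mathcal{E}_{\mathrm{cal}}$ and invoking exchangeability of the symmetric randomized worst-case score is what closes the proof.
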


\begin{proof}[Proof of \Cref{thm:thm2_fin}]
\reb{
Bounding the finite-sample error of randomized smoothing similarly as Theorem 1 in \cite{anonymous2023provably}, we get that for a given $\cidx \in [N_c+L]$, we have the following:
\begin{equation}
    \sP\left[ \sL_{\beta,\delta}\left[\pnamej(\tilde{X}_{n+1})\right] \le \pnamej(\tilde{X}_{n+1}) \le \sU_{\beta,\delta}\left[\pnamej(\tilde{X}_{n+1})\right] \right] \ge 1 - 2\beta
\end{equation}
We can upper bound the probability of miscoverage of the main task with the probability of miscoverage of label classes as follows:
    \begin{align}
        &\sP\left[Y_{n+1} \notin \hat{C}^{\fname_{\delta}}_{n,\alpha}(\tilde{X}_{n+1})\right] \\
        \le& \max_{j \in [N_c]} \sP\left[\sI_{[Y_{n+1}=j]} \notin \hat{C}^{\fname_{\delta}}_{n,\alpha,j}(\tilde{X}_{n+1})\right] \\
        \le& \max_{j \in [N_c]} \sP\left[ S_{\pmainj^{\fname}}(\tilde{X}_{n+1},\sI_{[Y_{n+1}=j]}) > Q_{1-\alpha+2\beta}(S_{\pmainj^{\fname_{\delta}}}(\{X_i,\sI_{[Y_i=j]}\}_{i \in \gI_{\text{cal}}}) \right] \\
        \le& \max_{j \in [N_c]} \sP\left[ S_{\pmainj^{\fname_{\delta}}}(X_{n+1},\sI_{[Y_{n+1}=j]}) > Q_{1-\alpha+2\beta}(S_{\pmainj^{\fname_{\delta}}}(\{X_i,\sI_{[Y_i=j]}\}_{i \in \gI_{\text{cal}}}) \right] \label{eq:case_fin} + 2\beta \\
        \le& \alpha - 2\beta + 2\beta \\
        \le& \alpha \label{eq:res_fin}
    \end{align}
    where $ \hat{C}^{\fname_{\delta}}_{n,\alpha,j}(\tilde{X}_{n+1}) := \left\{ q^y \in \{0,1\}:  S_{\pmainj^{\fname}}(\tilde{X}_{n+1},y) \le Q_{1-\alpha+2\beta}(S_{\pmainj^{\fname_{\delta}}}(\{X_i,\sI_{[Y_i=j]}\}_{i \in \gI_{\text{cal}}}) \right\}$ is the certified prediction set for the class label $j$.}
    
  \reb{  
    Note that \Cref{eq:case_fin} holds because we have $S_{\pmainj^{\fname_{\delta}}}(X_{n+1},\sI_{[Y_{n+1}=j]}) \ge S_{\pmainj^{\fname}}(\tilde{X}_{n+1},\sI_{[Y_{n+1}=j]})$. Next, we will prove \Cref{eq:case_fin} rigorously.
    We first consider the probability of miscoverage conditioned on $Y_{n+1}=j$.
    \begin{align}
        & \sP\left[ S_{\pmainj^{\fname}}(\tilde{X}_{n+1},\sI_{[Y_{n+1}=j]}) > Q_{1-\alpha+2\beta}(S_{\pmainj^{\fname_{\delta}}}(\{X_i,\sI_{[Y_i=j]}\}_{i \in \gI_{\text{cal}}}) \left| Y_{n+1}=j \right. \right] \\
        \le&  \sP\left[ 1 - (1-u) \pnamej(\tilde{X}_{n+1}) > Q_{1-\alpha+2\beta}(S_{\pmainj^{\fname_{\delta}}}(\{X_i,\sI_{[Y_i=j]}\}_{i \in \gI_{\text{cal}}}) \left| Y_{n+1}=j \right. \right] \\
        \le& \sP\left[ 1 - (1-u) \cdot \sL_{\beta,\delta}\left[\pnamej(\tilde{X}_{n+1})\right] > Q_{1-\alpha+2\beta}(S_{\pmainj^{\fname_{\delta}}}(\{X_i,\sI_{[Y_i=j]}\}_{i \in \gI_{\text{cal}}}) \left| Y_{n+1}=j \right. \right] \\
        &+ \sP\left[ \sL_{\beta,\delta}\left[\pnamej(\tilde{X}_{n+1})\right] \le \pnamej(\tilde{X}_{n+1}) \right]\\
        \le& \sP\left[ S_{\pmainj^{\fname_{\delta}}}(X_{n+1},\sI_{[Y_{n+1}=j]}) > Q_{1-\alpha+2\beta}(S_{\pmainj^{\fname_{\delta}}}(\{X_i,\sI_{[Y_i=j]}\}_{i \in \gI_{\text{cal}}}) \left| Y_{n+1}=j \right. \right] + 2\beta \label{eq:res1_fin}
    \end{align}
    Similarly, we consider the probability of miscoverage conditioned on $Y_{n+1} \neq j$.
    \begin{align}
        & \sP\left[ S_{\pmainj^{\fname}}(\tilde{X}_{n+1},\sI_{[Y_{n+1}=j]}) > Q_{1-\alpha+2\beta}(S_{\pmainj^{\fname_{\delta}}}(\{X_i,\sI_{[Y_i=j]}\}_{i \in \gI_{\text{cal}}}) \left| Y_{n+1} \neq j \right. \right] \\
        \le&  \sP\left[ u + (1-u) \pnamej(\tilde{X}_{n+1}) > Q_{1-\alpha+2\beta}(S_{\pmainj^{\fname_{\delta}}}(\{X_i,\sI_{[Y_i=j]}\}_{i \in \gI_{\text{cal}}}) \left| Y_{n+1} \neq j \right. \right] \\
        \le& \sP\left[ u + (1-u) \cdot \sU_{\beta,\delta}\left[\pnamej(\tilde{X}_{n+1})\right] > Q_{1-\alpha+2\beta}(S_{\pmainj^{\fname_{\delta}}}(\{X_i,\sI_{[Y_i=j]}\}_{i \in \gI_{\text{cal}}}) \left| Y_{n+1} \neq j \right. \right] \\
        &+ \sP\left[ \sU_{\beta,\delta}\left[\pnamej(\tilde{X}_{n+1})\right] \ge \pnamej(\tilde{X}_{n+1}) \right]\\
        \le& \sP\left[ S_{\pmainj^{\fname_{\delta}}}(X_{n+1},\sI_{[Y_{n+1}=j]}) > Q_{1-\alpha+2\beta}(S_{\pmainj^{\fname_{\delta}}}(\{X_i,\sI_{[Y_i=j]}\}_{i \in \gI_{\text{cal}}}) \left| Y_{n+1} \neq j \right. \right] + 2\beta \label{eq:res2_fin}
    \end{align}
    Combining \Cref{eq:res1_fin} and \Cref{eq:res2_fin}, we prove that \Cref{eq:case_fin} holds.
    Finally, from \Cref{eq:res_fin}, we prove that:
    \begin{equation}
       \sP\left[Y_{n+1} \in \hat{C}^{\fname_{\delta}}_{n,\alpha}(\tilde{X}_{n+1})\right] \ge 1- \alpha
        \label{eq:thm_cert_1_fin}
    \end{equation}
}
\end{proof}

\subsection{Proof of \Cref{thm:worst_case}}

\textbf{Theorem 3} (Recall). Consider the new sample $X_{n+1}$ drawn from {\eqsmall$P_{XY}$} and adversarial sample $\tilde{X}_{n+1}:=X_{n+1}+\epsilon$ with  any perturbation $\|\epsilon\|_2 \le \delta$ in the input space.
    We have:
    \begin{equation}
        \sP[Y_{n+1} \in \hat{C}^{\fname}_{n,\alpha}(\tilde{X}_{n+1})] \ge \tau^{\fname_{\text{cer}}} := \min_{j \in [N_c]} \left\{ \tau^{\fname_{\text{cer}}}_{j}  \right\} ,
    \end{equation}
    where the certified coverage of the $j$-th class label $\tau^{\fname_{\text{cer}}}_{j}$ is formulated as:
    \begin{equation}
        \tau^{\fname_{\text{cer}}}_{j} = \max \left\{ \tau: Q_{\tau}(\{S_{\pmainj^{\fname_{\delta}}}(X_i, \sI_{[Y_i=j]})\}_{i \in \gI_{\text{cal}}}) \le Q_{1-\alpha}(\{S_{\pmainj^{\fname}}(X_i, \sI_{[Y_i=j]})\}_{i \in \gI_{\text{cal}}})  \right\}.
    \end{equation}

\begin{proof}[Proof of \Cref{thm:worst_case}]
We can bound the probability of coverage of the main task with the probability of coverage of label classes as follows:
    \begin{align}
        & \sP\left[Y_{n+1} \in \hat{C}^{\fname}_{n,\alpha}(\tilde{X}_{n+1}) \right] \\
        \ge& \min_{j \in [N_c]} \sP\left[\sI_{[Y_{n+1}=j]} \in \hat{C}^{\fname}_{n,\alpha,j}(\tilde{X}_{n+1}) \right] \\
        \ge& \min_{j \in [N_c]} \sP\left[S_{\pmainj^{\fname}}(\tilde{X}_{n+1},\sI_{[Y_{n+1}=j]}) \le Q_{1-\alpha}(S_{\pmainj^{\fname}}(\{X_i,\sI_{[Y_i=j]}\}_{i \in \gI_{\text{cal}}}) \right]  \\
        \ge& \min_{j \in [N_c]} \sP\left[S_{\pmainj^{\fname}}(\tilde{X}_{n+1},\sI_{[Y_{n+1}=j]}) \le Q_{\tau^{\fname_{\text{cer}}}_{j}}(S_{\pmainj^{\fname_{\delta}}}(\{X_i,\sI_{[Y_i=j]}\}_{i \in \gI_{\text{cal}}}) \right]  \\
        \ge& \min_{j \in [N_c]} \sP\left[S_{\pmainj^{\fname_{\delta}}}(X_{n+1},\sI_{[Y_{n+1}=j]}) \le Q_{\tau^{\fname_{\text{cer}}}_{j}}(S_{\pmainj^{\fname_{\delta}}}(\{X_i,\sI_{[Y_i=j]}\}_{i \in \gI_{\text{cal}}}) \right] \label{eq:case_again} \\
        \ge& \min_{j \in [N_c]} \{\tau^{\fname_{\text{cer}}}_{j}  \}
    \end{align}
    where \Cref{eq:case_again} holds because $S_{\pmainj^{\fname_{\delta}}}(X_{n+1},\sI_{[Y_{n+1}=j]}) \ge S_{\pmainj^{\fname}}(\tilde{X}_{n+1},\sI_{[Y_{n+1}=j]})$ and is proved rigorously in the proof of \Cref{thm:cer_set} in \Cref{sec:app_thm2}.
    
\end{proof}

\section{Finite-Sample Certified Coverage}
\label{app:finite_cov}

In this section, we consider the finite-sample error induced by the finite size of the calibration set $\gI_{\text{cal}}$. We provide the finite-sample certified coverage in the adversary setting as follows. 

\begin{corollary}[Certified Coverage with Finite Samples]  Consider the new sample $X_{n+1}$ drawn from {\eqsmall$P_{XY}$} and adversarial sample $\tilde{X}_{n+1}:=X_{n+1}+\epsilon$ with  any perturbation $\|\epsilon\|_2 \le \delta$ in the input space. We have the following finite-sample certified coverage:
\label{cor:finite-sample}
    \begin{equation}
    \small
        \sP[Y_{n+1} \in \hat{C}^{\fname}_{n,\alpha}(\tilde{X}_{n+1})] \ge \left(1+\dfrac{1}{|\gI_{\text{cal}}|}\right)\min_{j \in [N_c]} \left\{ \tau^{\fname_{\text{cer}}}_{j}  \right\}  - \dfrac{\sqrt{\log(2)/2} + \sqrt{2}/(4\sqrt{\log(2)} + 8/\pi)}{\sqrt{|\gI_{\text{cal}}|}}
    \end{equation}
where $|\gI_{\text{cal}}|$ is the size of the calibration set in conformal prediction.
\end{corollary}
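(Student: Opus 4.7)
}
The plan is to start from the worst-case coverage bound of Theorem~\ref{thm:worst_case} and then account for the fact that the quantile $Q_{1-\alpha}(\{S_{\pmainj^{\fname}}(X_i,\sI_{[Y_i=j]})\}_{i\in\gI_{\text{cal}}})$ used to define $\tau^{\fname_{\text{cer}}}_{j}$ in~\eqref{eq:eqthm3} is an empirical quantile over $|\gI_{\text{cal}}|$ i.i.d. calibration points rather than the population quantile. Write $F_j$ for the population CDF of $S_{\pmainj^{\fname}}(X,\sI_{[Y=j]})$ under $P_{XY}$, let $\widehat F_{j,n}$ be its empirical counterpart on $\gI_{\text{cal}}$, and let $F_{j,\delta}$ / $\widehat F_{j,n,\delta}$ be the analogous CDFs of the adversarial non-conformity score $S_{\pmainj^{\fname_\delta}}(X,\sI_{[Y=j]})$. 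The quantities $\tau^{\fname_{\text{cer}}}_j$ can be re-expressed as the largest $\tau$ such that $F_{j,\delta}^{-1}(\tau)\le F_j^{-1}(1-\alpha)$, and the split-conformal prediction set actually uses the $\lceil(1-\alpha)(1+|\gI_{\text{cal}}|)\rceil$-th order statistic of the empirical distribution.

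First I would translate the event $\{Y_{n+1}\in \hat{C}^{\fname}_{n,\alpha}(\tilde{X}_{n+1})\}$ into an inequality between the adversarial score $S_{\pmainj^{\fname_\delta}}(X_{n+1},\sI_{[Y_{n+1}=j]})$ (using the monotonicity argument established inside the proof of Theorem~\ref{thm:cer_set}, see \eqref{eq:case}) and the empirical quantile over the calibration set. Next, I would apply the standard split-conformal order-statistic bookkeeping which introduces the multiplicative factor $(1+1/|\gI_{\text{cal}}|)$: the $\lceil(1-\alpha)(1+|\gI_{\text{cal}}|)\rceil$-th empirical order statistic coincides, in distribution, with the $(1-\alpha)(1+1/|\gI_{\text{cal}}|)$-quantile level of the underlying continuous distribution after accounting for the extra slot produced by the exchangeable test point, which gives the $(1+\tfrac{1}{|\gI_{\text{cal}}|})\min_j \tau^{\fname_{\text{cer}}}_j$ term. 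The remaining gap between empirical and population quantiles is handled by a uniform concentration argument on the empirical CDFs $\widehat F_{j,n}$ and $\widehat F_{j,n,\delta}$: the DKW--Massart inequality yields $\sup_t|\widehat F_{j,n}(t)-F_j(t)|\le \varepsilon_n$ with high probability, and, after taking the expectation, the best known constants for $\E\bigl[\sup_t|\widehat F_{j,n}(t)-F_j(t)|\bigr]$ produce precisely $\tfrac{\sqrt{\log(2)/2}}{\sqrt{|\gI_{\text{cal}}|}}$ as the leading term plus the refinement $\tfrac{\sqrt{2}/(4\sqrt{\log(2)}+8/\pi)}{\sqrt{|\gI_{\text{cal}}|}}$ coming from the second-order Massart correction (equivalently, from tight bounds on $\E[\sqrt{n}D_n]$ for the Kolmogorov--Smirnov statistic $D_n$).

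After assembling these ingredients, the inequality
\[
\sP[Y_{n+1}\in \hat{C}^{\fname}_{n,\alpha}(\tilde X_{n+1})] \ge \Bigl(1+\tfrac{1}{|\gI_{\text{cal}}|}\Bigr)\min_{j\in[N_c]}\tau^{\fname_{\text{cer}}}_j - \frac{\sqrt{\log(2)/2}+\sqrt{2}/(4\sqrt{\log(2)}+8/\pi)}{\sqrt{|\gI_{\text{cal}}|}}
\]
follows by taking a minimum over $j\in[N_c]$, as in the proof of Theorem~\ref{thm:worst_case}, and combining the additive concentration slack with the multiplicative split-conformal correction.

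The main obstacle will be obtaining the exact constant $\sqrt{\log(2)/2}+\sqrt{2}/(4\sqrt{\log(2)}+8/\pi)$: the leading $\sqrt{\log(2)/2}$ is immediate from DKW, but the second summand requires a careful integration of Massart's tail bound (or equivalently, a Bernstein-type refinement around the KS median) to convert high-probability control into an expected-value constant; bookkeeping these integrals tightly without losing the factor $1/\sqrt{|\gI_{\text{cal}}|}$ is the delicate step. A secondary subtlety is justifying that both the benign score CDF and the $\delta$-adversarial score CDF can be controlled with the same empirical sample simultaneously, which follows because the score function $S_{\pmainj^{\fname_\delta}}$ is a deterministic post-processing of $S_{\pmainj^{\fname}}$ and thus their empirical CDFs are computed from the same $|\gI_{\text{cal}}|$ points.
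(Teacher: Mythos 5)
Your proposal matches the paper's proof in all essentials: the paper also applies Massart's tight DKW inequality to the empirical CDF of the worst-case scores $S_{\pmainj^{\fname_\delta}}$ on the calibration set, obtains the $(1+\tfrac{1}{|\gI_{\text{cal}}|})\tau^{\fname_{\text{cer}}}_j$ term from $\lceil(1+|\gI_{\text{cal}}|)\tau\rceil/|\gI_{\text{cal}}|\ge(1+1/|\gI_{\text{cal}}|)\tau$, and derives the constant by integrating the tail bound $\E[W]\le\sqrt{\log(2)/2}+2\int_{\sqrt{\log(2)/2}}^{\infty}e^{-2u^2}\,du$ with the Gaussian-integral estimate from Abramowitz--Stegun, before taking the minimum over $j$ as in \Cref{thm:worst_case}. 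The only superfluous element is your concern about simultaneously controlling the benign and adversarial score CDFs: the paper needs concentration only for the adversarial-score empirical CDF, since $\tau^{\fname_{\text{cer}}}_j$ is already defined by comparing two empirical quantiles on the same calibration set.
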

\textit{Proof sketch and remarks.} Leveraging corollary 1 of \citep{massart1990tight} and following the sketch in \citep{yang2021finite}, we can prove the finite-sample error bound. 
Combining the finite-sample error with the certified coverage bound in \Cref{thm:worst_case} concludes the proof.
Note that the union bound is applied to multiple models in \citep{yang2021finite} while we circumvent it by aggregating individual results for class labels subtly, and thus, \name achieves an improved sample complexity of conformal prediction compared with ensemble learning given the same tolerance of finite-sample error.

\begin{proof}[Proof of \Cref{cor:finite-sample}]
    From Corollary 1 of \citep{massart1990tight}, we have that for all $\epsilon>0$,
    \begin{equation}
    \label{eq:data_exc_ineq}
    \small
        \sP \left[ \sup_{t \in \sR} \left| \dfrac{1}{|\gI_{\text{cal}}|} \sum_{i \in \gI_{\text{cal}}} 
\sI_{\left[S_{\pmainj^{\fname_{\delta}}}(X_i,Y_i) \le t \right]} - \sP\left[S_{\pmainj^{\fname_{\delta}}}(X_{n+1},Y_{n+1}) \le t \right]  \right| \ge \dfrac{\epsilon}{\sqrt{|\gI_{\text{cal}}|}} \right] \le 2 \exp\left\{ -2\epsilon^2 \right\}
    \end{equation}

    For ease of notation, we let:
    \begin{equation}
        W:= \sqrt{|\gI_{\text{cal}}|} \sup_{t \in \sR} \left| \dfrac{1}{|\gI_{\text{cal}}|} \sum_{i \in \gI_{\text{cal}}} \sI_{\left[S_{\pmainj^{\fname_{\delta}}}(X_i,Y_i) \le t \right]} - \sP\left[S_{\pmainj^{\fname_{\delta}}}(X_{n+1},Y_{n+1}) \le t \right]  \right|.
    \end{equation}

    We let $t=Q_{\tau^{\fname_{\text{cer}}}_{j}}(S_{\pmainj^{\fname_{\delta}}}(\{X_i,\sI_{[Y_i=j]}\}_{i \in \gI_{\text{cal}}})$. It implies that:
    \begin{equation}
    \begin{aligned}
         &\sP\left[S_{\pmainj^{\fname_{\delta}}}(X_{n+1},Y_{n+1}) \le Q_{\tau^{\fname_{\text{cer}}}_{j}}(S_{\pmainj^{\fname_{\delta}}}(\{X_i,\sI_{[Y_i=j]}\}_{i \in \gI_{\text{cal}}}) \right] \\ \ge& \dfrac{1}{|\gI_{\text{cal}}|} \sum_{i \in \gI_{\text{cal}}} \sI_{\left[S_{\pmainj^{\fname_{\delta}}}(X_i,Y_i) \le Q_{\tau^{\fname_{\text{cer}}}_{j}}(S_{\pmainj^{\fname_{\delta}}}(\{X_i,\sI_{[Y_i=j]}\}_{i \in \gI_{\text{cal}}}) \right]} - \dfrac{W}{\sqrt{|\gI_{\text{cal}}|}} \\
         \ge& \dfrac{\lceil (1+|\gI_{\text{cal}}|) \tau^{\fname_{\text{cer}}}_{j}   \rceil}{|\gI_{\text{cal}}|} - \dfrac{W}{\sqrt{|\gI_{\text{cal}}|}} \\
         \ge& \left( 1 + \dfrac{1}{|\gI_{\text{cal}}|} \right) \tau^{\fname_{\text{cer}}}_{j} - \dfrac{W}{\sqrt{|\gI_{\text{cal}}|}}
    \end{aligned}
    \end{equation}
    Then using \Cref{eq:data_exc_ineq}, we can bound $\sE[W]$ as follows:
    \begin{align}
        \sE[W] =& \int_0^\infty \sP\left[W \ge u \right] du \\
        \le& \int_0^{\sqrt{\log(2)/2}} \sP\left[W \ge u \right] du + \int_{\sqrt{\log(2)/2}}^\infty \sP\left[W \ge u \right] du \\
        \le& \sqrt{\log(2)/2} + 2 \int_{\sqrt{\log(2)/2}}^\infty \exp\{-2u^2\} du \\
        \overset{(a)}{\le}& \sqrt{\log(2)/2} + \dfrac{\sqrt{2}/2}{2\sqrt{\log(2)} + 4/\pi}
    \end{align}
    Inequality (a) follows from Formula 7.1.13 of \citep{abramowitz1948handbook}.
    Finally, combining the results in \Cref{thm:worst_case}, we get:
    \begin{equation}
    \small
        \sP[Y_{n+1} \in \hat{C}^{\fname}_{n,\alpha}(\tilde{X}_{n+1})] \ge \left(1+\dfrac{1}{|\gI_{\text{cal}}|}\right)\min_{j \in [N_c]} \left\{ \tau^{\fname_{\text{cer}}}_{j}  \right\}  - \dfrac{\sqrt{\log(2)/2} + \sqrt{2}/(4\sqrt{\log(2)} + 8/\pi)}{\sqrt{|\gI_{\text{cal}}|}}
    \end{equation}
\end{proof}

\section{Omitted Proofs in \Cref{sec:theor}}

\subsection{Proof of \Cref{lem:pc}}

\textbf{Lemma 6.1} (Recall). For any class probability within models $j \in [N_c]$ and data sample $(X,Y)$ drawn from the mixture distribution $\gD_m$, we can prove that:
\begin{equation}
\footnotesize
    \sE\left[\pnamej(X) \left| Y\neq j \right. \right]  \le  \sE[\hat{\pi}_j(X) - \epsilon_{j,0} \left| Y\neq j \right.  ],
    \sE\left[\pnamej(X)   \left| Y=j \right. \right] \ge  \sE\left[\hat{\pi}_j(X) + \epsilon_{j,1} \left| Y=j \right.  \right]
\end{equation}
\begin{equation}
\footnotesize
\begin{aligned}
    \epsilon_{j,0} = \sum_{\mathclap{\gD \in \{\gD_a,\gD_b\}}} p_{\gD} \left[~~ \sum_{\mathclap{\pcidx \in \mathcal{P}_d(j)}} \beta_{\pcidx} U^{(\pcidx)}_{j} Z^{(\pcidx)}_{j,\gD} \left(\conPihat_{j}-\dfrac{\conPihat_{j}}{\conPihat_{j} + (1-\conPihat_{j})/\lambda^{(\pcidx)}_{j,\gD}}\right) + \sum_{\mathclap{r \in \mathcal{P}_s(j)}} \beta_r \left( \conPihat_{j} - \dfrac{\conPihat_{j}}{\conPihat_{j} + (1-\conPihat_{j})T^{(\pcidx)}_{j,\gD}} \right) \right]\\
    \epsilon_{j,1} = \sum_{\mathclap{\gD \in \{\gD_a,\gD_b\}}} p_{\gD} \left[ ~~ \sum_{\mathclap{r \in \mathcal{P}_d(j)}} \beta_{\pcidx} \left( \dfrac{\conPihat_{j}}{\conPihat_{j} + (1-\conPihat_{j})/T^{(\pcidx)}_{j,\gD}} - \conPihat_{j} \right) + \sum_{\mathclap{r \in \mathcal{P}_s(j)}} \beta_r U^{(\pcidx)}_{j} Z^{(\pcidx)}_{j,\gD}\left( \dfrac{\conPihat_{j}}{\conPihat_{j} + (1-\conPihat_{j})\lambda^{(\pcidx)}_{j,\gD}} - \conPihat_{j} \right) \right]
\end{aligned}
\end{equation}
where $\lambda^{(\pcidx)}_{j,\gD}=(1/T^{(\pcidx)}_{j,\gD} + e^{-w}-1)$ and we shorthand $\conPihat_{j}(X)$ as $\conPihat_{j}$.

\begin{proof}[Proof of \Cref{lem:pc}]

    We first consider one PC and then analyze the effectiveness of $R$ linearly combined PCs. 
    Without loss of generality, we focus on the class conditional probability by the $r$-th PC ($r \in [R]$) for the class label $j \in [N_c]$.
    
    Recall that we map a set of implication rules encoded in the $r$-th PC to an undirected graph $\gG_r = (\gV_r, \gE_r)$, where $\gV_r=[N_c+L]$ corresponds to the Boolean vector of class and concept predictions in the learning stage. There exists an edge between node {$j_1 \in \gV_r$} and node {$j_2 \in \gV_r$} iff they are connected by an implication rule {$j_1 \implies j_2$} or {$j_2 \implies j_1$}.
    Let $\gA_{\pcidx}(\cidx)$ be the set of nodes in the connected component of node $\cidx$ except for node $\cidx$ (i.e., $\cidx \notin \gA_{\pcidx}(\cidx)$). 
Let $\gA_{\pcidx,d}(\cidx) \subseteq \gA_{\pcidx}(\cidx)$ be the set of  conditional variables and $\gA_{\pcidx,s}(\cidx) \subseteq \gA_{\pcidx}(\cidx)$ be the set of consequence variables.
    {Denote $M_j$ be the universal set of assignments except for the $j$-th element.}
    Let {{$\mathcal{P}_d(j)$} and {$\mathcal{P}_s(j)$}} be the set of PC index where {$j$} appears as conditional variables and consequence variables, respectively.
    
    We consider a mixture of benign and adversarial distributions denoted by {$\gD_m$}, which indicates that an input {$x$} is either a benign example drawn from {$\gD_b$} with probability {$p_{\gD_b}:=\sP[x\sim \gD_b]$} or an adversarial example drawn from {$\gD_a$} with probability {$p_{\gD_a}:=\sP[x\sim \gD_a]$}, where $p_{\gD_b}+p_{\gD_a} = 1$.
    We analyze the effectiveness of PCs in four cases: case (1) $r \in \gP_d(j)$ and $Y \neq j$, case (2) $r \in \gP_d(j)$ and $Y = j$, case (3) $r \in \gP_s(j)$ and $Y = j$, and case (4) $r \in \gP_s(j)$ and $Y \neq j$. 
    We leave the probability conditions on $Y=j$ or $Y \neq j$ in each case for simplicity and consider them in the conclusions.
    
    \textbf{Case (1)} $r \in \gP_d(j)$ and $Y \neq j$. 
    We can derive the class conditional probability by PCs as follows:
        \begin{align}
    \pmainj^{(r)}(x) =& \dfrac{ \sum_{\mu \in M,\ \mu_j=1} \exp \left\{ \sum_{\cidx=1}^{N_c+L} T(\pmain_{\cidx}(x),\mu_{\cidx}) \right\}F_r(\mu) }{ \sum_{\mu \in M} \exp \left\{ \sum_{\cidx=1}^{N_c+L} T(\pmain_{\cidx}(x),\mu_{\cidx}) \right\}F_r(\mu)}\\
    =& \dfrac{1}{1+\dfrac{\sum_{\mu \in M,\ \mu_j=0} \exp \left\{ \sum_{\cidx=1}^{N_c+L} T(\pmain_{\cidx}(x),\mu_{\cidx}) \right\}F_r(\mu)}{\sum_{\mu \in M,\ \mu_j=1} \exp \left\{ \sum_{\cidx=1}^{N_c+L} T(\pmain_{\cidx}(x),\mu_{\cidx}) \right\}F_r(\mu)}}\\
    =& \left\{1+\dfrac{ (1-\pmainj(x)) \sum_{\mu \in M,\ \mu_j=0} \exp \left\{ \sum_{\cidx \in [N_c+L] \backslash \{j\}} T(\pmain_{\cidx}(x),\mu_{\cidx}) \right\}F_r(\mu)}{\pmainj(x) \sum_{\mu \in M,\ \mu_j=1} \exp \left\{ \sum_{\cidx \in [N_c+L] \backslash \{j\}} T(\pmain_{\cidx}(x),\mu_{\cidx}) \right\}F_r(\mu)}\right\}^{-1}
\end{align}

    Note that since $r \in \gP_s(j)$, we have $\sum_{\mu \in M,\ \mu_j=0} \exp \left\{ \sum_{\cidx \in [N_c+L] \backslash \{j\}} T(\pmain_{\cidx}(x),\mu_{\cidx}) \right\}F_r(\mu) \ge \sum_{\mu \in M,\ \mu_j=1} \exp \left\{ \sum_{\cidx \in [N_c+L] \backslash \{j\}} T(\pmain_{\cidx}(x),\mu_{\cidx}) \right\}F_r(\mu)$, and thus, $ \pmainj^{(r)}(x) \le  \pmainj(x)$ holds.
    Next, we analyze the gap to get a tight bound based on the distribution of the Boolean vector of the ground truth class and concepts $\nu$.
    
    \textit{Case (1a)}: If $\exists \cidx \in \mathcal{A}_{r,s}(j),~ \nu_{\cidx}=0$, then we have:
    \begin{align}
        & \dfrac{\sum_{\mu \in M,\ \mu_j=0} \exp \left\{ \sum_{\cidx \in [N_c+L] \backslash \{j\}} T(\pmain_{\cidx}(x),\mu_{\cidx}) \right\}F_r(\mu)}{\sum_{\mu \in M,\ \mu_j=1} \exp \left\{ \sum_{\cidx \in [N_c+L] \backslash \{j\}} T(\pmain_{\cidx}(x),\mu_{\cidx}) \right\}F_r(\mu)} \\
        \ge & \dfrac{ \exp \left\{ \sum_{\cidx \in [N_c+L] \backslash \{j\}} T(\pmain_{\cidx}(x),\nu_{\cidx}) \right\}F_r(\nu)}{\sum_{\mu \in M,\ \mu_j=1} \exp \left\{ \sum_{\cidx \in [N_c+L] \backslash \{j\}} T(\pmain_{\cidx}(x),\mu_{\cidx}) \right\}F_r(\mu)} \\
        \ge & \dfrac{1}{e^{-w} + \sum_{\mu_j=1,\mu \neq \nu} \exp \left\{ \sum_{\cidx \in [N_c+L] \backslash \{j\}} \left( T(\pmain_{\cidx}(x),\mu_{\cidx}) -  T(\pmain_{\cidx}(x),\nu_{\cidx}) \right) \right\}F_r(\mu)/F_r(\nu) }
    \end{align}


    When we have models such that $\conPihat_{\cidx}(x) > 1-t_{\cidx, {\gD}} \left| \nu_{{\cidx}}=0 \right. ~~\text{or}~~ \conPihat_{\cidx}(x) < 1-t_{\cidx, {\gD}} \left| \nu_{{\cidx}}=1 \right.  ,~~ \exists \cidx \in \gA_r(j)$ (with probability $1-Z^{(\pcidx)}_{j,\gD}$), we leverage the general bound in the case,
    \begin{equation}
       \sE[\pmainj^{(r)}(x) | Y \neq j ] \le \sE[\pmainj(x) | Y \neq j ] \label{eq:case1aa}
    \end{equation}

    When we have models of high quality such that $\conPihat_{\cidx}(x) \le 1-t_{\cidx, {\gD}} \left| \nu_{{\cidx}}=0 \right. ~~\text{and}~~ \conPihat_{\cidx}(x) \ge 1-t_{\cidx, {\gD}} \left| \nu_{{\cidx}}=1 \right. ,~~ \forall \cidx \in \gA_r(j)$ (with probability $Z^{(\pcidx)}_{j,\gD}$), we can derive a tighter bound as follows:
    \begin{align}
        & \sum_{\mu_j=1,\mu \neq \nu} \exp \left\{ \sum_{\cidx \in [N_c+L] \backslash \{j\}} \left( T(\pmain_{\cidx}(x),\mu_{\cidx}) -  T(\pmain_{\cidx}(x),\nu_{\cidx}) \right) \right\}F_r(\mu)/F_r(\nu)  \\
        \le & \sum_{\mu_j=1,\mu \neq \nu} \exp \left\{ \sum_{\cidx \in [N_c+L] \backslash \{j\}} \left( T(\pmain_{\cidx}(x),\mu_{\cidx}) -  T(\pmain_{\cidx}(x),\nu_{\cidx}) \right) \right\} \\
        \overset{(a)}{\le}& \prod_{\cidx \in \gA_{\pcidx}(j)} \left( 1 + \dfrac{1-t_{\cidx, {\gD}}}{t_{\cidx, {\gD}}} \right) - 1 \\
        \le& \dfrac{1}{\Pi_{\cidx \in \gA_{\pcidx}(j)} t_{\cidx, {\gD}}} - 1 := 1/ T^{(\pcidx)}_{j,\gD} - 1 \label{eq:case1ab}
    \end{align}
    In inequality (a), we plug in the lower bound of confidence of a correct prediction $t_{\cidx, {\gD}}$ and the upper bound of confidence of a wrong prediction $1-t_{\cidx, {\gD}}$ for label $\cidx$.

    Combining \Cref{eq:case1aa,eq:case1ab}, we get the following bound for case (1a):
    \begin{equation}
        \sE\left[\pmainj^{(r)}(x) | Y \neq j \right] \le \sE\left[ (1-Z^{(\pcidx)}_{j,\gD}) \pmainj(x) + Z^{(\pcidx)}_{j,\gD} \dfrac{\pmainj(x)}{\pmainj(x) + (1-\pmainj(x)) / \lambda^{(\pcidx)}_{j,\gD} }  \left| Y \neq j \right. \right] \label{eq:case1a}
    \end{equation}
    where $\lambda^{(\pcidx)}_{j,\gD} = (1/T^{(\pcidx)}_{j,\gD} + e^{-w}-1)$.

    \textit{Case (1b)}: If $\forall \cidx \in \mathcal{A}_{r,s}(j),~ \nu_{\cidx}=1$, then we use the general bound in the case:
     \begin{equation}
       \sE[\pmainj^{(r)}(x) | Y \neq j ] \le \sE[\pmainj(x) | Y \neq j ] \label{eq:case1b}
    \end{equation}

    Note that case (1a) is with probability $U_j^{(\pcidx)}$ and case (1b) is with probability $1-U_j^{(\pcidx)}$.
    Combining \Cref{eq:case1a} in \textit{case (1a)} and \Cref{eq:case1b} in \textit{case (1b)}, we have:
    \begin{equation}
    \footnotesize
    \label{eq:case1}
        \begin{aligned}
            \sE[\pmainj^{(r)}(x)| Y \neq j ] \le& \sE \hspace{-0.2em} \left[ U_j^{(\pcidx)} \hspace{-0.5em} \left[  (1-Z^{(\pcidx)}_{j,\gD}) \pmainj(x) + Z^{(\pcidx)}_{j,\gD} \dfrac{\pmainj(x)}{\pmainj(x) + (1-\pmainj(x)) / \lambda^{(\pcidx)}_{j,\gD} }\right] \hspace{-0.5em} + (1-U_j^{(\pcidx)})\pmainj(x) \left| Y \neq j \right. \hspace{-0.2em} \right] \\
            \le& \sE\left[ \pmainj(x) - U_j^{(\pcidx)}Z^{(\pcidx)}_{j,\gD} \left(\pmainj(x) - \dfrac{\pmainj(x)}{\pmainj(x) + (1-\pmainj(x)) / \lambda^{(\pcidx)}_{j,\gD} } \right) \left| Y \neq j \right. \right]
        \end{aligned}
    \end{equation}

    \textbf{Case (2)}: $r \in \gP_d(j)$ and $Y = j$. Since in the $r$-th PC $j$ is the index of a conditional variable, we have $\forall \cidx \in \mathcal{A}_{r,s}(j),~ \nu_{\cidx}=1$, where $\nu$ is the ground truth Boolean vector.
    Then we have the following:

    \begin{align}
        & \dfrac{\sum_{\mu \in M,\ \mu_j=0} \exp \left\{ \sum_{\cidx \in [N_c+L] \backslash \{j\}} T(\pmain_{\cidx}(x),\mu_{\cidx}) \right\}F_r(\mu)}{\sum_{\mu \in M,\ \mu_j=1} \exp \left\{ \sum_{\cidx \in [N_c+L] \backslash \{j\}} T(\pmain_{\cidx}(x),\mu_{\cidx}) \right\}F_r(\mu)} \\
        \le & \dfrac{\sum_{\mu \in M,\ \mu_j=0} \exp \left\{ \sum_{\cidx \in [N_c+L] \backslash \{j\}} T(\pmain_{\cidx}(x),\mu_{\cidx}) \right\}F_r(\mu)}{ \exp \left\{ \sum_{\cidx \in [N_c+L] \backslash \{j\}} T(\pmain_{\cidx}(x),\nu_{\cidx}) \right\}F_r(\nu)} \\
        \le & \prod_{\cidx \in \gA_{\pcidx}(j)} \left( 1 + \dfrac{1-t_{\cidx, {\gD}}}{t_{\cidx, {\gD}}} \right) := 1 / T^{(\pcidx)}_{j,\gD}
    \end{align}

    Then the following holds in case (2):
    \begin{equation}
    \label{eq:case2}
        \sE\left[ \pmainj^{(r)}(x) \left| Y = j \right. \right] \ge \sE\left[ \dfrac{\pmainj(x)}{\pmainj(x) + (1-\pmainj(x)) / T^{(\pcidx)}_{j,\gD} } \left| Y = j \right. \right]
    \end{equation}



    \textbf{Case (3)}: $r \in \gP_s(j)$ and $Y = j$. 
    As in Case (1), we similarly consider cases based on the distribution of ground truth Boolean vector $\nu$ to get a tighter bound.

    \textit{Case (3a)}: $\exists \cidx \in \mathcal{A}_{r,d}(j), ~ \nu_{\cidx}=1$.
     
     When we have models such that $\conPihat_{\cidx}(x) > 1-t_{\cidx, {\gD}} \left| \nu_{{\cidx}}=0 \right. ~~\text{or}~~ \conPihat_{\cidx}(x) < 1-t_{\cidx, {\gD}} \left| \nu_{{\cidx}}=1 \right.  ,~~ \exists \cidx \in \gA_r(j)$ (with probability $1-Z^{(\pcidx)}_{j,\gD}$), we leverage the general bound in the case,
    \begin{equation}
       \sE[\pmainj^{(r)}(x) | Y \neq j ] \le \sE[\pmainj(x) | Y \neq j ] \label{eq:case3aa}
    \end{equation}
    \Cref{eq:case3aa} holds because in case (3), $j$ corresponds to a consequence variable in the $r$-th PC, and thus, we always have:
    \begin{equation}
    \footnotesize
        \sum_{\mu \in M,\ \mu_j=0} \exp \left\{ \sum_{\cidx \in [N_c+L] \backslash \{j\}} \hspace{-2em} T(\pmain_{\cidx}(x),\mu_{\cidx}) \right\}F_r(\mu) \le \sum_{\mu \in M,\ \mu_j=1} \exp \left\{ \sum_{\cidx \in [N_c+L] \backslash \{j\}} \hspace{-2em} T(\pmain_{\cidx}(x),\mu_{\cidx}) \right\}F_r(\mu)
    \end{equation}

     When we have models of high quality such that $\conPihat_{\cidx}(x) \le 1-t_{\cidx, {\gD}} \left| \nu_{{\cidx}}=0 \right. ~~\text{and}~~ \conPihat_{\cidx}(x) \ge 1-t_{\cidx, {\gD}} \left| \nu_{{\cidx}}=1 \right. ,~~ \forall \cidx \in \gA_r(j)$ (with probability $Z^{(\pcidx)}_{j,\gD}$), we can derive a tighter bound as follows:
    \begin{align}
         & \dfrac{\sum_{\mu \in M,\ \mu_j=0} \exp \left\{ \sum_{\cidx \in [N_c+L] \backslash \{j\}} T(\pmain_{\cidx}(x),\mu_{\cidx}) \right\}F_r(\mu)}{\sum_{\mu \in M,\ \mu_j=1} \exp \left\{ \sum_{\cidx \in [N_c+L] \backslash \{j\}} T(\pmain_{\cidx}(x),\mu_{\cidx}) \right\}F_r(\mu)} \\
        \le & \dfrac{\sum_{\mu \in M,\ \mu_j=0} \exp \left\{ \sum_{\cidx \in [N_c+L] \backslash \{j\}} T(\pmain_{\cidx}(x),\mu_{\cidx}) \right\}F_r(\mu)}{ \exp \left\{ \sum_{\cidx \in [N_c+L] \backslash \{j\}} T(\pmain_{\cidx}(x),\nu_{\cidx}) \right\}F_r(\nu)} \\
        \le & \exp\left\{ -w \right\} + \prod_{\cidx \in \gA_{\pcidx}(j)} \left( 1 + \dfrac{1-t_{\cidx, {\gD}}}{t_{\cidx, {\gD}}} \right) - 1 := \lambda^{(\pcidx)}_{j,\gD}
    \end{align}
    Therefore, the following holds in case (3a);
    \begin{equation}
        \sE\left[ \pmainj^{(r)}(x) \left| Y = j \right. \right] \ge \sE\left[ (1-Z^{(\pcidx)}_{j,\gD})  \pmainj(x) + Z^{(\pcidx)}_{j,\gD}\dfrac{\pmainj(x)}{\pmainj(x) + (1-\pmainj(x)) \lambda^{(\pcidx)}_{j,\gD} } \left| Y = j \right. \right] 
        \label{eq:case3a}
    \end{equation}

    \textit{Case (3b)}: If $\forall \cidx \in \mathcal{A}_{r,d}(j), ~ \nu_{\cidx}=0$, then we leverage the general bound:

    \begin{equation}
        \label{eq:case3b}
        \sE\left[ \pmainj^{(r)}(x) \left| Y = j \right. \right] \ge \sE\left[ \pmainj(x) \left| Y = j \right. \right] 
    \end{equation}

    Note that case (3a) is with probability $U_j^{(\pcidx)}$ and case (3b) is with probability $1-U_j^{(\pcidx)}$.
    Combining \Cref{eq:case3a,eq:case3b}, the following holds:
    \begin{align}
    \label{eq:case3}
        \sE\left[ \pmainj^{(r)}(x) \left| Y = j \right. \right] \ge \sE\left[ \pmainj(x) + U_j^{(\pcidx)} Z^{(\pcidx)}_{j,\gD} \left( \dfrac{\pmainj(x)}{\pmainj(x) + (1-\pmainj(x)) \lambda^{(\pcidx)}_{j,\gD} } - \pmainj(x) \right)   \left| Y = j \right. \right]
    \end{align}

    \textbf{Case (4)}: $r \in \gP_s(j)$ and $Y \neq j$.
    Since $j$ is the index of a consequence variable, we have $\forall \cidx \in \mathcal{A}_{r,d}(j),~ \nu_{\cidx}=0$. Then we can lower bound the coefficients as follows:
    \begin{align}
         & \dfrac{\sum_{\mu \in M,\ \mu_j=0} \exp \left\{ \sum_{\cidx \in [N_c+L] \backslash \{j\}} T(\pmain_{\cidx}(x),\mu_{\cidx}) \right\}F_r(\mu)}{\sum_{\mu \in M,\ \mu_j=1} \exp \left\{ \sum_{\cidx \in [N_c+L] \backslash \{j\}} T(\pmain_{\cidx}(x),\mu_{\cidx}) \right\}F_r(\mu)} \\
        \ge &  \dfrac{\exp \left\{ \sum_{\cidx \in [N_c+L] \backslash \{j\}} T(\pmain_{\cidx}(x),\nu_{\cidx}) \right\}F_r(\nu)}{\sum_{\mu \in M,\ \mu_j=1} \exp \left\{ \sum_{\cidx \in [N_c+L] \backslash \{j\}} T(\pmain_{\cidx}(x),\mu_{\cidx}) \right\}F_r(\mu)} \\
        \ge & \dfrac{1}{\prod_{\cidx \in \gA_{\pcidx}(j)} \left( 1 + \dfrac{1-t_{\cidx, {\gD}}}{t_{\cidx, {\gD}}} \right)} := T^{(\pcidx)}_{j,\gD}
    \end{align}
    Thus, in case (4) we have:
    \begin{equation}
    \label{eq:case4}
        \sE[\pmainj^{(r)}(x) \left| Y \neq j \right. ] \le \sE \left[ \dfrac{\pmainj(x)}{\pmainj(x) + (1-\pmainj(x)) T^{(\pcidx)}_{j,\gD} } \left| Y \neq j \right.  \right]
    \end{equation}

    Then we consider all $R$ PCs to get the effectiveness of \name on the class conditional probability.
    Combining \Cref{eq:case1} in case (1) and \Cref{eq:case4} in case (4), we finally have:
    \begin{equation}
        \sE\left[\pnamej(X) \left| Y\neq j \right. \right]  \le  \sE[\hat{\pi}_j(X) - \epsilon_{j,0} \left| Y\neq j \right.  ]
    \end{equation}
    where
    \begin{equation}
    \footnotesize
         \epsilon_{j,0} = \sum_{\mathclap{\gD \in \{\gD_a,\gD_b\}}} p_{\gD} \left[~~ \sum_{\mathclap{\pcidx \in \mathcal{P}_d(j)}} \beta_{\pcidx} U^{(\pcidx)}_{j} Z^{(\pcidx)}_{j,\gD} \left(\conPihat_{j}-\dfrac{\conPihat_{j}}{\conPihat_{j} + (1-\conPihat_{j})/\lambda^{(\pcidx)}_{j,\gD}}\right) + \sum_{\mathclap{r \in \mathcal{P}_s(j)}} \beta_r \left( \conPihat_{j} - \dfrac{\conPihat_{j}}{\conPihat_{j} + (1-\conPihat_{j})T^{(\pcidx)}_{j,\gD}} \right) \right]
    \end{equation}

    Combining \Cref{eq:case2} in case (2) and \Cref{eq:case3} in case (3), we finally have:
    \begin{equation}
        \sE\left[\pnamej(X)   \left| Y=j \right. \right] \ge  \sE\left[\hat{\pi}_j(X) + \epsilon_{j,1} \left| Y=j \right.  \right]
    \end{equation}
    where
    \begin{equation}
        \footnotesize
        \epsilon_{j,1} = \sum_{\mathclap{\gD \in \{\gD_a,\gD_b\}}} p_{\gD} \left[ ~~ \sum_{\mathclap{r \in \mathcal{P}_d(j)}} \beta_{\pcidx} \left( \dfrac{\conPihat_{j}}{\conPihat_{j} + (1-\conPihat_{j})/T^{(\pcidx)}_{j,\gD}} - \conPihat_{j} \right) + \sum_{\mathclap{r \in \mathcal{P}_s(j)}} \beta_r U^{(\pcidx)}_{j} Z^{(\pcidx)}_{j,\gD}\left( \dfrac{\conPihat_{j}}{\conPihat_{j} + (1-\conPihat_{j})\lambda^{(\pcidx)}_{j,\gD}} - \conPihat_{j} \right) \right]
    \end{equation}

\end{proof}

\subsection{Proof of \Cref{thm:comp2}}

\textbf{Theorem 4} (Recall).  Consider the adversary setting that the calibration set $\gI_{\text{cal}}$ consists of {$n_{\gD_b}$} samples drawn from the benign distribution {$\gD_b$}, while the new sample {$(X_{n+1}, Y_{n+1})$} is drawn {$n_{\gD_a}$} times from the adversarial distribution {$\gD_a$}.
    Assume that {$A(\pmainj,\gD_a)<0.5<A(\pmainj,\gD_b)$} for {$j\in[N_c]$}, where {$A(\pmainj,\gD)$} is the expectation of prediction accuracy of {$\pmainj$} on {$\gD$}.
    Then we have:
    \begin{equation}
    \footnotesize
    \begin{aligned}
         \sP[Y_{n+1} \in \hat{C}^{\fname}_{n,\alpha}(\tilde{X}_{n+1})] &> \sP[Y_{n+1} \in \hat{C}_{n,\alpha}(\tilde{X}_{n+1})],\quad w.p.  \\
         1 \hspace{-0.3em} - \hspace{-0.3em} \max_{j\in[N_c]} \hspace{-0.1em} \{ \exp\left\{-2n_{\gD_a}(0.5-A(\pmainj,\gD_a))^2 \bm{\epsilon}^2_{j,1, {\gD_a}}\right\} &\hspace{-0.3em} + \hspace{-0.3em}n_{\gD_b} \hspace{-0.3em}  \exp \hspace{-0.3em} \big\{ \hspace{-0.3em} -2n_{\gD_b}\big((A(\pmainj,\gD_b)-0.5)\hspace{-0.7em}{\scriptsize\sum_{c\in\{0, 1\}}} \hspace{-0.5em} p_{jc}\bm{\epsilon}_{j,c,\gD_b}\big)^2 \}
         \big\}
    \end{aligned}
    \end{equation}
    where {$p_{j0}=\sP_{\gD_b}[\sI_{[Y \neq j]}]$} and {$p_{j1}=\sP_{\gD_b}[\sI_{[Y = j]}]$} are class probabilities on benign distribution.

\begin{proof}[Proof of \Cref{thm:comp2}]

The following holds based on the definition of $\hat{C}_{n,\alpha}^{\fname}(\cdot)$ in \Cref{eq:pre_set_final}.
\begin{equation}
    \sP\left[ Y_{n+1} \notin \hat{C}^{\fname}_{n,\alpha}(\tilde{X}_{n+1}) \right] = \sP\left[ S_{\hat{\pi}_{Y_{n+1}}^{\fname}}(\tilde{X}_{n+1},1) > Q_{1-\alpha}(\{S_{\hat{\pi}_{Y_{n+1}}^{\fname}}(X_i, \sI_{[Y_i=Y_{n+1}]})\}_{i \in \gI_{\text{cal}}}) \right]
\end{equation}
Similarly, the following holds for $\hat{C}_{n,\alpha}(\cdot)$.
\begin{equation}
    \sP\left[ Y_{n+1} \notin \hat{C}_{n,\alpha}(\tilde{X}_{n+1}) \right] = \sP\left[ S_{\hat{\pi}_{Y_{n+1}}}(\tilde{X}_{n+1},1) > Q_{1-\alpha}(\{S_{\hat{\pi}_{Y_{n+1}}}(X_i, \sI_{[Y_i=Y_{n+1}]})\}_{i \in \gI_{\text{cal}}}) \right]
\end{equation}

We can transform the objective of comparison of marginal prediction coverage into a comparison of non-conformity scores as follows:
\begin{align}
    & \sP\left[ \sP[Y_{n+1} \in \hat{C}^{\fname}_{n,\alpha}(\tilde{X}_{n+1})] > \sP[Y_{n+1} \in \hat{C}_{n,\alpha}(\tilde{X}_{n+1})] \right] \\
    =& \sP\left[ \sP\left[ Y_{n+1} \notin \hat{C}^{\fname}_{n,\alpha}(\tilde{X}_{n+1}) \right] \le  \sP\left[ Y_{n+1} \notin \hat{C}_{n,\alpha}(\tilde{X}_{n+1}) \right] \right] \\
    \ge& \sP\left[S_{\hat{\pi}_{Y_{n+1}}^{\fname}}(\tilde{X}_{n+1},1) \le  S_{\hat{\pi}_{Y_{n+1}}}(\tilde{X}_{n+1},1) \right] \label{eq:prob1} \\ & \times \sP\left[  Q_{1-\alpha}(\{S_{\hat{\pi}_{Y_{n+1}}^{\fname}}(X_i, \sI_{[Y_i=Y_{n+1}]})\}_{i \in \gI_{\text{cal}}}) \ge Q_{1-\alpha}(\{S_{\hat{\pi}_{Y_{n+1}}}(X_i, \sI_{[Y_i=Y_{n+1}]})\}_{i \in \gI_{\text{cal}}}) \right] \label{eq:prob2}
\end{align}

Next, we individually bound the probability formulation in \Cref{eq:prob1,eq:prob2}.

\textbf{Case (1)}: to bound $\sP\left[S_{\hat{\pi}_{Y_{n+1}}^{\fname}}(\tilde{X}_{n+1},1) \le  S_{\hat{\pi}_{Y_{n+1}}}(\tilde{X}_{n+1},1) \right]$.

We first compute $\sE\left[ S_{\hat{\pi}_{Y_{n+1}}^{\fname}}(\tilde{X}_{n+1},1) -  S_{\hat{\pi}_{Y_{n+1}}}(\tilde{X}_{n+1},1) \right]$ and then leverage tail bounds to derive the lower bound of the probability formulation.

\textit{Case (1a)}: $\hat{\pi}_{Y_{n+1}}(\cdot)$ correctly classifies the sample $\tilde{X}_{n+1} \sim \gD_b$ (with probability $A(\hat{\pi}_{Y_{n+1}}(\cdot),\gD_a)$). Since we assume $\bm{\epsilon}_{j,c,\gD}>0$ for $j \in [N_c]$, $c \in \{0,1\}$, and $\gD \in \{\gD_a,\gD_b\}$, $\hat{\pi}_{Y_{n+1}}^{\fname} (\tilde{X}_{n+1})$ also correctly classifies the sample $\tilde{X}_{n+1} \sim \gD_b$, and thus, we have the following:
\begin{align}
    \sE\left[ S_{\hat{\pi}_{Y_{n+1}}^{\fname}}(\tilde{X}_{n+1},1) -  S_{\hat{\pi}_{Y_{n+1}}}(\tilde{X}_{n+1},1) \right] = \dfrac{1}{2}\sE\left[ \hat{\pi}_{Y_{n+1}}^{\fname} (\tilde{X}_{n+1}) - \hat{\pi}_{Y_{n+1}}(\tilde{X}_{n+1}) \right]
    \label{eq:s1a}
\end{align}

\textit{Case (1b)}: $\hat{\pi}_{Y_{n+1}}(\cdot)(\cdot)$ wrongly classifies the sample $\tilde{X}_{n+1}$ (with probability $1-A(\hat{\pi}_{Y_{n+1}}(\cdot),\gD_a)$). We have the following:
\begin{equation}
     \sE \hspace{-0.3em}\left[ S_{\hat{\pi}_{Y_{n+1}}^{\fname}}(\tilde{X}_{n+1},1) -  S_{\hat{\pi}_{Y_{n+1}}}(\tilde{X}_{n+1},1) \right] \le \sE \hspace{-0.3em} \left[ 1-\dfrac{1}{2}\hat{\pi}_{Y_{n+1}}^{\fname} (\tilde{X}_{n+1})  - (1-\dfrac{1}{2}\hat{\pi}_{Y_{n+1}}(\tilde{X}_{n+1})) \right]
     \label{eq:s1b}
\end{equation}

Combining \Cref{eq:s1a} in case (1a) and \Cref{eq:s1b} in case (1b), we have:
\begin{align}
    & \sE\left[ S_{\hat{\pi}_{Y_{n+1}}^{\fname}}(\tilde{X}_{n+1},1) -  S_{\hat{\pi}_{Y_{n+1}}}(\tilde{X}_{n+1},1) \right] \\
    \le & (A(\hat{\pi}_{Y_{n+1}}(\cdot),\gD_a)-0.5) \sE \hspace{-0.3em} \left[ \hat{\pi}_{Y_{n+1}}^{\fname} (\tilde{X}_{n+1}) - \hat{\pi}_{Y_{n+1}}(\tilde{X}_{n+1}) \right] \hspace{-0.3em} \le \hspace{-0.3em} (A(\hat{\pi}_{Y_{n+1}}(\cdot),\gD_a)-0.5) \bm{\epsilon}_{Y_{n+1},1,\gD_a}
    \label{eq:bound_e}
\end{align}

Then from \Cref{eq:bound_e}, the following holds by applying Hoeffding's inequality.
\begin{equation}
\small
    \sP\left[ S_{\hat{\pi}_{Y_{n+1}}^{\fname}}(\tilde{X}_{n+1},1) -  S_{\hat{\pi}_{Y_{n+1}}}(\tilde{X}_{n+1},1) > 0\right] \le \exp\left\{-2n_{\gD_a}  (A(\hat{\pi}_{Y_{n+1}}(\cdot),\gD_a)-0.5)^2 \bm{\epsilon}_{Y_{n+1},1,\gD_a}^2 \right\}
\end{equation}

Therefore, we have:
\begin{equation}
\footnotesize
    \sP\left[S_{\hat{\pi}_{Y_{n+1}}^{\fname}}(\tilde{X}_{n+1},1) \le  S_{\hat{\pi}_{Y_{n+1}}}(\tilde{X}_{n+1},1) \right] \ge 1 -  \exp\left\{-2n_{\gD_a}  (A(\hat{\pi}_{Y_{n+1}}(\cdot),\gD_a)-0.5)^2 \bm{\epsilon}_{Y_{n+1},1,\gD_a}^2 \right\}
    \label{eq:thm4_case1}
\end{equation}

\textbf{Case (2)}: to bound {\eqsmall$\sP\left[  Q_{1-\alpha}(\{S_{\hat{\pi}_{Y_{n+1}}^{\fname}}(X_i, \sI_{[Y_i=Y_{n+1}]})\}_{i \in \gI_{\text{cal}}}) \ge Q_{1-\alpha}(\{S_{\hat{\pi}_{Y_{n+1}}}(X_i, \sI_{[Y_i=Y_{n+1}]})\}_{i \in \gI_{\text{cal}}}) \right]$}.

We first consider an individual sample $(X_i,Y_i)$ and then apply the union bound the quantile.

Similar to case (1), we first derive the bound for the expectation of the difference of non-conformity scores:
\begin{equation}
\small
    \sE\left[ \hspace{-0.1em} S_{\hat{\pi}_{Y_{(n+1)}}^{\fname}}(X_i,\sI_{[Y_i=Y_{n+1}]}) -  S_{\hat{\pi}_{Y_{n+1}}}(X_i,\sI_{[Y_i=Y_{n+1}]}) \left| Y_i \neq Y_{n+1} \right. \hspace{-0.3em} \right] \hspace{-0.3em} \ge \hspace{-0.3em} (A(\hat{\pi}_{Y_{n+1}}(\cdot),\gD_b)-0.5) \bm{\epsilon}_{Y_{n+1},0,\gD_b}
\end{equation}
\begin{equation}
\small
    \sE\left[ \hspace{-0.1em} S_{\hat{\pi}_{Y_{(n+1)}}^{\fname}}(X_i,\sI_{[Y_i=Y_{n+1}]}) -  S_{\hat{\pi}_{Y_{n+1}}}(X_i,\sI_{[Y_i=Y_{n+1}]}) \left| Y_i = Y_{n+1} \right. \hspace{-0.3em} \right] \hspace{-0.3em} \ge \hspace{-0.3em} (A(\hat{\pi}_{Y_{n+1}}(\cdot),\gD_b)-0.5) \bm{\epsilon}_{Y_{n+1},1,\gD_b}
\end{equation}

Then with Hoeffding's inequality and the union bound, we have:
\begin{equation}
\small
\begin{aligned}
    &\sP\left[  Q_{1-\alpha}(\{S_{\hat{\pi}_{Y_{n+1}}^{\fname}}(X_i, \sI_{[Y_i=Y_{n+1}]})\}_{i \in \gI_{\text{cal}}}) \ge Q_{1-\alpha}(\{S_{\hat{\pi}_{Y_{n+1}}}(X_i, \sI_{[Y_i=Y_{n+1}]})\}_{i \in \gI_{\text{cal}}}) \right] \\ \ge& 1 - n_{\gD_b} \exp\big\{-2n_{\gD_b}\big((A(\hat{\pi}_{Y_{n+1}}(\cdot),\gD_b)-0.5)\hspace{-0.3em}{\scriptsize\sum_{c\in\{0, 1\}}}p_{jc}\bm{\epsilon}_{Y_{n+1},c,\gD_b}\big)^2 \big\}
\end{aligned}
\label{eq:thm4_case2}
\end{equation}
where {$p_{j0}=\sP_{\gD_b}[\sI_{[Y \neq j]}]$} and {$p_{j1}=\sP_{\gD_b}[\sI_{[Y = j]}]$} are class probabilities on benign distribution.

Combining \Cref{eq:thm4_case1,eq:thm4_case2} and plugging them into \Cref{eq:prob1,eq:prob2}, we finally get:
\begin{equation}
    \begin{aligned}
        & \sP\left[ \sP[Y_{n+1} \in \hat{C}^{\fname}_{n,\alpha}(\tilde{X}_{n+1})] > \sP[Y_{n+1} \in \hat{C}_{n,\alpha}(\tilde{X}_{n+1})] \right] \\
        \ge& 1 -  \exp\left\{-2n_{\gD_a}  (A(\hat{\pi}_{Y_{n+1}}(\cdot),\gD_a)-0.5)^2 \bm{\epsilon}_{Y_{n+1},1,\gD_a}^2 \right\} \\ &- n_{\gD_b} \exp\big\{-2n_{\gD_b}\big((A(\hat{\pi}_{Y_{n+1}}(\cdot),\gD_b)-0.5)\hspace{-0.3em}{\sum_{c\in\{0, 1\}}}p_{jc}\bm{\epsilon}_{Y_{n+1},c,\gD_b}\big)^2 \big\} \\
        \ge& 1 - \max_{j \in [N_c]} \left\{ \exp\left\{-2n_{\gD_a}  (A(\hat{\pi}_{Y_{n+1}}(\cdot),\gD_a)-0.5)^2 \bm{\epsilon}_{Y_{n+1},1,\gD_a}^2 \right\} \right. \\ &+ \left. n_{\gD_b} \exp\big\{-2n_{\gD_b}\big((A(\hat{\pi}_{Y_{n+1}}(\cdot),\gD_b)-0.5)\hspace{-0.3em}{\sum_{c\in\{0, 1\}}}p_{jc}\bm{\epsilon}_{Y_{n+1},c,\gD_b}\big)^2  \right\}
    \end{aligned}
\end{equation}

\end{proof}

\subsection{Proof of \Cref{thm:comp_1}}

\textbf{Theorem 5} (Recall). Suppose that we evaluate the expectation of prediction accuracy of {$\pnamej(\cdot)$} and {$\pmainj(\cdot)$} on {$n$} samples drawn from {$\gD_m$} and denote the prediction accuracy as {$A(\pnamej(\cdot),\gD_{m})$} and {$A(\pmainj(\cdot),\gD_{m})$}.
Then we have:
    \begin{equation}
        \small
        A(\pnamej(\cdot),\gD_{m}) \ge  A(\pmainj(\cdot),\gD_{m}), ~~~~ \text{w.p.}~~ 1-\hspace{-0.5em} \sum_{\gD\in \{\gD_a,\gD_b\}} \hspace{-0.5em} p_{\gD} \sum_{c \in \{0,1\}} \mathbb{P}_{\gD}\left[Y=j\right] \exp\left\{ -2n(\bm{\epsilon}_{j,c, \gD})^2 \right\}.
    \end{equation}

\begin{proof}[Proof of \Cref{thm:comp_1}]
        We can formulate the accuracy of $\pmainj(\cdot)$ on $\gD_{m}$ as:
    \begin{equation}
        A(\pmainj(\cdot),\gD_{m}) = \mathbb{P}[Y\neq j] \mathbb{P}[\pmainj(X)<0.5 | Y \neq j] + \mathbb{P}[Y=j] \mathbb{P}[\pmainj(X)\ge 0.5 | Y = j]
    \end{equation}
    Similarly, $A(\pnamej(\cdot),\gD_{m})$ can be formulated as:
    \begin{equation}
        A(\pnamej(\cdot),\gD_{m}) = \mathbb{P}[Y\neq j] \mathbb{P}[\pnamej(X)<0.5 | Y \neq j] + \mathbb{P}[Y=j] \mathbb{P}[\pnamej(X)\ge 0.5 | Y = j]
    \end{equation}

    

    Let $\Delta_{j}(\cdot) :=\pnamej(\cdot) - \pmainj(\cdot)$.
    From the results in \Cref{lem:pc}, we have $ \sE[\Delta_{j}(X) \left| Y \neq j \right.] \le -\bm{\epsilon}_{j,0,\gD}$ and $ \sE[\Delta_{j}(X) \left| Y = j \right.] \ge \bm{\epsilon}_{j,1,\gD}$.
    Note that we have $\bm{\epsilon}_{j,0,\gD}>0$ and $\bm{\epsilon}_{j,1,\gD}>0$ by assumption.
    
    Then by rearranging the terms and Hoeffding's inequality, we can derive as follows:
    \begin{align}
        \mathbb{P}[\pnamej(X) - \pmainj(X) > 0 \left| Y \neq j \right. ] =& \mathbb{P}[\Delta_{j}(X) > 0 \left| 
Y \neq j \right.] \\
        =& \mathbb{P}[\Delta_{j}(X) - \sE[\Delta_{j}(X)] > - \sE[\Delta_{j}(X)] \left| 
Y \neq j \right.] \\
        \le& \mathbb{P}[\Delta_{j}(X) - \sE[\Delta_{j}(X)] > \bm{\epsilon}_{j,0,\gD} \left| 
Y \neq j \right.] \\
        \le& \exp\left\{ -2n\bm{\epsilon}_{j,0,\gD}^2 \right\} \label{eq:bnd_0}
    \end{align}
    Similarly, we consider the case conditioned on $Y=j$.
    \begin{align}
        \mathbb{P}[\pnamej(X) - \pmainj(X) < 0 \left| Y = j \right. ] =& \mathbb{P}[\Delta_{j}(X) < 0 \left| 
Y = j \right.] \\
        =& \mathbb{P}[\Delta_{j}(X) - \sE[\Delta_{j}(X)] < - \sE[\Delta_{j}(X)] \left| 
Y = j \right.] \\
        \le& \mathbb{P}[\Delta_{j}(X) - \sE[\Delta_{j}(X)] < - \bm{\epsilon}_{j,1,\gD} \left| 
Y = j \right.] \\
        \le& \exp\left\{ -2n\bm{\epsilon}_{j,1,\gD}^2 \right\} \label{eq:bnd_1}
    \end{align}

    Note that from \Cref{eq:bnd_0,eq:bnd_1}, we have the following relation:
    \begin{align}
        & \sP\left[ A(\pnamej(\cdot),\gD_{m}) -  A(\pmainj(\cdot),\gD_{m}) \le 0 \right] \\
        \le& \sP[Y \neq j] \mathbb{P}[\pnamej(X) - \pmainj(X) > 0 \left| Y \neq j \right. ] + \sP[Y = j]  \mathbb{P}[\pnamej(X) - \pmainj(X) < 0 \left| Y = j \right. ] \\
        \le& \sP[Y \neq j] \exp\left\{ -2n\bm{\epsilon}_{j,0,\gD}^2 \right\} + \sP[Y = j] \exp\left\{ -2n\bm{\epsilon}_{j,1,\gD}^2 \right\}
    \end{align}

    Finally, by considering both the adversarial distribution and benign distribution, we can conclude that:
     \begin{equation}
        \sP[A(\pnamej(\cdot),\gD_{m}) \ge  A(\pmainj(\cdot),\gD_{m})]  \ge 1-\hspace{-0.5em} \sum_{\gD\in \{\gD_a,\gD_b\}} \hspace{-0.5em} p_{\gD} \sum_{c \in \{0,1\}} \mathbb{P}_{\gD}\left[Y=j\right] \exp\left\{ -2n(\bm{\epsilon}_{j,c, \gD})^2 \right\}.
    \end{equation}
    

\end{proof}

\section{Lower Bound of $A(\pnamej,\gD_m)$}
\label{app:add}

\begin{theorem}[Lower Bound of $A(\pnamej,\gD_m)$]
\label{thm:num_pc}
    Let $E^{\fname}_{j,d} = \sE_{\gD_m}[\pnamej(X) | \sI_{[Y=j]}=d]$ for $d \in \{0,1\}$. Assume that the class probability computed by the $r$-th PC $\pname_{j,r}$ is conditinally independent of $\pname_{j,r-1}$ for $2\le r \le R$ and $E^{\fname}_{j,0} < 0.5 < E^{\fname}_{j,1}$. Assume that we aggregate PCs by taking the average (i.e., $\beta_r = 1/R$). Then we have:
    \begin{equation}
    \small
        A(\pnamej,\gD_m) \ge 1 - \mathbb{P}[Y \neq j] \exp\left\{ -3(0.5-E^{\fname}_{j,0})^2/\pi^2 \right\}  - \mathbb{P}[Y=j] \exp\left\{ -3((E^{\fname}_{j,1}-0.5)^2/\pi^2 \right\}.
    \end{equation}
    \label{thm:low_bnd_a}
\end{theorem}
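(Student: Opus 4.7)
The plan is to decompose the accuracy by conditioning on whether $Y=j$ or $Y \neq j$, and then use the conditional independence of the per-PC outputs to apply a concentration inequality to the averaged predictor $\pnamej(X) = (1/R)\sum_{r=1}^{R} \pname_{j,r}(X)$. First, I would write
\begin{align*}
1 - A(\pnamej, \gD_m) = \sP[Y \neq j]\,\sP[\pnamej(X) \ge 0.5 \mid Y \neq j] + \sP[Y = j]\,\sP[\pnamej(X) < 0.5 \mid Y = j],
\end{align*}
so that it suffices to control each of the two conditional tail probabilities. By definition of $E^{\fname}_{j,d}$, conditioning on $\sI_{[Y=j]}=d$ gives $\sE[\pnamej(X)\mid \sI_{[Y=j]}=d] = E^{\fname}_{j,d}$, and the assumption $E^{\fname}_{j,0} < 0.5 < E^{\fname}_{j,1}$ guarantees that both events in question correspond to deviations of $\pnamej(X)$ from its conditional mean by the strictly positive amounts $0.5 - E^{\fname}_{j,0}$ and $E^{\fname}_{j,1} - 0.5$, respectively.

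Next, I would leverage the conditional (chain) independence of $\pname_{j,r}$ given $\pname_{j,r-1}$ to realize $\pnamej(X)$ as a sum of conditionally independent $[0,1]$-valued summands scaled by $1/R$. This structure puts the partial sums into a martingale-difference form with bounded increments of size $1/R$, to which one applies a Hoeffding/Azuma-type concentration inequality to obtain an exponential tail bound of the form $\sP[\pnamej(X) \ge 0.5 \mid Y \neq j] \le \exp\{-c(0.5 - E^{\fname}_{j,0})^2\}$, and symmetrically for the second term. Assembling the two bounds with the weights $\sP[Y \neq j]$ and $\sP[Y = j]$ then gives the claimed lower bound on $A(\pnamej, \gD_m)$.

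The hard part will be pinning down the specific constant $3/\pi^2$ in the exponent, which is non-standard relative to the Hoeffding constant $2$ that appears in \Cref{thm:comp_1}. I expect this requires a sub-Gaussian moment-generating-function argument tailored to the averaging over PCs, possibly combined with the elementary variance bound $\mathrm{Var}(\pname_{j,r}) \le 1/4$ valid for $[0,1]$-valued variables, or an optimization of the Chernoff exponent under the conditional-independence-only assumption (which is weaker than full mutual independence and therefore may force a smaller constant). Once the constant is justified, the remainder is the mechanical two-term assembly outlined above.
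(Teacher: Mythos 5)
Your outer decomposition is exactly the paper's: write $1 - A(\pnamej,\gD_m)$ as the two conditional tail probabilities weighted by $\sP[Y\neq j]$ and $\sP[Y=j]$, observe that $E^{\fname}_{j,0}<0.5<E^{\fname}_{j,1}$ turns each event into a positive deviation from the conditional mean, and finish with a martingale concentration bound. The gap is precisely the one you flag yourself: the constant $3/\pi^2$, and it is not recoverable by any of the routes you sketch. The paper does \emph{not} treat $\pnamej(X)=\frac{1}{R}\sum_{r}\hat{\pi}_j^{(r)}(X)$ as a sum of $R$ conditionally independent increments of size $1/R$ (which via Hoeffding/Azuma would give an exponent of order $-2R(0.5-E^{\fname}_{j,0})^2$, an $R$-dependent and differently shaped bound). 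Instead it builds the sequence of running averages $J_r(X)=\frac{1}{r}\sum_{q=1}^{r}\hat{\pi}_j^{(q)}(X)$, orders the PC outputs (WLOG decreasingly when $Y\neq j$, increasingly when $Y=j$) so that $\{J_r\}_{r=1}^R$ is a super- (resp.\ sub-) martingale, and bounds the differences by $|J_{r+1}-J_r|\le \frac{1}{r+1}=:c_{r+1}$. Azuma's inequality then gives an exponent $-(0.5-\sE[J_R])^2/(2\sum_{q=1}^{R}c_q^2)$, and the Basel-type bound $\sum_{q=1}^{R}1/q^2<\pi^2/6$ yields exactly $-3(0.5-E^{\fname}_{j,0})^2/\pi^2$; the $Y=j$ case is symmetric. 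Neither a sub-Gaussian MGF argument nor the variance bound $\mathrm{Var}\le 1/4$ will ever produce a $\pi^2$: it comes solely from summing the squared difference bounds $1/(r+1)^2$ of this particular telescoping construction. Without that construction your argument proves a different inequality from the one stated, so the concrete missing idea is the running-average martingale together with the $\pi^2/6$ bound on $\sum_q c_q^2$.
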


\textit{Proof sketch.} We construct a sequence $\{J_r\}_{r=1}^R$ in a specific way and prove that it is a super-martingale or a sub-martingale with bounded differences conditioned based on the true label. Then we use Azuma's inequality to bound the objective $A(\pnamej,\gD_m)$ in different cases and combine them with union bound.

\textit{Remarks.} 
\Cref{lem:pc} indicates that better utility of knowledge models and implication rules can indicate a smaller $E^{\fname}_{j,0}$ and larger $E^{\fname}_{j,1}$, which can further imply a tighter lower bound of prediction accuracy according to\Cref{thm:low_bnd_a}.
Furthermore, according to the proof, we know that $E^{\fname}_{j,0}$ negatively correlates with the number of PCs $R$ and $E^{\fname}_{j,1}$ has a positive correlation with $R$, indicating that more PCs (i.e., knowledge rules) implies a better prediction accuracy.

\begin{proof}[Proof of \Cref{thm:num_pc}]
    Consider $R$ PCs.
    We construct a sequence $\{J_r\}_{r=1}^R$ such that $J_r(X) = \dfrac{1}{r} \sum_{q \in [r]} \pmainj^{(q)}(X)$, where $\pmainj^{(q)}(\cdot)$ is defined as \Cref{eq:marginal}.

    We first consider the case conditioned on $Y\neq j$.
    Without loss of generality, assume that $\pmainj^{(1)}(X) \ge \pmainj^{(2)}(X) \ge \cdots \ge \pmainj^{(R)}(X) $.
    Then we can prove $\{J_r\}_{r=1}^R$ is a super-martingale since we have the following:
    \begin{align}
        J_{r+1} - J_r =& \sum_{q=1}^{r+1} \dfrac{1}{r+1} \pmainj^{(q)}(X) - \sum_{q=1}^r \dfrac{1}{r} \pmainj^{(q)}(X) \\
        =& \dfrac{1}{r+1} \pmainj^{(r+1)}(X) - \dfrac{1}{r(r+1)} \sum_{q=1}^r \pmainj^{(q)}(X) \\
        \le& \dfrac{1}{r+1} \pmainj^{(r+1)}(X) - \dfrac{1}{r(r+1)} r \pmainj^{(r+1)}(X) \\
        \le& 0
    \end{align}
    $\{J_r\}_{r=1}^R$ also has bounded difference since we have the following:
    \begin{align}
        J_{r}-J_{r+1} =& \sum_{q=1}^r \dfrac{1}{r} \pmainj^{(q)}(X) - \sum_{q=1}^{r+1} \dfrac{1}{r+1} \pmainj^{(q)}(X)  \\
        =& \dfrac{1}{r(r+1)} \sum_{q=1}^r \pmainj^{(q)}(X) - \dfrac{1}{r+1} \pmainj^{(r+1)}(X) \\
        \le& \dfrac{1}{r+1} := c_{r+1}
    \end{align}
    Also note that we have:
    \begin{equation}
        \sE\left[ J_{r+1}(X) \left| J_1(X),\cdots,J_r(X) \right. \right] = \dfrac{r}{r+1} J_r(X) + \dfrac{1}{r+1} \pmainj^{(r+1)}(X)
    \end{equation}
    Therefore, $\{J_r\}_{r=1}^R$ is conditionally independent.
    Then by using Azuma's inequality, we can derive as follows:
    \begin{align}
        \mathbb{P}[J_R - \mathbb{E}[J_R] \ge 0.5 - \mathbb{E}[J_R]] \le& \exp\left\{ -\dfrac{(0.5-\mathbb{E}[J_R])^2}{2 \sum_{q=1}^R c_q^2} \right\} \\
        \le& \exp\left\{ -\dfrac{(0.5-\mathbb{E}[J_R])^2}{2  \sum_{q=1}^R (\dfrac{1}{q}))^2} \right\} \\
        \overset{(a)}{\le}& \exp\left\{ -\dfrac{(0.5-\mathbb{E}[J_R])^2}{2  \dfrac{\pi^2}{6}} \right\} \\
        \le& \exp\left\{ -\dfrac{3(0.5-\mathbb{E}[J_R])^2}{\pi^2 } \right\}
    \end{align}
    Inequality $(a)$ holds because the inequality $\sum_{q=1}^R \dfrac{1}{q^2} < \dfrac{\pi^2}{6}$ holds.
    Finally, the following holds.
    \begin{equation}
        \mathbb{P}\left[J_R(X)<0.5 \left| Y \neq j \right. \right]  \ge 1 - \exp\left\{ -\dfrac{3(0.5-\mathbb{E}[J_R])^2}{\pi^2 } \right\}
        \label{eq:thm6_case1}
    \end{equation}

    Then we consider the case conditioned on $Y=j$. 
    Similarly, we construct a sequence $\{J'_r\}_{r=1}^R$ such that $J'_r(X) = \dfrac{1}{r} \sum_{q \in [r]} \pmainj^{(q)}(X)$, where $\pmainj^{(q)}(\cdot)$ is defined as \Cref{eq:marginal}.
    Without loss of generality, assume that $\pmainj^{(1)}(X) \le \pmainj^{(2)}(X) \le \cdots \le \pmainj^{(R)}(X) $.
    Then we can prove $\{J'_r\}_{r=1}^R$ is a sub-martingale since we have the following:
    \begin{align}
        J'_{r+1} - J'_r =& \sum_{q=1}^{r+1} \dfrac{1}{r+1} \pmainj^{(q)}(X) - \sum_{q=1}^r \dfrac{1}{r} \pmainj^{(q)}(X) \\
        =& \dfrac{1}{r+1} \pmainj^{(r+1)}(X) - \dfrac{1}{r(r+1)} \sum_{q=1}^r \pmainj^{(q)}(X) \\
        \ge& \dfrac{1}{r+1} \pmainj^{(r+1)}(X) - \dfrac{1}{r(r+1)} r \pmainj^{(r+1)}(X) \\
        \ge& 0
    \end{align}
    
     Again, we can show that $\{J'_r\}_{r=1}^R$ has bounded difference:
    \begin{align}
        J'_{r+1}-J'_{r} =& \sum_{q=1}^{r+1} \dfrac{1}{r+1} \pmainj^{(q)}(X) - \sum_{q=1}^r \dfrac{1}{r} \pmainj^{(q)}(X)  \\
        =& \dfrac{1}{r+1} \pmainj^{(r+1)}(X) - \dfrac{1}{r(r+1)} \sum_{q=1}^r \pmainj^{(q)}(X) \\
        \le& \dfrac{1}{r+1} := c'_{r+1}
    \end{align}
    Also note that we have:
    \begin{equation}
        \sE\left[ J'_{r+1}(X) \left| J'_1(X),\cdots,J'_r(X) \right. \right] = \dfrac{r}{r+1} J'_r(X) + \dfrac{1}{r+1} \pmainj^{(r+1)}(X)
    \end{equation}
    Therefore, $\{J'_r\}_{r=1}^R$ is conditionally independent.
    Then by using Azuma's inequality, we can derive as follows:
    \begin{align}
        \mathbb{P}[J'_R - \mathbb{E}[J'_R] \le 0.5 - \mathbb{E}[J'_R]] \le& \exp\left\{ -\dfrac{(\mathbb{E}[J'_R]-0.5)^2}{2 \sum_{q=1}^R (c'_q)^2} \right\} \\
        \le& \exp\left\{ -\dfrac{(\mathbb{E}[J'_R]-0.5)^2}{2 \sum_{q=1}^R (\dfrac{1}{q}))^2} \right\} \\
        \le& \exp\left\{ -\dfrac{(\mathbb{E}[J'_R]-0.5)^2}{2 \dfrac{\pi^2}{6}} \right\} \\
        \le& \exp\left\{ -\dfrac{3((\mathbb{E}[J'_R]-0.5)^2}{\pi^2} \right\}
    \end{align}
    Finally, the following holds.
    \begin{equation}
        \mathbb{P}\left[J_R(X)>0.5 \left| Y = j \right. \right]  \ge 1 - \exp\left\{ -\dfrac{3(\mathbb{E}[J_R]-0.5)^2}{\pi^2 } \right\}
        \label{eq:thm6_case2}
    \end{equation}
Note that we can formulate $A(\pnamej,\gD_m)$ as:
\begin{equation}
     A(\pnamej,\gD_m) = \mathbb{P}[Y \neq j] \sP[ \pnamej(X)<0.5  |  Y \neq j] + \mathbb{P}[Y = j] \sP[ \pnamej(X)>0.5  |  Y = j]
\end{equation}
    
    Combining \Cref{eq:thm6_case1,eq:thm6_case2}, we have:
    \begin{equation}
       \small
        A(\pnamej,\gD_m) \ge 1 - \mathbb{P}[Y \neq j] \exp\left\{ -3(0.5-E^{\fname}_{j,0})^2/\pi^2 \right\}  - \mathbb{P}[Y=j] \exp\left\{ -3((E^{\fname}_{j,1}-0.5)^2/\pi^2 \right\}.
    \end{equation}
    
\end{proof}

\section{Experiment}
\label{app:exp_add}

\subsection{Implementation Details}
\label{app:exp_detail}

\textbf{Dataset \& Model.} We evaluate \name on three datasets, GTSRB, CIFAR-10, and AwA2. 
GTSRB consists of 12 types of German road signs, such as ``Stop'' and ``Speed Limit 20''. CIFAR-10 includes 10 categories of animals and transportation. AwA2 contains 50 animal categories.
We use GTSRB-CNN architecture \citep{eykholt2018robust}, ResNet-56 \citep{He_2016_CVPR}, and ResNet-50 \citep{He_2016_CVPR} as the architecture of models on them, respectively.
For fair comparisons, we use the same model architecture and training smoothing factors for models in \name and SOTA baselines CP and RSCP. 
We perform smoothed training \citep{cohen2019certified} with the same smoothing factor $\sigma$ for models.
We use the official validation set of GTSRB including 973 samples, and randomly select 1000 samples from the test set of CIFAR-10 and AwA2 as the calibration sets for conformal prediction with the nominal level $1-\alpha=0.9$ across evaluations.
The evaluation is performed on the test set of GTSRB and the remaining samples of the test set of CIFAR-10 and GTSRB. 
GTSRB and CIFAR-10 contain images with the size of $32 \times 32$. AwA2 contains images with the size of $224 \times 224$.
GTSRB and CIFAR-10 are under the MIT license. AwA2 is under the Creative Commons Attribution-NonCommercial-ShareAlike (CC BY-NC-SA) license.

\textbf{Implementation details.} We run evaluations using three different random seeds and report the averaged results. We omit the standard deviation of the results since we observe they are of low variance due to a large sample size used for calibration and testing. All the evaluation is done on a single $A6000$ GPU. 
\reb{
In the evaluation of certified coverage, we compute the smoothed score or prediction probability $100,000$ times for randomized smoothing and fix the ratio of perturbation bound and the smoothing factor during certification $\delta/\sigma_{cer}$ as $0.5$ for both RSCP and \name.
}
In the evaluation of marginal coverage and set size under PGD attack, we use the attack objective of cross-entropy loss for RSCP. For \name, the objective is the same cross-entropy loss for the main model and the binary cross-entropy loss for the knowledge models.

\reb{
\textbf{Knowledge Rules \& Reasoning}.
In summary, we have 3 PCs and 28 knowledge rules in GTSRB, 3 PCs and 30 knowledge rules in CIFAR-10, and 4 PCs and 187 knowledge rules in AwA2. Based on the domain knowledge in each dataset, we encode a set of implication rules into PCs. In each PC, we encode a type of implication knowledge with disjoint attributes. In GTSRB, we have a PC of shape knowledge (e.g.,``octagon'', ``square''), a PC of boundary color knowledge (e.g., ``red boundary'', ``black boundary''), and a PC of content knowledge (e.g., ``digit 50'', ``Turn Left''). In CIFAR-10, we have a PC of category knowledge (e.g., ``animal'', ``transportation''), a PC of space knowledge (e.g., ``in the sky'', ``in the water'', ``on the ground''), and a PC of feature knowledge (e.g., ``has legs'', ``has wheels''). In AwA2, we have a PC of superclass knowledge (e.g., ``procyonid'', ``big cat''), a PC of predation knowledge (e.g., ``fish'', ``meat'', ``plant''), a PC of appearance (e.g., ``big ears'', ``small eyes''), and a PC of fur color knowledge (e.g., ``green'', ``gray'', ``white''). 
}

\reb{
For example, in the PC of shape knowledge in GTSRB, we can encode the implication rules related to the shape knowledge such as $\text{IsStopSign} \implies \text{IsOctagon}$ (stop signs are octagon) with weight $w_1$, $\text{IsSpeedLimit} \implies \text{IsSquare}$ (speed limit signs are square) with weight $w_2$, and $\text{IsTurnLeft} \implies \text{IsRound}$ (turn left signs are round) with weight $w_3$. To implement the COLEP learning-reasoning framework, we follow the steps to encode the rules into PCs. (1) We learn the main model and knowledge models implemented with DNNs to estimate a preliminary probability for $N_c$ class labels and $L$ concept labels (e.g., $p(\text{IsSpeedLimit}=1)=\hat{\pi}_{sp}, p(\text{IsSquare}=1)=\hat{\pi}_{sq}$). (2) We compute the factor value $F(\mu)$ for any possible assignment $\mu \in \{0,1\}^{N_c+L}$ as $F(\mu) = \exp \{\sum_{h=1}^H w_h \mathbb{I}_{[\mu \sim K_h]}\}$ (e.g.,$ F([1,1,1,1,1,1])=e^{w_1+w_2+w_3}$ since the assignment satisfy all three knowledge rules above). (3) We construct a three-layer PC as shown in Figure 1: (a) the bottom layer consists of leaf nodes representing Bernoulli distributions of all class and concept labels and the constant factor values of assignments (e.g., $B(\hat{\pi}_{sp}),B(1-\hat{\pi}_{sp}),B(\hat{\pi}_{sq}),B(1-\hat{\pi}_{sq}),...,F([1,1,1,1,1,0]),F([1,1,1,1,1,1]),...$), (b) the second layer consists of product nodes computing the likelihood of each assignment by multiplying the likelihood of the instantiations of Bernoulli variables and the correction factor values (e.g., $\hat{\pi}_{sp} \times \hat{\pi}_{sq} \times … \times F([1,1,1,1,1,1])$), and (c) the top layer is a sum node computing the summation of the products. (4) We linearly combine multiple PCs that encode different types of knowledge rules (e.g., 3 PCs in GTSRB: PCs of shape/color/content knowledge).
We compute $\beta_k$ based on the utility of the $k$-th PC on the calibration set and manually set the weights of knowledge rules based on their utility.
}

Codes are available at \url{https://github.com/kangmintong/COLEP}.



\subsection{Additional evaluation of \name}
\label{app:exp_res}
\vspace{-2mm}


\begin{table}[h]
    \centering
    \caption{Comparison of the runtime (millisecond) of COLEP per forward time between a standard model and COLEP. The evaluation is done on RTX A6000 GPUs.}
    \label{tab:runtime}
    \begin{tabular}{c|ccc}
    \toprule
         &  GTSRB & CIFAR-10 & AwA2 \\
    \midrule
    standard model  & 13.2 & 15.5 & 35.2  \\
    COLEP & 13.5 & 15.7 & 35.6 \\
    \bottomrule
    \end{tabular}
\end{table}


\begin{figure}[t]
    \centering
    \subfigure[GTSRB]{
    \includegraphics[width=0.43\linewidth]{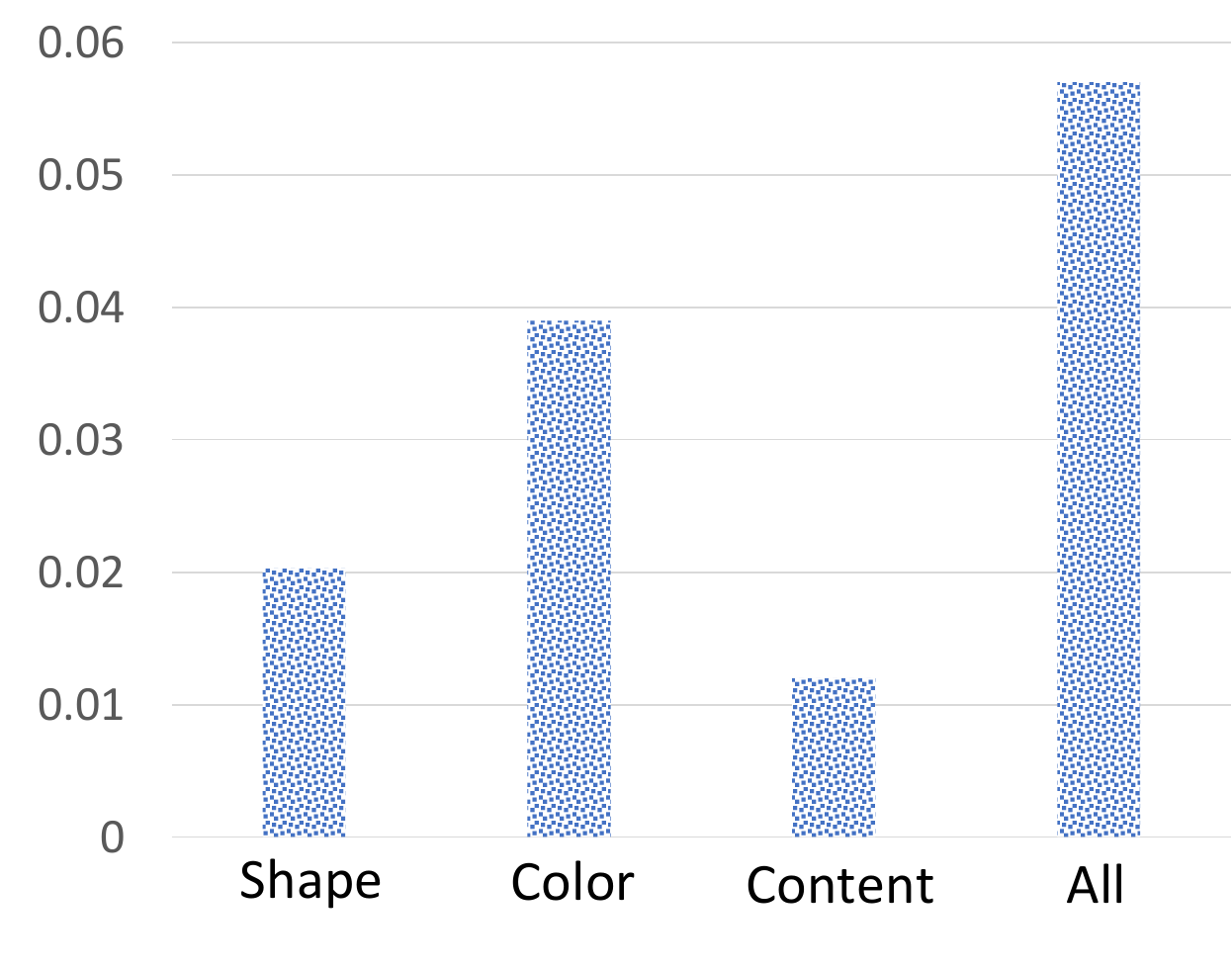}
    }
    \subfigure[CIFAR-10]{
    \includegraphics[width=0.43\linewidth]{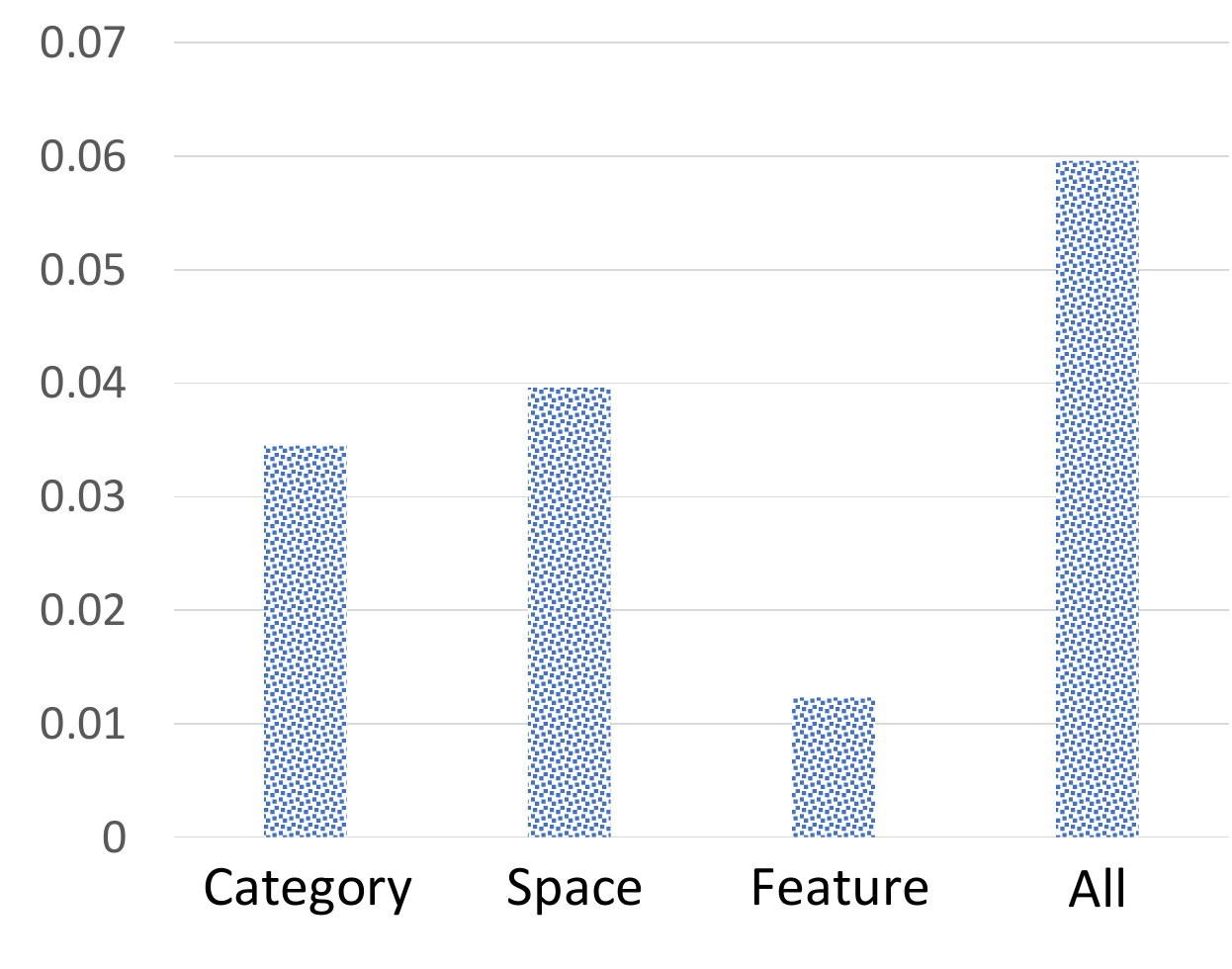}
    }
    \caption{
    \reb{
    Certified coverage of \name under bounded perturbation $\delta=0.25$ on GTSRB and CIFAR-10 with different types of knowledge rules.
    }
    }
    \label{fig:vis}
\end{figure}

\reb{
\textbf{Visualization of the contribution of different knowledge rules}. In GTSRB, we consider the contribution of the shape knowledge rules, color knowledge rules, and content knowledge rules. In CIFAR-10, we consider the contribution of the category knowledge rules, space knowledge rules, and feature knowledge rules. More details of the meanings and examples of knowledge rules are provided in \Cref{app:exp_detail}. We provide the visualization of contributions of different types of knowledge rules in \Cref{fig:vis}. The results show that the rules related to the color knowledge in GTSRB and space knowledge in CIFAR-10 benefit the certified coverage more, but collectively applying all the well-designed knowledge rules leads to even better coverage. The effectiveness of these types of knowledge attributes to a higher utility of the knowledge models (i.e., the accuracy of concept detection). Concretely, the feature of these concepts is relatively easy to learn and can be more accurately detected (e.g., color concepts are more easily learned than shape concepts). The correlation between the knowledge model utility and effectiveness of COLEP is also theoretically analyzed and observed in \Cref{sec:theor}.
}


{
\textbf{Runtime evaluation of \name}.
Since the knowledge models in the learning component are simple neural networks and can perform forward passes in parallelism, the additional computational complexity of COLEP comes mainly from the overhead of the reasoning component (i.e., PC). Given a PC defined on the directed graph $(\mathcal{V},\mathcal{E})$, the time complexity of probabilistic inference is $\mathcal{O}(|\mathcal{V}|)$, according to the details illustrated in \Cref{app:pre}. Therefore, the additional time complexity of COLEP is $\mathcal{O}(|\mathcal{V}|)$, linear to the number of nodes in PC graph. We empirically evaluate the runtime of a forward pass of COLEP and compare it with a standard model. The results in \Cref{tab:runtime} show that COLEP only achieves a little higher runtime compared with a single standard model.}

\begin{table}[ht]
    \centering
    \caption{\reb{Marginal coverage / average set size of RSCP and COLEP under AutoAttack with $\ell_2$ bounded perturbation of $0.25$ on GTSRB, CIFAR-10, and AwA2. The nominal level $1-\alpha$ is 0.9.}}
    \label{tab:emp_aa}
    \begin{tabular}{c|ccc}
    \toprule
     & GTSRB & CIFAR-10 & AwA2 \\
     \midrule
     RSCP    & 0.9682 / 2.36 & 0.9102 / 3.56 & 0.923 / 5.98    \\
     \name    & \textbf{0.9702} / \textbf{2.13} & \textbf{0.9232} / \textbf{2.55} & \textbf{0.952} / \textbf{4.57} \\
     \bottomrule
    \end{tabular}
\end{table}

\begin{table}[ht]
    \centering
    \caption{\reb{Marginal coverage / average set size of RSCP and COLEP under PGD Attack with multiple desired coverage levels $1-\alpha$ with $\ell_2$ bounded perturbation of $0.25$ on GTSRB.}}
    \label{tab:abl1}
    \begin{tabular}{c|ccc}
    \toprule
     & $1-\alpha=0.85$ & $1-\alpha=0.9$ & $1-\alpha=0.95$ \\
     \midrule
     RSCP    & 0.9029 / 1.63 & 0.9682 / 2.36 & 0.9829 / 2.96    \\
     \name    & \textbf{0.9053} / \textbf{1.34} & \textbf{0.9702} / \textbf{2.13} & \textbf{0.9859} / \textbf{2.53} \\
     \bottomrule
    \end{tabular}
\end{table}


\reb{
\textbf{Evaluations with different types of adversarial attacks}. To evaluate the robustness of COLEP under various types of attacks, we further consider the AutoAttack method \cite{croce2020reliable}, which uses an ensemble of four diverse attacks to reliably evaluate the robustness: (1) APGD-CE attack, (2) APGD-DLR attack, (3) FAB attack, and (4) square attack. APGD-CE and APGD-DLR attacks are step size-free versions of PGD attacks with cross-entropy loss and DLR loss, respectively. FAB attack optimizes the adversarial sample by minimizing the norm of the adversarial perturbations. Square attack is a query-efficient black-box attack. We evaluate the prediction coverage and averaged set size of COLEP and SOTA method RSCP with $\ell_2$ bounded perturbation of $0.25$ on GTSRB, CIFAR-10, and AwA2. The desired coverage is $1-\alpha=0.9$ in the evaluation. The results in \Cref{tab:emp_aa} show that under diverse and stronger adversarial attacks, COLEP still achieves higher marginal coverage than the nominal level $0.9$ and also demonstrates a better prediction efficiency (i.e., smaller size of prediction sets) than RSCP.
}

\reb{
\textbf{Evaluations with various nominal levels}. 
We further evaluate COLEP with multiple nominal levels of coverage $1-\alpha$. We provide the results with multiple coverage levels $1-\alpha$ in \Cref{tab:abl1}. The results demonstrate that for different parameters of $\alpha$, (1) COLEP always achieves a higher marginal coverage than the nominal level, indicating the validity of the certification in COLEP, and (2) COLEP shows a better trade-off between the prediction coverage and prediction efficiency (i.e., average set size) than the SOTA baseline RSCP.
}

\reb{
\textbf{Selection of weights of knowledge rules.} 
The weight $w$ of logical rules is specified based on the utility of the corresponding knowledge rule. In Section 6, we theoretically show that the robustness of COLEP correlates with the utility of knowledge rules regarding the uniqueness of the rule, defined in Equation (17). Specifically, a universal knowledge rule which generally holds for almost all class labels is of low quality and does not benefit the robustness and should be assigned with a low weight $w$. For example, if all the traffic signs in the dataset are of octagon shape, then the implication rule $IsStopSign \implies IsOctagon$ should be useless and assigned with a low weight $w$, which also aligns with the intuition. 
In the empirical evaluations, since we already designed knowledge rules of high quality (i.e., good uniqueness) as shown in Section 7, we do not need to spend great efforts selecting weights for each knowledge rule individually. We assume that all the knowledge rules have the same weight $w$ and select a proper weight $w$ for all the knowledge rules via grid searching. The results inspire us to fix $w$ as $1.5$, and the evaluations demonstrate that it is sufficient to achieve good robustness in different settings. One can also model the weight selection problem as a combinatorial optimization, which is non-trivial but can better unleash the power of COLEP. We leave a more intelligent and automatic approach of knowledge weight selection for future work. 
}



\section{Overview of notations}

\label{app:notation}

\reb{
In this section, we provide overviews of used notations for a better understanding of the analysis of \name. Specifically, we provide an overview of notations related to models in \Cref{tab:not1}, notations related to conformal prediction in \Cref{tab:not2}, notations related to certification in \Cref{sec:cert_conform} in \Cref{tab:not3}, and notations related to analysis in \Cref{sec:theor} in \Cref{tab:not1}.}

\begin{table}[h]
    \centering
    \caption{\reb{Notations related to models in \name.}}
    \reb{
    \scalebox{0.86}{
    \begin{tabular}{cc}
    \toprule
      Notation   & Meaning  \\
      \midrule
      $N_c$ & number of class labels \\
      $j$ & $j \in [N_c]$ to index the output of the main model \\
      $L$& number of concept labels, number of knowledge models \\   
      $l$ & $l \in [L]$ to index $L$ knowledge models/concept labels \\
      $\hat{\pi}_j(x)$ & the output likelihood of the main model for the $j$-th class label ($j \in [N_c]$) given input $x$ \\
      $\hat{\pi}^{(l)}(x)$ & the output likelihood of the $l$-th knowledge model ($l \in [L]$) given input $l$ \\
      $\conPihat$ & $\conPihat:=\big[[\pmainj]_{j\in[N_c]}, [\pknowl]_{\kidx\in[\kmodels]}\big] \in \sR^{N_c+L}$, concatenated vector of class and concept probabilities \\
      $\cidx$ & $\cidx \in [N_c+L]$ to index $\hat{\pi}$(the concatenation of the output of the main model and concept models) \\
      $\pmain_{\cidx}(x)$ & the output likelihood of the $j$-th class label ($\cidx \in [N_c+L]$) \textbf{before the reasoning component} \\
      $\pmainj^{\name}(x)$ & the output likelihood of the $j$-th class label ($j \in [N_c]$) \textbf{after the reasoning component}\\
      $\{K_h\}_{h=1}^H$ & $H$ propositional logic rules \\
      $w_h$ & the weight of the $h$-th logic rule ($h \in [H]$) \\
      $\sI_{[\cdot]}$ & indicator function \\
      $\mu$ & $\mu \in M:=\{0,1\}^{N_c+L}$, the index variable in the feasible assignment set $M$ \\
      $F(\mu)$ & $F(\mu) = \exp \{\sum_{h=1}^H w_h \mathbb{I}_{[\mu \sim K_h]}\}$, factor value of the assignment $\mu$\\
      $T(a,b)$ & $T(a,b):=\log(ab+(1-a)(1-b))$, introduced for ease of notation \\
      $O_{\text{leaf}}, O_{\text{root}}$ & leaf node and root node in the PC \\
      $R$ & number of probabilistic circuits \\
      $r$ & $r \in [R]$ to index the $r$-th PC \\
      $\beta_r$ & the coefficient of the $r$-th PC \\
    \bottomrule
    \end{tabular}}}
    \label{tab:not1}
\end{table}

\begin{table}[h]
    \centering
    \caption{\reb{Notations related to conformal predictions.}}
    \reb{
    \scalebox{0.86}{
    \begin{tabular}{cc}
    \toprule
      Notation   & Meaning  \\
      \midrule
       $n$ & number of calibration data samples \\   
      $(X_i,Y_i)$ & the $i$-th data sample with input $X_i$ and label $Y_i$ ($i \in [n]$)   \\
      $(X_{n+1},Y_{n+1})$ & the test sample \\
       $\gY$ & label space $[N_c]:=\{1,2,...,N_c\}$ \\
       $y$ & $y \in \gY$ to index labels in the label space $\gY$  \\
       $u$ & random variable draw from uniform distribution $U(0,1)$  \\
       $Q_{1-\alpha}$ & $1-\alpha$ quantile value \\
     $S_{\hat{\pi}}(x,y)$ & APS non-conformity score of sample $(x,y)$ given probability estimate $\hat{\pi}$ \\
     $S_{\hat{\pi}^{\name}_j}(x,y)$ & APS score with $\hat{\pi}^{\name}_j$ (output likelihood of the $j$-th class label after the reasoning component) \\
     $\hat{C}_{n,\alpha}(x)$ & $\hat{C}_{n,\alpha}(x) \subseteq \gY$ is the conformal prediction set of input $x$ \textbf{without \name}   \\
      $\hat{C}_{n,\alpha_j}^{\name_j}(x)$ & $\hat{C}_{n,\alpha_j}^{\name_j}(x) \subseteq \{0,1\}$ is the conformal prediction set of the $j$-th class label \\
      $\hat{C}_{n,\alpha}^{\name}(x)$ & $\hat{C}_{n,\alpha}^{\name}(x) \subseteq \gY$ is the conformal prediction set of input $x$ \textbf{with \name}   \\
      $q^y$ & $q^y \in \{0,1\}$ to index in label space $\{0,1\}$  \\
    \bottomrule
    \end{tabular}}}
    \label{tab:not2}
\end{table}

\begin{table}[h]
    \centering
    \caption{\reb{Notations related to certification in \Cref{sec:cert_conform}.}}
    \reb{
    \scalebox{0.86}{
    \begin{tabular}{cc}
    \toprule
      Notation   & Meaning  \\
      \midrule
      $\overline{\pmain_{\cidx}}(x)$ & upper bound of $\pmain_{\cidx}(x)$ (likelihood \textbf{without reasoning component}) with bounded perturbation\\  
       $\underline{\pmain_{\cidx}}(x)$ & lower bound of $\pmain_{\cidx}(x)$ (likelihood \textbf{without reasoning component}) with bounded perturbation\\ 
       $\sU[\pmainj^{\fname}(x)]$ & upper bound of $\pmain_{j}^{\fname}(x)$ (likelihood \textbf{with reasoning component}) with bounded perturbation\\ 
       $\sL[\pmainj^{\fname}(x)]$ & lower bound of $\pmain_{j}^{\fname}(x)$ (likelihood \textbf{with reasoning component}) with bounded perturbation\\ 
       $V_{d}^{j}$ & index set  of conditional variables (left operand of rule $A \implies B$) in the PC except for $j$ \\
       $V_{s}^{j}$ & index set  of conditional variables (right operand of rule $A \implies B$) in the PC except for $j$ \\
       $\mu_j$ & the $j$-th element of assignment $\mu$ \\
       $\epsilon$ & adversarial perturbation \\
       $\delta$ & bound of perturbations such that $\|\epsilon\|_2 \le \delta$ \\
       $\tilde{X}_{n+1}$ & $\tilde{X}_{n+1}=X_{n+1} + \epsilon$ adversarial test sample \\
       $\hat{C}^{\fname_{\delta}}_{n,\alpha}(\tilde{X}_{n+1})$ & certified prediction set with perturbation bound $\delta$ \\
       $S_{\pmainj^{\fname_{\delta}}}$ & worst-case non-conformity score for the $j$-th class label with bounded perturbation $\delta$\\
       $\tau^{\fname_{\text{cer}}}$ & lower bound of the coverage guarantee within bounded perturbation \\
       $\tau^{\fname_{\text{cer}}}_{j}$ & lower bound of the coverage guarantee of the $j$-th class label within bounded perturbation \\
       \bottomrule
    \end{tabular}}}
    \label{tab:not3}
\end{table}

\begin{table}[h]
    \centering
    \caption{\reb{Notations related to analysis in \Cref{sec:theor}.}}
    \reb{
    \scalebox{0.86}{
    \begin{tabular}{cc}
    \toprule
     $\gD_b$ & benign data distribution \\
     $\gD_a$ & adversarial data distribution \\
     $\gD_m$ & mixed data distribution \\
     $p_\gD$ &  portion of distribution $\gD$ in $\gD_m$ \\
     $\gG_r = (\gV_r, \gE_r)$ & undirected graph encoded with rules in the $r$-th PC \\
      $\gA_r(\cidx)$ & node set in connected components of node $\cidx$ in graph $\gG_r$ \\
 $\gA_{r,d}(\cidx)$ & node set of conditional variables in connected components of node $\cidx$ in graph $\gG_r$ \\
       $\gA_{r,s}(\cidx)$ & node set of consequence variables in connected components of node $\cidx$ in graph $\gG_r$ \\
     $\nu(x)$ & boolean vector of ground truth class and concepts given sample $x$ as input \\
     $t_{\cidx,\gD}$ & the threshold of confidence for the $\cidx$-th label \\
     $z_{\cidx,\gD}$ & probability matching this threshold for the $\cidx$-th label \\
     $T_{j,\gD}^{(r)}$ & multiplication of $t_{\cidx,\gD}$ in the $r$-th PC\\
     $Z_{j,\gD}^{(r)}$ & multiplication of $z_{\cidx,\gD}$ in the $r$-th PC\\
     $\gP_d(j),\gP_s(j)$ & the set of PC index where $j$ appears as conditional variables and consequence variables\\
     $U^{(r)}_j$ & utility of associated knowledge rules within the $r$-th PC for the $j$-th class \\
     $\epsilon_{j,0}$,$\epsilon_{j,1}$ & characterization of probability correction by COLEP conditioned on ground truth label $0$ and $1$ \\
     $\bm{\epsilon}_{j,0, \gD}$ & $\bm{\epsilon}_{j,0, \gD}=\sE_{(X,Y) \sim \gD}[\epsilon_{j,0} | \nu_{j}=0 ]$, expectation of $\epsilon_{j,0}$ \\
     $\bm{\epsilon}_{j,1, \gD}$ & $\bm{\epsilon}_{j,1, \gD}=\sE_{(X,Y) \sim \gD}[\epsilon_{j,1} | \nu_{j}=1 ]$, expectation of $\epsilon_{j,1}$ \\
     $A(\hat{\pi},\gD)$ & accuracy of probability estimator $\hat{\pi}$ on data distribution $\gD$ \\
     \bottomrule
    \end{tabular}}
    }
    \label{tab:not4}
\end{table}



\end{document}